\newtheorem{theorem}{Theorem}
\newtheorem{lemma}{Lemma}
\newtheorem{corollary}{Corollary}
\theoremstyle{remark}
\theoremstyle{example}
\newtheorem{example}{Example}
\def\ci{\!\perp\!}
\def\nci{\!\not\perp\!}
\def\ra{\rightarrow}
\def\la{\leftarrow}
\def\aa{\leftrightarrow}
\def\bb{\leftfootline\!\!\!\!\!\rightfootline}
\def\ao{\leftarrow\!\!\!\!\!\multimap}
\def\oa{\mathrel{\reflectbox{\ensuremath{\ao}}}}
\def\oo{\mathrel{\reflectbox{\ensuremath{\multimap}}}\!\!\!\!\!\multimap}
\newcommand{\comments}[1]{}
\tikzset{tt/.style={decoration={
  markings,
  mark=at position .485 with {\arrow{>}},
  mark=at position .515 with {\arrow{<}}},postaction={decorate}}}
\begin{document}

\title[]{Marginal AMP Chain Graphs}

\author[]{Jose M. Pe\~{n}a\\
ADIT, IDA, Link\"oping University, SE-58183 Link\"{o}ping, Sweden\\
jose.m.pena@liu.se}

\date{\currfilename, \currenttime, \ddmmyydate{\today}}

\begin{abstract}
We present a new family of models that is based on graphs that may have undirected, directed and bidirected edges. We name these new models marginal AMP (MAMP) chain graphs because each of them is Markov equivalent to some AMP chain graph under marginalization of some of its nodes. However, MAMP chain graphs do not only subsume AMP chain graphs but also multivariate regression chain graphs. We describe global and pairwise Markov properties for MAMP chain graphs and prove their equivalence for compositional graphoids. We also characterize when two MAMP chain graphs are Markov equivalent.

For Gaussian probability distributions, we also show that every MAMP chain graph is Markov equivalent to some directed and acyclic graph with deterministic nodes under marginalization and conditioning on some of its nodes. This is important because it implies that the independence model represented by a MAMP chain graph can be accounted for by some data generating process that is partially observed and has selection bias. Finally, we modify MAMP chain graphs so that they are closed under marginalization for Gaussian probability distributions. This is a desirable feature because it guarantees parsimonious models under marginalization.
\end{abstract}

\maketitle

\section{Introduction}

Chain graphs (CGs) are graphs with possibly directed and undirected edges, and no semidirected cycle. They have been extensively studied as a formalism to represent independence models, because they can model symmetric and asymmetric relationships between the random variables of interest. However, there are four different interpretations of CGs as independence models \citep{CoxandWermuth1993,CoxandWermuth1996,Drton2009,SonntagandPenna2013}. In this paper, we are interested in the AMP interpretation \citep{Anderssonetal.2001,Levitzetal.2001} and in the multivariate regression (MVR) interpretation \citep{CoxandWermuth1993,CoxandWermuth1996}. Although MVR CGs were originally represented using dashed directed and undirected edges, we prefer to represent them using solid directed and bidirected edges. 

In this paper, we unify and generalize the AMP and MVR interpretations of CGs. We do so by introducing a new family of models that is based on graphs that may have undirected, directed and bidirected edges. We call this new family marginal AMP (MAMP) CGs. 

The rest of the paper is organized as follows. We start with some preliminaries and notation in Section \ref{sec:preliminaries}. We continue by proving in Section \ref{sec:eampcgs} that, for Gaussian probability distributions, every AMP CG is Markov equivalent to some directed and acyclic graph with deterministic nodes under marginalization and conditioning on some of its nodes. We extend this result to MAMP CGs in Section \ref{sec:mampcgs}, which implies that the independence model represented by a MAMP chain graph can be accounted for by some data generating process that is partially observed and has selection bias. Therefore, the independence models represented by MAMP CGs are not arbitrary and, thus, MAMP CGs are worth studying. We also describe in Section \ref{sec:mampcgs} global and pairwise Markov properties for MAMP CGs and prove their equivalence for compositional graphoids. Moreover, we also characterize in that section when two MAMP CGs are Markov equivalent. We show in Section \ref{sec:emampcgs} that MAMP CGs are not closed under marginalization and modify them so that they become closed under marginalization for Gaussian probability distributions. This is important because it guarantees parsimonious models under marginalization. Finally, we discuss in Section \ref{sec:discussion} how MAMP CGs relate to other existing models based on graphs such as regression CGs, maximal ancestral graphs, summary graphs and MC graphs.

\section{Preliminaries}\label{sec:preliminaries}

In this section, we introduce some concepts of models based on graphs, i.e. graphical models. Most of these concepts have a unique definition in the literature. However, a few concepts have more than one definition in the literature and, thus, we opt for the most suitable in this work. All the graphs and probability distributions in this paper are defined over a finite set $V$. All the graphs in this paper are simple, i.e. they contain at most one edge between any pair of nodes. The elements of $V$ are not distinguished from singletons. The operators set union and set difference are given equal precedence in the expressions. The term maximal is always wrt set inclusion.

If a graph $G$ contains an undirected, directed or bidirected edge between two nodes $V_{1}$ and $V_{2}$, then we write that $V_{1} - V_{2}$, $V_{1} \ra V_{2}$ or $V_{1} \aa V_{2}$ is in $G$. We represent with a circle, such as in $\oa$ or $\oo$, that the end of an edge is unspecified, i.e. it may be an arrow tip or nothing. The parents of a set of nodes $X$ of $G$ is the set $pa_G(X) = \{V_1 | V_1 \ra V_2$ is in $G$, $V_1 \notin X$ and $V_2 \in X \}$. The children of $X$ is the set $ch_G(X) = \{V_1 | V_1 \la V_2$ is in $G$, $V_1 \notin X$ and $V_2 \in X \}$. The neighbors of $X$ is the set $ne_G(X) = \{V_1 | V_1 - V_2$ is in $G$, $V_1 \notin X$ and $V_2 \in X \}$. The spouses of $X$ is the set $sp_G(X) = \{V_1 | V_1 \aa V_2$ is in $G$, $V_1 \notin X$ and $V_2 \in X \}$. The adjacents of $X$ is the set $ad_G(X) = ne_G(X) \cup pa_G(X) \cup ch_G(X) \cup sp_G(X)$. A route between a node $V_{1}$ and a node $V_{n}$ in $G$ is a sequence of (not necessarily distinct) nodes $V_{1}, \ldots, V_{n}$ st $V_i \in ad_G(V_{i+1})$ for all $1 \leq i < n$. If the nodes in the route are all distinct, then the route is called a path. The length of a route is the number of (not necessarily distinct) edges in the route, e.g. the length of the route $V_{1}, \ldots, V_{n}$ is $n-1$. A route is called undirected if $V_i - V_{i+1}$ is in $G$ for all $1 \leq i < n$. A route is called descending if $V_i \ra V_{i+1}$, $V_i - V_{i+1}$ or $V_i \aa V_{i+1}$ is in $G$ for all $1 \leq i < n$. A route is called strictly descending if $V_i \ra V_{i+1}$ is in $G$ for all $1 \leq i < n$. The descendants of a set of nodes $X$ of $G$ is the set $de_G(X) = \{V_n |$ there is a descending route from $V_1$ to $V_n$ in $G$, $V_1 \in X$ and $V_n \notin X \}$. The non-descendants of $X$ is the set $nde_G(X)=V \setminus X \setminus de_G(X)$. The strict ascendants of $X$ is the set $san_G(X) = \{V_1 |$ there is a strictly descending route from $V_1$ to $V_n$ in $G$, $V_1 \notin X$ and $V_n \in X \}$. A route $V_{1}, \ldots, V_{n}$ in $G$ is called a cycle if $V_n=V_1$. Moreover, it is called a semidirected cycle if $V_n=V_1$, $V_1 \ra V_2$ is in $G$ and $V_i \ra V_{i+1}$, $V_i \aa V_{i+1}$ or $V_i - V_{i+1}$ is in $G$ for all $1 < i < n$. An AMP chain graph (AMP CG) is a graph whose every edge is directed or undirected st it has no semidirected cycles. A MVR chain graph (MVR CG) is a graph whose every edge is directed or bidirected st it has no semidirected cycles. A set of nodes of a graph is connected if there exists a path in the graph between every pair of nodes in the set st all the edges in the path are undirected or bidirected. A connectivity component of a graph is a maximal connected set. The subgraph of $G$ induced by a set of its nodes $X$, denoted as $G_X$, is the graph over $X$ that has all and only the edges in $G$ whose both ends are in $X$.

Let $X$, $Y$, $Z$ and $W$ denote four disjoint subsets of $V$. An independence model $M$ is a set of statements $X \ci_M Y | Z$. Moreover, $M$ is called graphoid if it satisfies the following properties: Symmetry $X \ci_M Y | Z \Rightarrow Y \ci_M X | Z$, decomposition $X \ci_M Y \cup W | Z \Rightarrow X \ci_M Y | Z$, weak union $X \ci_M Y \cup W | Z \Rightarrow X \ci_M Y | Z \cup W$, contraction $X \ci_M Y | Z \cup W \land X \ci_M W | Z \Rightarrow X \ci_M Y \cup W | Z$, and intersection $X \ci_M Y | Z \cup W \land X \ci_M W | Z \cup Y \Rightarrow X \ci_M Y \cup W | Z$. Moreover, $M$ is called compositional graphoid if it is a graphoid that also satisfies the composition property $X \ci_M Y | Z \land X \ci_M W | Z \Rightarrow X \ci_M Y \cup W | Z$. Another property that $M$ may satisfy is weak transitivity $X \ci_M Y | Z \land X \ci_M Y | Z \cup K \Rightarrow X \ci_M K | Z \lor K \ci_M Y | Z$ with $K \in V \setminus X \setminus Y \setminus Z$.

We now recall the semantics of AMP, MVR and LWF CGs. A node $B$ in a path $\rho$ in an AMP CG $G$ is called a triplex node in $\rho$ if $A \ra B \la C$, $A \ra B - C$, or $A - B \la C$ is a subpath of $\rho$. Moreover, $\rho$ is said to be $Z$-open with $Z \subseteq V$ when

\begin{itemize}
\item every triplex node in $\rho$ is in $Z \cup san_G(Z)$, and

\item every non-triplex node $B$ in $\rho$ is outside $Z$, unless $A - B - C$ is a subpath of $\rho$ and $pa_G(B) \setminus Z \neq \emptyset$.
\end{itemize}

A node $B$ in a path $\rho$ in a MVR CG $G$ is called a triplex node in $\rho$ if $A \oa B \ao C$ is a subpath of $\rho$. Moreover, $\rho$ is said to be $Z$-open with $Z \subseteq V$ when

\begin{itemize}
\item every triplex node in $\rho$ is in $Z \cup san_G(Z)$, and

\item every non-triplex node $B$ in $\rho$ is outside $Z$.
\end{itemize}

A section of a route $\rho$ in a CG is a maximal undirected subroute of $\rho$. A section $V_{2} - \ldots - V_{n-1}$ of $\rho$ is a collider section of $\rho$ if $V_{1} \rightarrow V_{2} - \ldots - V_{n-1} \leftarrow V_{n}$ is a subroute of $\rho$. A route $\rho$ in a CG is said to be $Z$-open when

\begin{itemize}
\item every collider section of $\rho$ has a node in $Z$, and

\item no non-collider section of $\rho$ has a node in $Z$.
\end{itemize}

Let $X$, $Y$ and $Z$ denote three disjoint subsets of $V$. When there is no $Z$-open path/path/route in an AMP/MVR/LWF CG $G$ between a node in $X$ and a node in $Y$, we say that $X$ is separated from $Y$ given $Z$ in $G$ and denote it as $X \ci_G Y | Z$. The independence model represented by $G$ is the set of separations $X \ci_G Y | Z$. We denote it as $I_{AMP}(G)$, $I_{MVR}(G)$ or $I_{LWF}(G)$. In general, these three independence models are different. However, if $G$ is a directed and acyclic graph (DAG), then they are the same. Given an AMP, MVR or LWF CG $G$ and two disjoint subsets $L$ and $S$ of $V$, we denote by $[I(G)]_L^S$ the independence model represented by $G$ under marginalization of the nodes in $L$ and conditioning on the nodes in $S$. Specifically, $X \ci_G Y | Z$ is in $[I(G)]_L^S$ iff $X \ci_G Y | Z \cup S$ is in $I(G)$ and $X, Y, Z \subseteq V \setminus L \setminus S$.

Finally, we denote by $X \ci_p Y | Z$ that $X$ is independent of $Y$ given $Z$ in a probability distribution $p$. We say that $p$ is Markovian wrt an AMP, MVR or LWF CG $G$ when $X \ci_p Y | Z$ if $X \ci_G Y | Z$ for all $X$, $Y$ and $Z$ disjoint subsets of $V$. We say that $p$ is faithful to $G$ when $X \ci_p Y | Z$ iff $X \ci_G Y | Z$ for all $X$, $Y$ and $Z$ disjoint subsets of $V$.

\section{Error AMP CGs}\label{sec:eampcgs}

Any regular Gaussian probability distribution that can be represented by an AMP CG can be expressed as a system of linear equations with correlated errors whose structure depends on the CG \citep[Section 5]{Anderssonetal.2001}. However, the CG represents the errors implicitly, as no nodes in the CG correspond to the errors. We propose in this section to add some deterministic nodes to the CG in order to represent the errors explicitly. We call the result an EAMP CG. We will show that, as desired, every AMP CG is Markov equivalent to its corresponding EAMP CG under marginalization of the error nodes, i.e. the independence model represented by the former coincides with the independence model represented by the latter. We will also show that every EAMP CG under marginalization of the error nodes is Markov equivalent to some LWF CG under marginalization of the error nodes, and that the latter is Markov equivalent to some DAG under marginalization of the error nodes and conditioning on some selection nodes. The relevance of this result can be best explained by extending to AMP CGs what \citet[p. 838]{Koster2002} stated for summary graphs and \citet[p. 981]{RichardsonandSpirtes2002} stated for ancestral graphs: The fact that an AMP CG has a DAG as departure point implies that the independence model associated with the former can be accounted for by some data generating process that is partially observed (corresponding to marginalization) and has selection bias (corresponding to conditioning). We extend this result to MAMP CGs in the next section.

It is worth mentioning that \citet[Theorem 6]{Anderssonetal.2001} have identified the conditions under which an AMP CG is Markov equivalent to some LWF CG.\footnote{To be exact, \citet[Theorem 6]{Anderssonetal.2001} have identified the conditions under which all and only the probability distributions that can be represented by an AMP CG can also be represented by some LWF CG. However, for any AMP or LWF CG $G$, there are Gaussian probability distributions that have all and only the independencies in the independence model represented by $G$, as shown by \citet[Theorem 6.1]{Levitzetal.2001} and \citet[Theorems 1 and 2]{Penna2011}. Then, our formulation is equivalent to the original formulation of the result by \citet[Theorem 6]{Anderssonetal.2001}.} It is clear from these conditions that there are AMP CGs that are not Markov equivalent to any LWF CG. The results in this section differ from those by \citet[Theorem 6]{Anderssonetal.2001}, because we show that every AMP CG is Markov equivalent to some LWF CG with error nodes under marginalization of the error nodes.

It is also worth mentioning that \citet[p. 1025]{RichardsonandSpirtes2002} show that there are AMP CGs that are not Markov equivalent to any DAG under marginalization and conditioning. However, the results in this section show that every AMP CG is Markov equivalent to some DAG with error and selection nodes under marginalization of the error nodes and conditioning of the selection nodes. Therefore, the independence model represented by any AMP CG has indeed some DAG as departure point and, thus, it can be accounted for by some data generating process. The results in this section do not contradict those by \citet[p. 1025]{RichardsonandSpirtes2002}, because they did not consider deterministic nodes while we do (recall that the error nodes are deterministic).

Finally, it is also worth mentioning that EAMP CGs are not the first graphical models to have DAGs as departure point. Specifically, summary graphs \citep{CoxandWermuth1996}, MC graphs \citep{Koster2002}, ancestral graphs \citep{RichardsonandSpirtes2002}, and ribonless graphs \citep{Sadeghi2013} predate EAMP CGs and have the mentioned property. However, none of these other classes of graphical models subsumes AMP CGs, i.e. there are independence models that can be represented by an AMP CG but not by any member of the other class \citep[Section 4]{SadeghiandLauritzen2012}. Therefore, none of these other classes of graphical models subsumes EAMP CGs under marginalization of the error nodes.

\subsection{AMP and LWF CGs with Deterministic Nodes}\label{sec:deterministic}

We say that a node $A$ of an AMP or LWF CG is determined by some $Z \subseteq V$ when $A \in Z$ or $A$ is a function of $Z$. In that case, we also say that $A$ is a deterministic node. We use $D(Z)$ to denote all the nodes that are determined by $Z$. From the point of view of the separations in an AMP or LWF CG, that a node is determined by but is not in the conditioning set of a separation has the same effect as if the node were actually in the conditioning set. We extend the definitions of separation for AMP and LWF CGs to the case where deterministic nodes may exist.

Given an AMP CG $G$, a path $\rho$ in $G$ is said to be $Z$-open when

\begin{itemize}
\item every triplex node in $\rho$ is in $D(Z) \cup san_G(D(Z))$, and

\item no non-triplex node $B$ in $\rho$ is in $D(Z)$, unless $A - B - C$ is a subpath of $\rho$ and $pa_G(B) \setminus D(Z) \neq \emptyset$.
\end{itemize}

Given an LWF CG $G$, a route $\rho$ in $G$ is said to be $Z$-open when

\begin{itemize}
\item every collider section of $\rho$ has a node in $D(Z)$, and

\item no non-collider section of $\rho$ has a node in $D(Z)$.
\end{itemize}

It should be noted that we are not the first to consider models based on graphs with deterministic nodes. For instance, \citet[Section 4]{Geigeretal.1990} consider DAGs with deterministic nodes. However, our definition of deterministic node is more general than theirs.

\subsection{From AMP CGs to DAGs Via EAMP CGs}\label{sec:lwf}

\citet[Section 5]{Anderssonetal.2001} show that any regular Gaussian probability distribution $p$ that is Markovian wrt an AMP CG $G$ can be expressed as a system of linear equations with correlated errors whose structure depends on $G$. Specifically, assume without loss of generality that $p$ has mean 0. Let $K_i$ denote any connectivity component of $G$. Let $\Omega^i_{K_i,K_i}$ and $\Omega^i_{K_i ,pa_G(K_i)}$ denote submatrices of the precision matrix $\Omega^i$ of $p(K_i, pa_G(K_i))$. Then, as shown by \citet[Section 2.3.1]{Bishop2006},
\[
K_i | pa_G(K_i) \sim \mathcal{N}(\beta^i pa_G(K_i), \Lambda^i)
\]
where
\[
\beta^i= -(\Omega^i_{K_i,K_i})^{-1} \Omega^i_{K_i ,pa_G(K_i)}
\]
and
\[
(\Lambda^i)^{-1}= \Omega^i_{K_i,K_i}.
\]

Then, $p$ can be expressed as a system of linear equations with normally distributed errors whose structure depends on $G$ as follows:
\[
K_i = \beta^i \: pa_G(K_i) + \epsilon^i
\]
where 
\[
\epsilon^i \sim \mathcal{N}(0, \Lambda^i).
\]

Note that for all $A, B \in K_i$ st $A- B$ is not in $G$, $A \ci_G B | pa_G(K_i) \cup K_i \setminus A \setminus B$ and thus $(\Lambda^i)^{-1}_{A,B} = 0$ \citep[Proposition 5.2]{Lauritzen1996}. Note also that for all $A \in K_i$ and $B \in pa_G(K_i)$ st $A \la B$ is not in $G$, $A \ci_G B | pa_G(A)$ and thus $(\beta^i)_{A,B}=0$. Let $\beta_A$ contain the nonzero elements of the vector $(\beta^i)_{A, \bullet}$. Then, $p$ can be expressed as a system of linear equations with correlated errors whose structure depends on $G$ as follows. For any $A \in K_i$,
\[
A = \beta_A \: pa_G(A) + \epsilon^A
\]
and for any other $B \in K_i$,
\[
covariance(\epsilon^A, \epsilon^B) = \Lambda^i_{A,B}.
\]

It is worth mentioning that the mapping above between probability distributions and systems of linear equations is bijective \citep[Section 5]{Anderssonetal.2001}. Note that no nodes in $G$ correspond to the errors $\epsilon^A$. Therefore, $G$ represent the errors implicitly. We propose to represent them explicitly. This can easily be done by transforming $G$ into what we call an EAMP CG $G'$ as follows:

\begin{table}[H]
\centering
\scalebox{1.0}{
\begin{tabular}{ll}
1 & Let $G'=G$\\
2 & For each node $A$ in $G$\\
3 & \hspace{0.3cm} Add the node $\epsilon^A$ to $G'$\\
4 & \hspace{0.3cm} Add the edge $\epsilon^A \ra A$ to $G'$\\
5 & For each edge $A - B$ in $G$\\
6 & \hspace{0.3cm} Add the edge $\epsilon^A - \epsilon^B$ to $G'$\\
7 & \hspace{0.3cm} Remove the edge $A - B$ from $G'$\\
\end{tabular}}
\end{table}

The transformation above basically consists in adding the error nodes $\epsilon^A$ to $G$ and connect them appropriately. Figure \ref{fig:example} shows an example. Note that every node $A \in V$ is determined by $pa_{G'}(A)$ and, what will be more important, that $\epsilon^A$ is determined by $pa_{G'}(A) \setminus \epsilon^A \cup A$. Thus, the existence of deterministic nodes imposes independencies which do not correspond to separations in $G$. Note also that, given $Z \subseteq V$, a node $A \in V$ is determined by $Z$ iff $A \in Z$. The if part is trivial. To see the only if part, note that $\epsilon^A \notin Z$ and thus $A$ cannot be determined by $Z$ unless $A \in Z$. Therefore, a node $\epsilon^A$ in $G'$ is determined by $Z$ iff $pa_{G'}(A) \setminus \epsilon^A \cup A \subseteq Z$ because, as shown, there is no other way for $Z$ to determine $pa_{G'}(A) \setminus \epsilon^A \cup A$ which, in turn, determine $\epsilon^A$. Let $\epsilon$ denote all the error nodes in $G'$. Note that we have not yet given a formal definition of EAMP CGs. We define them as all the graphs resulting from applying the pseudocode above to an AMP CG. It is easy to see that every EAMP CG is an AMP CG over $V \cup \epsilon$ and, thus, its semantics are defined. The following theorem confirms that these semantics are as desired. The formal proofs of our results appear in the appendix at the end of the paper.

\begin{figure}
\centering
\scalebox{1.0}{
\begin{tabular}{ccc}\\
\hline
\\
$G$&$G'$&$G''$\\
\\
\begin{tikzpicture}[inner sep=1mm]
\node at (0,0) (A) {$A$};
\node at (1,0) (B) {$B$};
\node at (0,-1) (C) {$C$};
\node at (1,-1) (D) {$D$};
\node at (0,-3) (E) {$E$};
\node at (1,-3) (F) {$F$};
\path[->] (A) edge (B);
\path[->] (A) edge (C);
\path[->] (A) edge (D);
\path[->] (B) edge (D);
\path[-] (C) edge (D);
\path[-] (C) edge (E);
\path[-] (D) edge (F);
\path[-] (E) edge (F);
\end{tikzpicture}
&
\begin{tikzpicture}[inner sep=1mm]
\node at (0,0) (A) {$A$};
\node at (1,0) (B) {$B$};
\node at (0,-1) (C) {$C$};
\node at (1,-1) (D) {$D$};
\node at (0,-3) (E) {$E$};
\node at (1,-3) (F) {$F$};
\node at (-1,0) (EA) {$\epsilon^A$};
\node at (2,0) (EB) {$\epsilon^B$};
\node at (-1,-1) (EC) {$\epsilon^C$};
\node at (2,-1) (ED) {$\epsilon^D$};
\node at (-1,-3) (EE) {$\epsilon^E$};
\node at (2,-3) (EF) {$\epsilon^F$};
\path[->] (EA) edge (A);
\path[->] (EB) edge (B);
\path[->] (EC) edge (C);
\path[->] (ED) edge (D);
\path[->] (EE) edge (E);
\path[->] (EF) edge (F);
\path[->] (A) edge (B);
\path[->] (A) edge (C);
\path[->] (A) edge (D);
\path[->] (B) edge (D);
\path[-] (EC) edge [bend right] (ED);
\path[-] (EC) edge (EE);
\path[-] (ED) edge (EF);
\path[-] (EE) edge [bend left] (EF);
\end{tikzpicture}
&
\begin{tikzpicture}[inner sep=1mm]
\node at (0,0) (A) {$A$};
\node at (1,0) (B) {$B$};
\node at (0,-1) (C) {$C$};
\node at (1,-1) (D) {$D$};
\node at (0,-3) (E) {$E$};
\node at (1,-3) (F) {$F$};
\node at (-1,0) (EA) {$\epsilon^A$};
\node at (2,0) (EB) {$\epsilon^B$};
\node at (-1,-1) (EC) {$\epsilon^C$};
\node at (2,-1) (ED) {$\epsilon^D$};
\node at (-1,-3) (EE) {$\epsilon^E$};
\node at (2,-3) (EF) {$\epsilon^F$};
\node at (0.5,-1.5) (SCD) {$S_{\epsilon^C\epsilon^D}$};
\node at (-1,-2) (SCE) {$S_{\epsilon^C\epsilon^E}$};
\node at (2,-2) (SDF) {$S_{\epsilon^D\epsilon^F}$};
\node at (0.5,-2.5) (SEF) {$S_{\epsilon^E\epsilon^F}$};
\path[->] (EA) edge (A);
\path[->] (EB) edge (B);
\path[->] (EC) edge (C);
\path[->] (ED) edge (D);
\path[->] (EE) edge (E);
\path[->] (EF) edge (F);
\path[->] (A) edge (B);
\path[->] (A) edge (C);
\path[->] (A) edge (D);
\path[->] (B) edge (D);
\path[->] (EC) edge (SCE);
\path[->] (EE) edge (SCE);
\path[->] (ED) edge (SDF);
\path[->] (EF) edge (SDF);
\path[->] (EC) edge (SCD);
\path[->] (ED) edge (SCD);
\path[->] (EE) edge (SEF);
\path[->] (EF) edge (SEF);
\end{tikzpicture}\\
\hline
\end{tabular}}\caption{Example of the different transformations for AMP CGs.}\label{fig:example}
\end{figure}
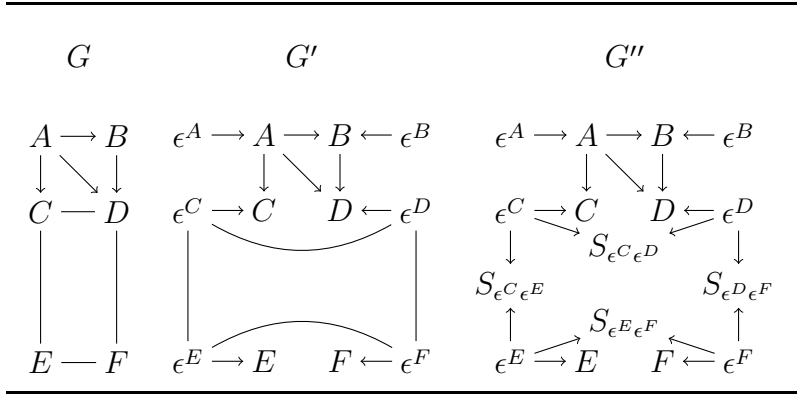

\begin{theorem}\label{the:GG'}
$I_{AMP}(G)=[I_{AMP}(G')]_\epsilon^\emptyset$.
\end{theorem}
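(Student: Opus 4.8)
\emph{Reduction to a pointwise statement.} Unfolding the definition of $[\cdot]_\epsilon^\emptyset$, the claimed equality of independence models is equivalent to showing, for all pairwise disjoint $X,Y,Z\subseteq V$, that $X\ci_G Y\mid Z$ holds in the ordinary AMP semantics iff $X\ci_{G'}Y\mid Z$ holds in the AMP-with-deterministic-nodes semantics of $G'$; equivalently, by contraposition, that there is a $Z$-open path between $X$ and $Y$ in $G$ iff there is one in $G'$. The plan rests on three facts already recorded before the statement, which I would first isolate as a lemma. The edge dictionary between $G$ and $G'$ is: directed edges among $V$ are shared, $pa_{G'}(A)=pa_G(A)\cup\{\epsilon^A\}$ for $A\in V$, undirected edges occur only among error nodes with $\epsilon^A-\epsilon^B$ in $G'$ iff $A-B$ in $G$, and $pa_{G'}(\epsilon^A)=\emptyset$. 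The determination facts are $D(Z)\cap V=Z$ and $\epsilon^A\in D(Z)$ iff $\{A\}\cup pa_G(A)\subseteq Z$. Finally, since error nodes have no incoming directed edges, no $\epsilon$-node can be an interior or terminal vertex of a strictly descending route, whence $san_{G'}(D(Z))\cap V=san_G(Z)$; this makes triplex activation transfer between the two graphs.

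\emph{From $G$ to $G'$.} Given a $Z$-open path $\rho$ in $G$, I would build $\rho'$ by keeping every directed $V$-edge and replacing each maximal undirected section $V_i-\dots-V_j$ of $\rho$ by the detour $V_i\leftarrow\epsilon^{V_i}-\dots-\epsilon^{V_j}\to V_j$. Because $\rho$ visits the $V_k$ once each, $\rho'$ has distinct nodes, so it is genuinely a path. Openness of $\rho'$ splits into three checks. An interior error node $\epsilon^{V_k}$ lies in an undirected section and, having no parents, can be open only if $\epsilon^{V_k}\notin D(Z)$, i.e. $\{V_k\}\cup pa_G(V_k)\not\subseteq Z$; this is exactly the AMP exceptional clause satisfied by the corresponding interior node $V_k$ of $\rho$ (either $V_k\notin Z$, or $V_k$ sits in $A-B-C$ with $pa_G(V_k)\setminus Z\neq\emptyset$). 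A section boundary $V_i$ fed by $V_{i-1}\to V_i$ becomes a collider $V_{i-1}\to V_i\leftarrow\epsilon^{V_i}$ in $\rho'$, and its required activation $V_i\in D(Z)\cup san_{G'}(D(Z))$ follows from $V_i\in Z\cup san_G(Z)$ together with $san$-preservation. The crux is that the adjacent boundary error node $\epsilon^{V_i}$ is never determined: a node of $\rho$ incident to an outgoing directed path-edge is non-triplex and not in an all-undirected configuration, so AMP-openness forces it out of $Z$; applying this to $V_{i-1}$ (or noting it is an endpoint) gives $pa_G(V_i)\not\subseteq Z$, hence $\epsilon^{V_i}\notin D(Z)$. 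The symmetric subcase $V_i\to V_{i-1}$ is handled the same way.

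\emph{From $G'$ to $G$.} Given a $Z$-open path $\rho'$ in $G'$ between $V$-endpoints, I would observe that each maximal error-subpath can be entered and left only through arrows into $V$-nodes (an error node has a unique child and no parents), so it has the shape $A_1\leftarrow\epsilon^{A_1}-\dots-\epsilon^{A_m}\to A_m$; the dictionary then makes $A_1-\dots-A_m$ an undirected path of $G$, and replacing each error-segment by it (keeping directed $V$-edges) yields a route $\rho$ in $G$. I would reduce $\rho$ to a $Z$-open path by a route-to-path shortening step. Running the dictionary in reverse, interior-error openness $\epsilon^{A_k}\notin D(Z)$ delivers precisely the exceptional clause for the undirected interior node $A_k$, collider $V$-nodes transfer their activation via $san_{G'}(D(Z))\cap V=san_G(Z)$, and non-collider $V$-nodes carry over verbatim since $D(Z)\cap V=Z$.

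\emph{Main obstacle.} The delicate work is the junction bookkeeping: matching triplex versus non-triplex status across the three AMP triplex forms at the vertices where directed edges abut an error-chain, and in particular proving that the boundary error nodes are never determined. This reduces to the small but essential observation that on an open AMP path any node of $Z$ incident to an outgoing directed path-edge blocks the path; this observation is what guarantees the detour stays open in both directions and is the only genuinely AMP-specific input. The secondary nuisance is the route-to-path reduction needed in the reverse direction, since AMP separation is phrased via paths; I would dispatch it by taking a shortest open route and ruling out repeated vertices, or by a standalone shortening lemma. The determination facts and the preservation of $san_G$ are routine once the no-incoming-arrow property of error nodes is stated.
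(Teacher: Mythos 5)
Your proposal is correct and follows essentially the same route as the paper's proof: the same detour construction replacing each maximal undirected section by an error-node chain (and its inverse), the same determination facts ($D(Z)\cap V=Z$ and $\epsilon^A\in D(Z)$ iff $A\cup pa_G(A)\subseteq Z$), the same preservation of strict ascendants via the no-incoming-arrow property of error nodes, and the same case analysis at section boundaries. The only visible difference is that you explicitly flag the route-to-path reduction needed in the $G'$-to-$G$ direction, a point the paper passes over silently.
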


\begin{theorem}\label{the:G'G'}
Assume that $G'$ has the same deterministic relationships no matter whether it is interpreted as an AMP or LWF CG. Then, $I_{AMP}(G')=I_{LWF}(G')$.
\end{theorem}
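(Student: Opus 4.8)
The plan is to prove the two separation criteria coincide on $G'$ by first reducing both to a common form using the very special structure that the $G \mapsto G'$ construction produces, and then reconciling the single point where AMP and LWF genuinely disagree. Throughout I would use the hypothesis that $D(Z)$ denotes the same set of determined nodes under either interpretation, so that the same set $D(Z)$ may be fed into both separation definitions.

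First I would record the structural features of $G'$. Every original node $A \in V$ has $ne_{G'}(A) = \emptyset$ (its undirected edges were deleted) and $pa_{G'}(A) = pa_G(A) \cup \{\epsilon^A\}$, while every error node $\epsilon^A$ has $pa_{G'}(\epsilon^A) = \emptyset$ and admits no incoming arrowhead at all. Three consequences follow. (i) A node can be a triplex node of a path (AMP sense) only if it is an original node in a collider configuration $\to A \leftarrow$, and symmetrically the only collider sections of a route (LWF sense) are singletons $\{A\}$ with $A \in V$ in configuration $\to A \leftarrow$; error nodes are never triplex and error-node sections are never collider sections, for want of an incoming arrowhead. (ii) The AMP exception clause — a non-triplex $B$ with $A - B - C$ a subpath and $pa_{G'}(B) \setminus D(Z) \neq \emptyset$ — can never fire, since an original node has no undirected edges and so cannot sit inside an $A - B - C$, and an error node has empty parent set so $pa_{G'}(\epsilon^B) \setminus D(Z) = \emptyset$. (iii) A strictly descending route leaving an original node stays among original nodes, because error nodes have no incoming edges.

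Using (i)–(iii), I would show that the AMP conditions on a path and the LWF conditions on the same node sequence agree everywhere except at colliders: both force every non-collider occurrence (equivalently, every node of every non-collider section, i.e. every error node and every original non-collider) to lie outside $D(Z)$, and both single out original collider nodes as the ones needing activation. The sole discrepancy is that AMP activates a collider $A$ when $A \in D(Z) \cup san_{G'}(D(Z))$, whereas LWF, whose section $\{A\}$ is a singleton, activates it only when $A \in D(Z)$. Reconciling the extra $san_{G'}(D(Z))$ slack is exactly what the route-versus-path distinction absorbs. For $I_{AMP}(G') \subseteq I_{LWF}(G')$ I would take an AMP-open path and, at each collider $A$ with $A \in san_{G'}(D(Z)) \setminus D(Z)$, splice in a down-and-up detour $A \to A_1 \to \cdots \to A' \leftarrow \cdots \leftarrow A_1 \leftarrow A$, where $A \to A_1 \to \cdots \to A'$ is a strictly descending route to the \emph{first} node $A' \in D(Z)$ it meets (which exists by $A \in san_{G'}(D(Z))$ and consists of original nodes by (iii)). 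Every inserted section is then a singleton: $\{A'\}$ is a collider section lying in $D(Z)$, and every other inserted node, including the now non-collider occurrences of $A$, lies outside $D(Z)$ by minimality of $A'$. The result is an LWF-open route.

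For the converse $I_{LWF}(G') \subseteq I_{AMP}(G')$ I would start from an LWF-open route of minimal length and argue that its only node repetitions are such descending down-and-up excursions activating a collider through a conditioned descendant; folding each excursion back to its top node $A$ turns two non-collider occurrences of $A$ into a single collider and certifies $A \in san_{G'}(D(Z))$ via the descending arm, while deleting the excursions yields an AMP-open path whose other colliders inherit their $A \in D(Z)$ witnesses from the collider sections of the route. I expect this converse to be the main obstacle: one must prove that a minimal LWF-open route can repeat nodes only in this excursion pattern (no other repetition survives minimality while keeping every non-collider section free of $D(Z)$), and that reconnecting the route at each deleted node neither promotes a blocked triplex node nor disturbs the endpoints, so that the extracted simple path satisfies the AMP criterion with every collider supplied the required $D(Z) \cup san_{G'}(D(Z))$ witness.
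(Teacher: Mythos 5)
Your proof is, as far as I can tell, completable and correct, but it follows a genuinely different route from the paper's. The paper does not touch paths or routes at all: it observes that $G'$ contains no induced subgraph $A \ra B - C$ (no ``flags'', hence no ``biflags'', by exactly the structural facts you list under (i)--(ii)), invokes Corollary 1 of Andersson et al.\ to conclude that the AMP and LWF Markov properties select the same probability distributions for such a graph, and then transfers this to an identity of independence models via the existence of faithful Gaussian distributions for both interpretations (Levitz et al.\ and Pe\~{n}a); deterministic nodes are handled at the end by the single observation that conditioning on $Z$ is equivalent to conditioning on $D(Z)$ in the deterministic-free semantics. That argument is three lines long but leans on three external theorems. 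Yours is self-contained and purely combinatorial, and it makes visible the one place where the two criteria actually diverge (activation of a collider by $san_{G'}(D(Z))$ versus by $D(Z)$ itself) and why the route-versus-path discrepancy exactly compensates for it --- something the paper's proof conceals entirely. The price is that your converse direction needs real work, and one claim there is stated too strongly: minimality of an LWF-open route does not force repetitions to be symmetric strictly-descending-then-ascending excursions $A \ra \cdots \ra A' \la \cdots \la A$. What the shortcutting analysis actually yields is weaker but sufficient: an irreducible repetition of $A \notin D(Z)$ must occur as $\ra A \ra \cdots \la A \la$ with arbitrary material in between, and the initial segment leaving the first occurrence of $A$ descends through original nodes until the first direction reversal, which is a collider section lying in $D(Z)$; that descending arm is what certifies $A \in san_{G'}(D(Z))$ after folding. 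You also need to handle interleaved repetitions, which is cleanest if you introduce an intermediate ``AMP-open route'' notion (colliders in $D(Z) \cup san_{G'}(D(Z))$, non-colliders outside $D(Z)$), note that every LWF-open route is one, and then shortcut repetitions one at a time within that class until a path remains. With those repairs the argument closes; I would not call the imprecision a fatal gap, since you flag the converse as the main obstacle and the facts you need are all true of $G'$.
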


The following corollary links the two most popular interpretations of CGs. Specifically, it shows that every AMP CG is Markov equivalent to some LWF CG with deterministic nodes under marginalization. The corollary follows from Theorems \ref{the:GG'} and \ref{the:G'G'}.

\begin{corollary}\label{cor:GG'}
$I_{AMP}(G)=[I_{LWF}(G')]_\epsilon^\emptyset$.
\end{corollary}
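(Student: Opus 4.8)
The plan is to obtain the claimed identity as a short chain of equalities built from the two preceding theorems, using the fact that the operator $[\,\cdot\,]_\epsilon^\emptyset$ acts purely on the underlying independence model. Indeed, by its very definition, $X \ci Y | Z$ belongs to $[I(G')]_\epsilon^\emptyset$ exactly when $X \ci Y | Z$ belongs to $I(G')$ and $X,Y,Z \subseteq V$; thus $[\,\cdot\,]_\epsilon^\emptyset$ is a function of the set of separation triples alone. Consequently, if two graphs over $V \cup \epsilon$ induce the same independence model, their marginalizations over $\epsilon$ coincide. This is the structural observation that lets one pass from $I_{AMP}(G')$ to $I_{LWF}(G')$ inside the marginalization.

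First I would invoke Theorem~\ref{the:GG'} to write $I_{AMP}(G) = [I_{AMP}(G')]_\epsilon^\emptyset$. Next I would check the hypothesis of Theorem~\ref{the:G'G'}, namely that $G'$ carries the same deterministic relationships whether read as an AMP or as an LWF CG. This holds because the determined nodes of $G'$ are dictated by the functional structure of the construction and not by the edge semantics: each $A \in V$ is determined by $pa_{G'}(A)$ and each $\epsilon^A$ is determined by $pa_{G'}(A) \setminus \epsilon^A \cup A$, exactly as noted when $G'$ was defined, so the set $D(Z)$ is identical under both interpretations. With the hypothesis in force, Theorem~\ref{the:G'G'} yields $I_{AMP}(G') = I_{LWF}(G')$.

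Finally I would combine the two. Applying $[\,\cdot\,]_\epsilon^\emptyset$ to the equal independence models $I_{AMP}(G')$ and $I_{LWF}(G')$ gives $[I_{AMP}(G')]_\epsilon^\emptyset = [I_{LWF}(G')]_\epsilon^\emptyset$ by the structural observation above, whence
\[
I_{AMP}(G) = [I_{AMP}(G')]_\epsilon^\emptyset = [I_{LWF}(G')]_\epsilon^\emptyset,
\]
as required. The only point needing genuine care is confirming the deterministic-relationship hypothesis of Theorem~\ref{the:G'G'}; once that bookkeeping is settled, the chaining of the two theorems is immediate, so I do not anticipate any substantive obstacle here.
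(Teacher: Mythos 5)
Your proposal is correct and matches the paper's argument, which simply notes that the corollary follows by chaining Theorem \ref{the:GG'} with Theorem \ref{the:G'G'}. Your extra care in verifying the deterministic-relationship hypothesis of Theorem \ref{the:G'G'} is consistent with the observations the paper makes when defining $G'$.
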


Now, let $G''$ denote the DAG obtained from $G'$ by replacing every edge $\epsilon^A - \epsilon^B$ in $G'$ with $\epsilon^A \ra S_{\epsilon^A\epsilon^B} \la \epsilon^B$. Figure \ref{fig:example} shows an example. The nodes $S_{\epsilon^A\epsilon^B}$ are called selection nodes. Let $S$ denote all the selection nodes in $G''$. The following theorem relates the semantics of $G'$ and $G''$.

\begin{theorem}\label{the:G'G''}
Assume that $G'$ and $G''$ have the same deterministic relationships. Then, $I_{LWF}(G')=[I(G'')]_\emptyset^S$.
\end{theorem}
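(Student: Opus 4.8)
The plan is to reduce the identity to a route-by-route equivalence. Since $G''$ is a DAG, its AMP, MVR and LWF criteria coincide (as recalled above), so I may compute $I(G'')$ with the route-based LWF criterion; and by the definition of $[I(G'')]_\emptyset^S$ the claim amounts to showing, for all pairwise disjoint $X,Y,Z \subseteq V \cup \epsilon$, that there is a $Z$-open route in $G'$ under the LWF criterion if and only if there is a $(Z\cup S)$-open route in $G''$. Each of $I_{LWF}(G')$ and $[I(G'')]_\emptyset^S$ consists of statements over $V\cup\epsilon$, so the domains already match.

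First I would set up the route correspondence. The undirected edges of $G'$ join only error nodes, and every selection node $S_{\epsilon^A\epsilon^B}$ of $G''$ has exactly two incident edges, both pointing into it; hence any route of $G''$ that visits it must traverse it as $\epsilon^A \ra S_{\epsilon^A\epsilon^B} \la \epsilon^B$. Replacing each such traversal by the undirected edge $\epsilon^A - \epsilon^B$, and conversely replacing each $\epsilon^A - \epsilon^B$ by $\epsilon^A \ra S_{\epsilon^A\epsilon^B} \la \epsilon^B$, yields a bijection between the routes of $G'$ and the routes of $G''$ whose endpoints lie in $V\cup\epsilon$; no route can begin or end at a selection node, since $X,Y\subseteq V\cup\epsilon$.

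The crux is the classification of sections. Because error nodes have no parents in $G'$, any section containing at least one undirected edge consists of error nodes and has outgoing arrows at both of its ends; it can therefore never be a collider section, so it is open iff none of its error nodes lies in $D(Z)$. Consequently the only collider sections are single variable nodes $A$ with $\cdot \ra A \la \cdot$, open iff $A \in D(Z)$. Under the correspondence, every undirected edge of such a section becomes a selection-node collider which, as $S \subseteq D(Z\cup S)$, is automatically open and so contributes nothing to blocking; each error node of the section becomes a fork (a non-collider) in $G''$; and each variable node keeps its collider or non-collider status. Hence the openness of a route and of its image coincide as soon as the determinedness of the nodes of $V\cup\epsilon$ is preserved.

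This isolates the determinism bookkeeping, which I expect to be the main obstacle and which is exactly where the hypothesis that $G'$ and $G''$ share the same deterministic relationships is used. I would prove, as a lemma, that $D(Z\cup S)\cap(V\cup\epsilon)=D(Z)$: no determining set of a node of $V\cup\epsilon$ contains a selection node (in particular the hypothesis rules out any induced relation expressing one error node through another together with a conditioned selection node), and selection nodes are sinks, so adjoining $S$ to the conditioning set determines no node of $V\cup\epsilon$ beyond those already determined by $Z$; equivalently $D(Z\cup S)=D(Z)\cup S$. Given this, the collider condition $A\in D(Z)\Leftrightarrow A\in D(Z\cup S)$ and the non-collider condition $\epsilon^B\in D(Z)\Leftrightarrow\epsilon^B\in D(Z\cup S)$ hold term by term, and running the section-by-section comparison through all cases, namely multi-node error sections, single variable colliders, single variable non-colliders, and routes that end in $\epsilon$, finishes both directions. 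The delicate point to get exactly right is that conditioning on the selection nodes creates no new determinism among the original nodes; I would therefore establish the lemma first and only then carry out the routine section comparison.
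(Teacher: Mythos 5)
Your proposal follows essentially the same route as the paper's own proof: the correspondence replacing $\epsilon^A - \epsilon^B$ with $\epsilon^A \ra S_{\epsilon^A\epsilon^B} \la \epsilon^B$, the observation that error nodes have no incoming directed edges in $G'$ so that every collider section is a single node of $V$ while selection-node colliders are automatically opened by conditioning on $S$, and the isolation of the determinism bookkeeping (your lemma $D(Z \cup S) = D(Z) \cup S$ makes explicit what the paper's hypothesis handles more tersely). The only detail you gloss over is the degenerate subroute $\epsilon^A \ra S_{\epsilon^A\epsilon^B} \la \epsilon^A$, which a route (as opposed to a path) may contain and which must be collapsed to the single node $\epsilon^A$ rather than replaced by an undirected edge, as the paper does explicitly.
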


The main result of this section is the following corollary, which shows that every AMP CG is Markov equivalent to some DAG with deterministic nodes under marginalization and conditioning. The corollary follows from Corollary \ref{cor:GG'} and Theorem \ref{the:G'G''}.

\begin{corollary}\label{cor:GG''}
$I_{AMP}(G)=[I(G'')]_\epsilon^S$.
\end{corollary}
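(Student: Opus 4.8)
The plan is to compose the two equalities of independence models that immediately precede the statement, namely Corollary \ref{cor:GG'} and Theorem \ref{the:G'G''}, and then collapse the resulting nested marginalization/conditioning operator into a single one. Recall that Theorem \ref{the:G'G''} requires $G'$ and $G''$ to have the same deterministic relationships; this hypothesis is carried over here, and together with the standing assumption behind Corollary \ref{cor:GG'} (that $G'$ has the same deterministic relationships whether read as an AMP or an LWF CG) it licenses the use of both results on the fixed pair $G'$, $G''$.

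First I would start from Corollary \ref{cor:GG'}, which gives $I_{AMP}(G) = [I_{LWF}(G')]_\epsilon^\emptyset$, and substitute the identity $I_{LWF}(G') = [I(G'')]_\emptyset^S$ supplied by Theorem \ref{the:G'G''}. This yields
\[
I_{AMP}(G) = \big[[I(G'')]_\emptyset^S\big]_\epsilon^\emptyset ,
\]
so it remains only to show that conditioning on $S$ followed by marginalization of $\epsilon$ is the same as the single operation $[\,\cdot\,]_\epsilon^S$, i.e. that $\big[[I(G'')]_\emptyset^S\big]_\epsilon^\emptyset = [I(G'')]_\epsilon^S$.

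Then I would verify this last equality directly from the definition of $[I(\cdot)]_L^S$. Unwinding the outer operator, $X \ci Y | Z$ lies in $\big[[I(G'')]_\emptyset^S\big]_\epsilon^\emptyset$ iff $X \ci Y | Z$ lies in $[I(G'')]_\emptyset^S$ with $X, Y, Z \subseteq (V \cup \epsilon) \setminus \epsilon = V$; unwinding the inner operator, this holds iff $X \ci Y | Z \cup S$ lies in $I(G'')$ with $X, Y, Z \subseteq V \cup \epsilon$ and, additionally, $X, Y, Z \subseteq V$. Since $X, Y, Z \subseteq V$ already implies $X, Y, Z \subseteq V \cup \epsilon$, the two support constraints collapse to $X, Y, Z \subseteq V$, which is exactly $X, Y, Z \subseteq (V \cup \epsilon \cup S) \setminus \epsilon \setminus S$. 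Hence $X \ci Y | Z$ lies in $\big[[I(G'')]_\emptyset^S\big]_\epsilon^\emptyset$ iff $X \ci Y | Z \cup S$ lies in $I(G'')$ and $X, Y, Z \subseteq (V \cup \epsilon \cup S) \setminus \epsilon \setminus S$, which is precisely the definition of $[I(G'')]_\epsilon^S$.

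I would expect no serious obstacle here: the entire argument is bookkeeping through the definition of the marginalization/conditioning operator, and the only point demanding care is tracking the node-support conditions, specifically checking that the constraint imposed by the inner operator ($X, Y, Z \subseteq V \cup \epsilon$) is subsumed by the one imposed by the outer operator ($X, Y, Z \subseteq V$), so that conditioning on $S$ commutes past the marginalization of $\epsilon$. All the substantive work has already been discharged in Corollary \ref{cor:GG'} and Theorem \ref{the:G'G''}.
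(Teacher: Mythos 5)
Your proposal is correct and matches the paper exactly: the paper derives this corollary by composing Corollary \ref{cor:GG'} with Theorem \ref{the:G'G''}, leaving the collapse of the nested marginalization/conditioning operators implicit. Your explicit verification of that bookkeeping step is accurate and just spells out what the paper takes for granted.
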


\section{Marginal AMP CGs}\label{sec:mampcgs}

In this section, we present the main contribution of this paper, namely a new family of graphical models that unify and generalize AMP and MVR CGs. Specifically, a graph $G$ containing possibly directed, bidirected and undirected edges is a marginal AMP (MAMP) CG if

\begin{itemize}
\item[{\bf C1}.] $G$ has no semidirected cycle,

\item[{\bf C2}.] $G$ has no cycle $V_1, \ldots, V_n=V_1$ st $V_1 \aa V_2$ is in $G$ and $V_i - V_{i+1}$ is in $G$ for all $1 < i < n$, and

\item[{\bf C3}.] if $V_1 - V_2 - V_3$ is in $G$ and $sp_G(V_2) \neq \emptyset$, then $V_1 - V_3$ is in $G$ too.
\end{itemize}

A set of nodes of a MAMP CG $G$ is undirectly connected if there exists a path in $G$ between every pair of nodes in the set st all the edges in the path are undirected. An undirected connectivity component of $G$ is a maximal undirectly connected set. We denote by $uc_G(A)$ the undirected connectivity component a node $A$ of $G$ belongs to.

The semantics of MAMP CGs is as follows. A node $B$ in a path $\rho$ in a MAMP CG $G$ is called a triplex node in $\rho$ if $A \oa B \ao C$, $A \oa B - C$, or $A - B \ao C$ is a subpath of $\rho$. Moreover, $\rho$ is said to be $Z$-open with $Z \subseteq V$ when

\begin{itemize}
\item every triplex node in $\rho$ is in $Z \cup san_G(Z)$, and

\item every non-triplex node $B$ in $\rho$ is outside $Z$, unless $A - B - C$ is a subpath of $\rho$ and $sp_G(B) \neq \emptyset$ or $pa_G(B) \setminus Z \neq \emptyset$.
\end{itemize}

Let $X$, $Y$ and $Z$ denote three disjoint subsets of $V$. When there is no $Z$-open path in $G$ between a node in $X$ and a node in $Y$, we say that $X$ is separated from $Y$ given $Z$ in $G$ and denote it as $X \ci_G Y | Z$. We denote by $X \nci_G Y | Z$ that $X \ci_G Y | Z$ does not hold. Likewise, we denote by $X \ci_p Y | Z$ (respectively $X \nci_p$ $Y | Z$) that $X$ is independent (respectively dependent) of $Y$ given $Z$ in a probability distribution $p$. The independence model represented by $G$, denoted as $I(G)$, is the set of separation statements $X \ci_G$ $Y | Z$. We say that $p$ is Markovian wrt $G$ when $X \ci_p Y | Z$ if $X \ci_G Y | Z$ for all $X$, $Y$ and $Z$ disjoint subsets of $V$. Moreover, we say that $p$ is faithful to $G$ when $X \ci_p Y | Z$ iff $X \ci_G Y | Z$ for all $X$, $Y$ and $Z$ disjoint subsets of $V$.

Note that if a MAMP CG $G$ has a path $V_1 - V_2 - \ldots - V_n$ st $sp_G(V_i) \neq \emptyset$ for all $1 < i < n$, then $V_1 - V_n$ must be in $G$. Therefore, the independence model represented by a MAMP CG is the same whether we use the definition of $Z$-open path above or the following simpler one. A path $\rho$ in a MAMP CG $G$ is said to be $Z$-open when

\begin{itemize}
\item every triplex node in $\rho$ is in $Z \cup san_G(Z)$, and

\item every non-triplex node $B$ in $\rho$ is outside $Z$, unless $A - B - C$ is a subpath of $\rho$ and $pa_G(B) \setminus Z \neq \emptyset$.
\end{itemize}

The motivation behind the three constraints in the definition of MAMP CGs is as follows. The constraint C1 follows from the semidirected acyclicity constraint of AMP and MVR CGs. For the constraints C2 and C3, note that typically every missing edge in the graph of a graphical model corresponds to a separation. However, this may not be true for graphs that do not satisfy the constraints C2 and C3. For instance, the graph $G$ below does not contain any edge between $B$ and $D$ but $B \nci_G D | Z$ for all $Z \subseteq V \setminus \{B, D\}$. Likewise, $G$ does not contain any edge between $A$ and $E$ but $A \nci_G E | Z$ for all $Z \subseteq V \setminus \{A, E\}$.

\begin{table}[H]
\centering
\scalebox{1.0}{
\begin{tabular}{c}
\begin{tikzpicture}[inner sep=1mm]
\node at (0,0) (A) {$A$};
\node at (1,0) (B) {$B$};
\node at (2,0) (C) {$C$};
\node at (3,0) (D) {$D$};
\node at (4,0) (E) {$E$};
\node at (2,-1) (F) {$F$};
\path[-] (A) edge (B);
\path[-] (B) edge (C);
\path[-] (C) edge (D);
\path[-] (D) edge (E);
\path[<->] (A) edge [bend left] (D);
\path[<->] (B) edge [bend left] (E);
\path[<->] (C) edge (F);
\end{tikzpicture}
\end{tabular}}
\end{table}

Since the situation above is counterintuitive, we enforce the constraints C2 and C3. Theorem \ref{the:pairwise1} below shows that every missing edge in a MAMP CG corresponds to a separation.

Note that AMP and MVR CGs are special cases of MAMP CGs. However, MAMP CGs are a proper generalization of AMP and MVR CGs, as there are independence models that can be represented by the former but not by the two latter. An example follows (we postpone the proof that it cannot be represented by any AMP or MVR CG until after Theorem \ref{the:triplex}).

\begin{table}[H]
\centering
\scalebox{1.0}{
\begin{tabular}{c}
\begin{tikzpicture}[inner sep=1mm]
\node at (-1,0) (A) {$A$};
\node at (0,0) (B) {$B$};
\node at (1,0) (C) {$C$};
\node at (0,-1) (D) {$D$};
\node at (1,-1) (E) {$E$};
\path[->] (A) edge (B);
\path[-] (B) edge (C);
\path[-] (B) edge (D);
\path[<->] (C) edge (E);
\path[<->] (D) edge (E);
\end{tikzpicture} 
\end{tabular}}
\end{table}

Given a MAMP CG $G$, let $\widehat{G}$ denote the AMP CG obtained by replacing every bidirected edge $A \aa B$ in $G$ with $A \la L_{AB} \ra B$. Note that $G$ and $\widehat{G}$ represent the same separations over $V$. Therefore, every MAMP CG can be seen as the result of marginalizing out some nodes in an AMP CG, hence the name. Furthermore, Corollary \ref{cor:GG''} shows that every AMP CG can be seen as the result of marginalizing out and conditioning on some nodes in a DAG. Consequently, every MAMP CG can also be seen as the result of marginalizing out and conditioning on some nodes in a DAG. Therefore, the independence model represented by a MAMP CG can be accounted for by some data generating process that is partially observed and has selection bias. This implies that the independence models represented by MAMP CGs are not arbitrary and, thus, MAMP CGs are worth studying. The theorem below provides another way to see that the independence models represented by MAMP CGs are not arbitrary. Specifically, it shows that each of them coincides with the independence model of some probability distribution.

\begin{theorem}\label{the:faithfulness}
For any MAMP CG $G$, there exists a regular Gaussian probability distribution $p$ that is faithful to $G$.
\end{theorem}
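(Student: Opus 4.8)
The plan is to construct the faithful Gaussian distribution by lifting the problem to a DAG with deterministic nodes, exploiting the chain of equivalences already established in the paper. First I would invoke the construction $G \mapsto \widehat{G}$ (replacing each $A \aa B$ with $A \la L_{AB} \ra B$) together with Corollary \ref{cor:GG''} applied to $\widehat{G}$: this produces a DAG $G''$ with deterministic error nodes $\epsilon$ and selection nodes $S$ such that $I(G)$ coincides, over $V$, with $[I(G'')]_{\epsilon \cup L}^{S}$, where $L$ collects the latent nodes introduced when passing from $G$ to $\widehat{G}$. The strategy is then to build a faithful Gaussian distribution on the \emph{whole} node set of $G''$ and push it forward by marginalization and conditioning, checking that faithfulness survives these operations.

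\medskip
The key steps, in order, would be the following. First, construct on $G''$ a regular Gaussian $p''$ that is faithful to the DAG $G''$; here one uses the standard fact that a generic (Lebesgue-almost-every) choice of linear coefficients and error variances in the structural equations of a DAG yields a distribution faithful to that DAG, so a faithful $p''$ exists. The subtlety is that $G''$ has \emph{deterministic} nodes (the error nodes $\epsilon$ and the way selection nodes depend on their parents), so one cannot use a fully generic parametrization: the deterministic relationships must be respected exactly, and faithfulness must hold relative to the extended separation semantics for graphs with deterministic nodes introduced in Section \ref{sec:deterministic}. I would therefore parametrize only the \emph{free} linear coefficients and error variances generically while hard-wiring the deterministic dependencies, and argue that faithfulness still holds for almost every choice of the free parameters, since each separation statement that is required to hold corresponds to a polynomial identity in the parameters, while each non-separation corresponds to the complement of a proper algebraic subset.

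\medskip
Second, I would pass from $p''$ on the full node set down to a distribution $p$ on $V$ by conditioning on $S$ (fixing the selection variables to a value in their support) and marginalizing out $\epsilon \cup L$. The point is that conditioning a Gaussian on a subset of its variables and marginalizing out another subset both yield Gaussians, and regularity is preserved for a generic conditioning value. One must verify that the independence model of the resulting $p$ is exactly $[I(G'')]_{\epsilon \cup L}^{S}$ restricted to $V$, i.e. that the graphical operations of marginalization and conditioning match the probabilistic operations on the faithful distribution. This matching is precisely the content of how $[I(\cdot)]_L^S$ was defined, combined with the fact that for Gaussians, conditioning on $S$ and marginalizing $L$ act on the independence structure exactly as the bracket operator prescribes. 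Hence $I(p) = I(G)$, which is the desired faithfulness.

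\medskip
\textbf{The main obstacle} I expect is the deterministic-node handling in the first step: showing that a \emph{generic} parametrization respecting the hard deterministic constraints is still faithful to $G''$ under the deterministic separation semantics. The usual genericity argument (a finite union of proper algebraic subsets has measure zero, so almost every parameter avoids all spurious independencies) must be carried out on the constrained parameter space rather than on an unconstrained Euclidean space, and one must confirm that the deterministic constraints do not force any ``extra'' independence beyond those already encoded as separations in $G''$ with deterministic nodes. Once this is secured, the descent to $V$ via conditioning and marginalization is comparatively routine, resting only on the standard closure of the Gaussian family under these operations and on the definitional match with the bracket operator.
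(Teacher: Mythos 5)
Your opening move ($G \mapsto \widehat{G}$ by replacing each $A \aa B$ with $A \la L_{AB} \ra B$) is exactly the paper's, but the rest of your plan descends further than necessary and runs into a gap that you flag but do not close. The paper stops at the AMP CG $\widehat{G}$: it invokes Theorem 6.1 of Levitz et al.\ (2001), which already guarantees a regular Gaussian $q$ faithful to any AMP CG, and then simply takes $p$ to be the marginal of $q$ over $V$. Since $G$ and $\widehat{G}$ represent the same separations over $V$, and since marginalizing a regular Gaussian preserves both regularity and the independence statements among the retained variables, $p$ is faithful to $G$. No DAG, no selection nodes, no deterministic nodes are needed.

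The genuine gap in your route is the first step: constructing a \emph{regular} Gaussian on the full node set of $G''$ that is faithful to $G''$ under the deterministic-node separation semantics. Once the deterministic relationships are hard-wired (each $\epsilon^A$ is an exact linear function of $A$ and $pa_G(A)$), the joint covariance over $V \cup \epsilon \cup L \cup S$ is singular, so no regular Gaussian on that node set exists; and the ``standard fact'' that generic DAG parametrizations are faithful does not apply to DAGs with deterministic nodes, whose separation criterion (the $D(Z)$ semantics of Section \ref{sec:deterministic}) differs from ordinary d-separation. Establishing faithfulness in that constrained, degenerate setting would be a new and nontrivial result --- precisely the obstacle you name as the ``main obstacle'' --- whereas the citation to Levitz et al.\ makes the whole detour unnecessary.
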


\begin{corollary}\label{cor:wtc}
Any independence model represented by a MAMP CG is a compositional graphoid that satisfies weak transitivity.
\end{corollary}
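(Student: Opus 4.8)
The plan is to derive Corollary \ref{cor:wtc} directly from Theorem \ref{the:faithfulness} together with well-known closure properties of Gaussian independence models. The key observation is that the independence model $I(G)$ of a MAMP CG $G$ is, by Theorem \ref{the:faithfulness}, exactly equal to the independence model $I(p) = \{X \ci_p Y \mid Z\}$ of some regular Gaussian distribution $p$. Hence any property that holds for the independence model of \emph{every} regular Gaussian distribution automatically holds for $I(G)$. So the work reduces to citing (or recalling) that the independence model of a regular Gaussian distribution is a compositional graphoid satisfying weak transitivity.

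First I would invoke Theorem \ref{the:faithfulness} to fix a regular Gaussian $p$ with $X \ci_p Y \mid Z$ iff $X \ci_G Y \mid Z$ for all disjoint $X, Y, Z \subseteq V$; this equality of independence models is precisely what faithfulness provides. Next I would recall the two standard facts about Gaussian distributions: (i) the independence model of any probability distribution is a graphoid provided the distribution has a positive (here, strictly positive, i.e. regular) density, which secures the intersection property, and additionally Gaussian distributions satisfy composition, so $I(p)$ is a compositional graphoid; and (ii) regular Gaussian distributions also satisfy weak transitivity. Both (i) and (ii) are classical and can simply be cited from the graphical-models literature (e.g. Studen\'y or Lauritzen for the graphoid and composition properties, and the weak transitivity of Gaussians is likewise standard). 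Combining the transfer $I(G) = I(p)$ with these facts yields the corollary immediately.

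The main obstacle here is essentially bookkeeping rather than mathematics: one must be careful that the regularity (positive definiteness of the covariance matrix) assumed in Theorem \ref{the:faithfulness} is exactly the hypothesis needed to guarantee both the intersection property (which can fail for degenerate distributions) and weak transitivity. Since Theorem \ref{the:faithfulness} delivers a \emph{regular} Gaussian $p$, this hypothesis is met, and no degenerate cases need to be handled. The only remaining care is to confirm that each of the seven defining conditions of a compositional graphoid (symmetry, decomposition, weak union, contraction, intersection, composition) plus weak transitivity, as formalized in Section \ref{sec:preliminaries}, matches the standard statements being cited; this is a routine verification against the definitions. Thus the proof is short and relies on transporting a structural property across the faithfulness identity, with the substantive content already discharged by Theorem \ref{the:faithfulness}.
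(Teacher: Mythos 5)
Your proposal is correct and is essentially the paper's own argument: the paper likewise reduces the corollary to Theorem \ref{the:faithfulness} plus the standard fact that the independence model of a regular Gaussian distribution satisfies the compositional graphoid properties and weak transitivity (citing Studen\'y). No further comment is needed.
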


Finally, we show below that the independence model represented by a MAMP CG coincides with certain closure of certain separations. This is interesting because it implies that a few separations and rules to combine them characterize all the separations represented by a MAMP CG. Moreover, it also implies that we have a simple graphical criterion to decide whether a given separation is or is not in the closure without having to find a derivation of it, which is usually a tedious task. Specifically, we define the pairwise separation base of a MAMP CG $G$ as the separations

\begin{itemize}

\item $A \ci B | pa_G(A)$ for all $A, B \in V$ st $A \notin ad_G(B)$ and $B \notin de_G(A)$,

\item $A \ci B | ne_G(A) \cup pa_G(A \cup ne_G(A))$ for all $A, B \in V$ st $A \notin ad_G(B)$, $A \in de_G(B)$, $B \in de_G(A)$ and $uc_G(A)=uc_G(B)$, and

\item $A \ci B | pa_G(A)$ for all $A, B \in V$ st $A \notin ad_G(B)$, $A \in de_G(B)$, $B \in de_G(A)$ and $uc_G(A) \neq uc_G(B)$.

\end{itemize}

We define the compositional graphoid closure of the pairwise separation base of $G$, denoted as $cl(G)$, as the set of separations that are in the base plus those that can be derived from it by applying the compositional graphoid properties. We denote the separations in $cl(G)$ as $X \ci_{cl(G)} Y | Z$.

\begin{theorem}\label{the:pairwise1}
For any MAMP CG $G$, if $X \ci_{cl(G)} Y | Z$ then $X \ci_G Y | Z$.
\end{theorem}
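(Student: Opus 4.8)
The plan is to reduce the statement to a purely local verification. By Corollary~\ref{cor:wtc}, the independence model $I(G)$ represented by $G$ is a compositional graphoid. On the other hand, $cl(G)$ is by definition the smallest set of separations that contains the pairwise separation base of $G$ and is closed under the compositional graphoid properties. Consequently, once we know that every separation in the pairwise separation base is a genuine separation in $G$ --- i.e. that the base is contained in $I(G)$ --- the fact that $I(G)$ is a compositional graphoid containing the base, together with the minimality of $cl(G)$, immediately yields $cl(G) \subseteq I(G)$. This is exactly the assertion that $X \ci_{cl(G)} Y | Z$ implies $X \ci_G Y | Z$. So the whole proof comes down to checking the three families of separations that make up the base.

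To check each family, I would fix nodes $A, B$ satisfying the stated side conditions, let $Z$ be the associated conditioning set, and show that no path between $A$ and $B$ is $Z$-open; since $A \notin ad_G(B)$ in all three cases, every such path has length at least two and therefore contains at least one intermediate node to work with. The natural device is to assume, for contradiction, that a $Z$-open path $\rho$ exists, choose a shortest one, and examine the edges incident to its endpoints together with its triplex and non-triplex nodes, invoking the simpler equivalent form of the $Z$-open definition stated just after the definition of MAMP CGs. For the first family, where $Z = pa_G(A)$ and $B \notin de_G(A)$, the argument should track how arrowheads and tails alternate along $\rho$: the absence of a descending route from $A$ to $B$ ought to force either a triplex node that is neither in $Z$ nor in $san_G(Z)$, or a non-triplex node of $Z$ that cannot be excused by the $A - B - C$ clause. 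The third family, where again $Z = pa_G(A)$ but now $A$ and $B$ lie in different undirected connectivity components, should be handled similarly, the extra hypothesis $uc_G(A) \neq uc_G(B)$ being what prevents $\rho$ from slipping between $A$ and $B$ through a purely undirected section.

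The hard part will be the second family, where $A$ and $B$ are non-adjacent but mutually descendants lying in the \emph{same} undirected connectivity component and the conditioning set is the full boundary $ne_G(A) \cup pa_G(A \cup ne_G(A))$. Here the obstacle is the escape clause in the semantics: a non-triplex node $B'$ of a subpath $A' - B' - C'$ that lies in $Z$ fails to block $\rho$ whenever $pa_G(B') \setminus Z \neq \emptyset$, so conditioning on a node of the undirected part does not automatically close the path. I expect to need to show that the boundary $ne_G(A) \cup pa_G(A \cup ne_G(A))$ intercepts every path out of $A$ in a way that the escape clause cannot undo, and to use the constraints C2 and C3 to control how undirected sections interact with bidirected and directed edges --- C3 guaranteeing that undirected neighborhoods touched by a spouse are complete, and C2 ruling out the mixed bidirected/undirected cycles that would otherwise let dependence leak around the boundary. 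Making the bookkeeping of the escape clause precise against these structural constraints, so that $ne_G(A) \cup pa_G(A \cup ne_G(A))$ provably closes every path, is where the real work lies; the first and third families should then follow as simpler specializations of the same path analysis.
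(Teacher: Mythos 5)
Your opening reduction is exactly the paper's: since $I(G)$ is a compositional graphoid (Corollary \ref{cor:wtc}) and $cl(G)$ is the compositional-graphoid closure of the pairwise separation base, it suffices to show that the base itself consists of genuine separations of $G$. That step is correct and complete. The difficulty is that everything after it is a plan rather than a proof. The entire content of the paper's argument lies in the case analysis you defer: for each of the three families one must enumerate the possible shapes of a path between $A$ and $B$ (classified by its initial segment) and show each shape is blocked. You correctly identify the crux --- the escape clause for non-triplex nodes in undirected subpaths --- but you stop at ``making the bookkeeping precise is where the real work lies,'' which is precisely the work the theorem requires.

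For what it is worth, the paper's execution confirms your instincts. In the first and third families one shows that any path whose second node carries an arrowhead away from $A$ must continue as a strictly descending route all the way to $B$ (each intermediate node is a non-triplex node not in $pa_G(A)$ by C1, so it must emit a directed edge to stay open), contradicting $B \notin de_G(A)$ in Case 1 and C1 in Case 3; paths beginning $A \la V_2$ or ending an undirected prefix with $\ao$ are blocked outright. In the second family, C3 forces any undirected prefix $A - V_2 - \cdots - V_{m}$ with $sp_G(V_i) \neq \emptyset$ to consist entirely of neighbors of $A$, hence of nodes in $Z = ne_G(A) \cup pa_G(A \cup ne_G(A))$; the first node where the path either loses its spouse or leaves the undirected part is then a non-triplex node in $Z$ whose parents are all in $Z$ (they lie in $pa_G(ne_G(A)) \subseteq Z$), so the escape clause $pa_G(B') \setminus Z \neq \emptyset$ can never fire, and C1 and C2 rule out the remaining configurations. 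None of this would fail, but until these cases are actually carried out your proposal establishes only the easy half of the theorem.
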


\begin{theorem}\label{the:pairwise2}
For any MAMP CG $G$, if $X \ci_G Y | Z$ then $X \ci_{cl(G)} Y | Z$.
\end{theorem}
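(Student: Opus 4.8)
The plan is to prove the stronger universal statement that \emph{every} compositional graphoid $M$ over $V$ that contains the pairwise separation base of $G$ also contains $I(G)$; since $cl(G)$ is by construction a compositional graphoid containing the base, instantiating $M = cl(G)$ yields the theorem. The first move is to reduce an arbitrary genuine separation $X \ci_G Y | Z$ to separations between single nodes. By Corollary \ref{cor:wtc} the model $I(G)$ is a graphoid, so $X \ci_G Y | Z$ decomposes into $A \ci_G B | Z$ for every $A \in X$ and $B \in Y$. Conversely, if each of these singleton separations lies in $M$, then repeated use of composition and symmetry (both available in the compositional graphoid $M$) rebuilds $A \ci_M Y | Z$ for each $A \in X$ and then $X \ci_M Y | Z$. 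Hence it suffices to show that every true singleton separation $A \ci_G B | Z$ belongs to $M$.

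The singleton statements I would obtain in two stages. First, using intersection, contraction, weak union and composition, I would amplify each of the three base separations into a local/ordered Markov property: separating a node $A$ from all its non-adjacent non-descendants given $pa_G(A)$, and separating $A$ from the rest of its own undirected component given $ne_G(A) \cup pa_G(A \cup ne_G(A))$, so that conditioning sets can be shifted across the parent, neighbour and spouse structure without leaving the closure. Second, from this local property I would derive the arbitrary singleton separation $A \ci_G B | Z$ by an induction organised along the descendant ordering of $G$, mirroring the three-case split according to whether $B \in de_G(A)$, whether $A \in de_G(B)$, and whether $uc_G(A) = uc_G(B)$. At each stage I would select a node $C$ outside $Z \cup \{A,B\}$, verify that $A \ci_G B | Z \cup C$ together with $A \ci_G C | Z \cup B$ (or symmetrically $B \ci_G C | Z \cup A$) hold in $G$, import these smaller separations from the induction hypothesis, and combine them by a single application of intersection followed by decomposition to place $A \ci B | Z$ in $M$.

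To certify the graph-theoretic facts demanded by the induction I would exploit the reduction to AMP CGs: by the observation preceding Theorem \ref{the:faithfulness}, $G$ and the AMP CG $\widehat{G}$ obtained by replacing each $A \aa B$ with $A \la L_{AB} \ra B$ represent the same separations over $V$, so every auxiliary separation over $V$ can be checked in $\widehat{G}$, where the better-understood AMP machinery is available. Alternatively, the faithful Gaussian distribution of Theorem \ref{the:faithfulness} converts each required separation into a probabilistic independence, which is sometimes easier to verify. Note that only graph facts about $G$ enter this verification; the complementary inclusion $cl(G) \subseteq I(G)$, which guarantees that the derived statements are genuine separations, is precisely Theorem \ref{the:pairwise1} and plays no role in the present direction.

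The main obstacle is the non-monotonicity of MAMP separation under conditioning: enlarging the conditioning set can open a path through a triplex node, since a triplex node becomes active once it lies in $Z \cup san_G(Z)$. Consequently it is not automatic that a free node $C$ can be absorbed into the conditioning set while preserving $A \ci_G B | Z \cup C$, nor that it is separated from one of $A,B$ by the enlarged set; proving that a usable $C$ always exists will require a careful case analysis driven by conditions C1--C3 and the $san_G$ mechanism, in particular the undirected-collider exception governed by $sp_G(B) \neq \emptyset$. The second delicate point is organising the induction so that its base case rests only on the three base separations, whose conditioning sets are $pa_G(A)$ and $ne_G(A) \cup pa_G(A \cup ne_G(A))$ rather than an arbitrary $Z$; reconciling these with the component condition $uc_G(A) = uc_G(B)$ and the spouse-induced completeness forced by C3 is where I expect the bulk of the technical bookkeeping to concentrate.
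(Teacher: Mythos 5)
Your proposal is a strategy outline rather than a proof: the reduction to singleton separations and the idea of first amplifying the pairwise base into local Markov statements do match the opening moves of the paper's argument (which reduces to $|X|=|Y|=1$ and, in Lemma \ref{lem:same}, derives an ``extended separation base'' conditioned on $pa_G(K_m)$ from the pairwise base). But the entire technical content lies in the step you defer --- exhibiting, for a given true separation $A \ci_G B | Z$, a pivot node $C$ for which the required auxiliary separations actually hold in $G$ --- and your proposed combining rule makes that step harder than it needs to be, arguably impossible as stated. You want a $C$ with both $A \ci_G B | Z \cup C$ and $A \ci_G C | Z \cup B$, to be merged by intersection. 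Because MAMP separation is not monotone under conditioning (as you yourself note), there is no reason such a $C$ exists, and the paper never needs one. Instead it works downward: it removes a carefully chosen $C$ from $Z$, shows $X \ci_G Y | Z \setminus C$, and then invokes \emph{weak transitivity} of $I(G)$ (Corollary \ref{cor:wtc}, a consequence of the faithful Gaussian of Theorem \ref{the:faithfulness}, not a compositional-graphoid axiom) to conclude the \emph{disjunction} $X \ci_G C | Z \setminus C$ or $Y \ci_G C | Z \setminus C$, after which symmetry, composition and weak union suffice to restore $X \ci_{cl(G)} Y | Z$. Your sketch never mentions weak transitivity, and without it (or a substitute) the conjunctive premises needed for intersection cannot in general be certified.

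The second gap is the organisation of the induction. The paper inducts on $|Z|$ jointly with the index of the connectivity component containing $X$ in a topological ordering $K_1, \ldots, K_n$; the case analysis (Cases 1--5 of the proof, supported by Lemmas \ref{lem:aux}, \ref{lem:aux2}, \ref{lem:same}, \ref{lem:aux3} and \ref{lem:different}) hinges on which component the deepest elements of $Z$ lie in and on whether $pa_G(Z \cap K_j) \subseteq Z$, and the constraints C2 and C3 enter through Lemma \ref{lem:aux}, which shows that any conditioning node with a spouse can be dropped from $Z$ without destroying the separation. Your proposed induction ``along the descendant ordering'' with a three-way split on $de_G$ and $uc_G$ mirrors the case split of Theorem \ref{the:pairwise1} (the soundness direction), not the completeness direction being proved here, and you give no argument for the base case or for why the chosen $C$ decreases the induction measure. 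As written, the proposal correctly identifies the obstacles but resolves none of them.
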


\subsection{Markov Equivalence}\label{sec:equivalence}

We say that two MAMP CGs are Markov equivalent if they represent the same independence model. In a MAMP CG, a triplex $(\{A,C\},B)$ is an induced subgraph of the form $A \oa B \ao$$ C$, $A \oa B - C$, or $A - B \ao C$. We say that two MAMP CGs are triplex equivalent if they have the same adjacencies and the same triplexes.

\begin{theorem}\label{the:triplex}
Two MAMP CGs are Markov equivalent iff they are triplex equivalent.
\end{theorem}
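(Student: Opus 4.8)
The plan is to establish the two directions separately, following the classical template for Markov equivalence results (as in Andersson et al. for AMP CGs and in the MVR literature), but now accommodating the three edge types and the extra structural constraints C1--C3.

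For the easy direction, I would show that triplex equivalence is necessary for Markov equivalence. Suppose $G_1$ and $G_2$ are Markov equivalent, so $I(G_1)=I(G_2)$. First I would argue that they must have the same adjacencies: this is exactly the content of Theorems \ref{the:pairwise1} and \ref{the:pairwise2}, which show that every missing edge corresponds to a separation in the pairwise base and conversely that adjacent nodes are never separated. More directly, if $A\notin ad_G(B)$ then the pairwise base supplies a $Z$ with $A\ci_G B\mid Z$, whereas if $A\in ad_G(B)$ every path, in particular the single edge, keeps them connected for all $Z$; hence the adjacency relation is recoverable from $I(G)$. Then I would show the triplexes coincide: for a candidate triplex $(\{A,C\},B)$ with $A,C$ nonadjacent, I would exhibit, using the $Z$-open path definition, that $B\in san_G(Z)\cup Z$ is forced to keep $A,C$ connected precisely when the configuration at $B$ is one of the triplex forms $A\oa B\ao C$, $A\oa B - C$, $A - B\ao C$. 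Concretely, $(\{A,C\},B)$ is a triplex iff $A\nci_G C\mid Z$ for some $Z\ni B$ but $A\ci_G C\mid Z\setminus B$ for the appropriate separating sets, a condition expressible purely in terms of $I(G)$; so equal independence models force equal triplexes.

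For the substantive direction I would show triplex equivalence is sufficient: assuming $G_1,G_2$ have the same adjacencies and triplexes, I would prove $I(G_1)=I(G_2)$ by showing that a path is $Z$-open in $G_1$ iff there is a $Z$-open path in $G_2$ between the same endpoints. The natural strategy is an induction on path length that converts a $Z$-open path in one graph into a $Z$-open path in the other, modifying the path locally wherever the two graphs place different edge marks. The key local fact is that triplex equivalence pins down, at every node $B$ internal to a path with nonadjacent neighbors $A,C$, whether $B$ is a triplex node, and this is exactly what governs the $Z$-open conditions (triplex nodes must lie in $Z\cup san_G(Z)$; non-triplex nodes must avoid $Z$ unless an $A-B-C$ configuration with an active parent rescues them). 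Where $A,C$ are adjacent I would shortcut the path, using constraints C2 and C3 together with semidirected acyclicity C1 to guarantee the shortcut stays legal and that the $san_G(Z)$ and $sp_G(B)$ side-conditions transfer correctly.

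The main obstacle I expect is precisely the interaction of the three side-conditions in the $Z$-open definition under the path-surgery of the sufficiency direction: the clause allowing a non-triplex node $B$ with $A-B-C$ to remain open when $sp_G(B)\neq\emptyset$ or $pa_G(B)\setminus Z\neq\emptyset$ mixes information about bidirected and directed edges incident to $B$ that are \emph{not} on the path itself, and triplex equivalence only constrains on-path configurations directly. I would therefore need an auxiliary lemma showing that triplex equivalence, combined with C1--C3, forces $G_1$ and $G_2$ to agree on the relevant off-path predicates: that $sp_{G_1}(B)\neq\emptyset$ iff $sp_{G_2}(B)\neq\emptyset$ for any $B$ lying on an undirected $A-B-C$ subpath, and similarly that membership in $san_G(Z)$ is preserved. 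Establishing this agreement---in effect, that the local neighbourhood type of each node is a triplex invariant---is where constraint C3 does its real work, since it ties the presence of a spouse at $B$ to the chordality of the undirected connectivity component around $B$, and I anticipate this bookkeeping, rather than the induction skeleton, to be the genuinely delicate part of the argument.
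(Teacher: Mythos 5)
Your ``only if'' direction is essentially the paper's: it derives equality of adjacencies and of triplexes from the validity of the pairwise separation base (Theorem \ref{the:pairwise1}), observing that the relevant separating sets exclude the middle node of a triplex. That part is fine.

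The ``if'' direction, however, has genuine gaps. First, the auxiliary lemma you say you would need --- that triplex equivalence forces $sp_{G_1}(B)\neq\emptyset$ iff $sp_{G_2}(B)\neq\emptyset$ --- is false as stated (the graphs $A\aa B$ and $A-B$ are triplex equivalent with different spouse sets), and it is also unnecessary: by constraint C3, any undirected path whose interior nodes all have spouses forces an undirected edge between its endpoints, so the $sp_G(B)\neq\emptyset$ clause can simply be dropped from the separation criterion without changing $I(G)$; the paper notes this in Section \ref{sec:mampcgs} and works with the simplified criterion. The same objection applies to your hope that ``membership in $san_G(Z)$ is preserved'': $A\ra B$ and $A\la B$ are triplex equivalent with different strict ascendants, so this is not a triplex invariant either. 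Second, and more substantively, ``induction on path length with local surgery'' is only the skeleton of an argument, and the hard work is precisely in making the surgery terminate. The paper has to (a) first restrict to a $Z$-open path that is minimal in a strong sense \emph{and} has every triplex node in $Z$ and no non-triplex node in $Z$, and show the same node sequence is $Z$-open in $G_2$ via a long case analysis in which each failed shortcut yields either a shorter $Z$-open path (contradicting minimality) or a violation of C1--C3, sometimes only after propagating an edge along an arbitrarily long chain $V_1,\ldots,V_n$; (b) then do a separate induction on the number of non-triplex nodes in $Z$, splitting the path at a parent $D\in pa_{G_1}(C)\setminus Z$ and regluing the two $Z$-open pieces in $G_2$; and (c) finally handle triplex nodes that are in $san_{G_1}(Z)\setminus Z$ by replacing them with routes $B_1\ra\cdots\ra B_n\la\cdots\la B_1$ down to $Z$ and proving, again by a minimality argument, that the same directed descent exists in $G_2$. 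Your sketch does not address (b) or (c) at all, and for (a) it replaces the needed contradiction machinery with an appeal to an invariance statement that does not hold. So the plan points in the right general direction but does not yet constitute a proof of sufficiency.
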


We mentioned in the previous section that MAMP CGs are a proper generalization of AMP and MVR CGs, as there are independence models that can be represented by the former but not by the two latter. Moreover, we gave the an example and postponed the proof. With the help of Theorem \ref{the:triplex}, we can now give the proof.

\begin{example}
The independence model represented by the MAMP CG $G$ below cannot be represented by any AMP or MVR CG.

\begin{table}[H]
\centering
\scalebox{1.0}{
\begin{tabular}{c}
\begin{tikzpicture}[inner sep=1mm]
\node at (-1,0) (A) {$A$};
\node at (0,0) (B) {$B$};
\node at (1,0) (C) {$C$};
\node at (0,-1) (D) {$D$};
\node at (1,-1) (E) {$E$};
\path[->] (A) edge (B);
\path[-] (B) edge (C);
\path[-] (B) edge (D);
\path[<->] (C) edge (E);
\path[<->] (D) edge (E);
\end{tikzpicture} 
\end{tabular}}
\end{table}

To see it, assume to the contrary that it can be represented by an AMP CG $H$. Note that $H$ is a MAMP CG too. Then, $G$ and $H$ must have the same triplexes by Theorem \ref{the:triplex}. Then, $H$ must have triplexes $(\{A,D\},B)$ and $(\{A,C\},B)$ but no triplex $(\{C,D\},B)$. So, $C - B - D$ must be in $H$. Moreover, $H$ must have a triplex $(\{B,E\},C)$. So, $C \la E$ must be in $H$. However, this implies that $H$ does not have a triplex $(\{C,D\},E)$, which is a contradiction because $G$ has such a triplex. To see that no MVR CG can represent the independence model represented by $G$, simply note that no MVR CG can have triplexes $(\{A,D\},B)$ and $(\{A,C\},B)$ but no triplex $(\{C,D\},B)$.
\end{example}

We end this section with two lemmas that identify some interesting distinguished members of a triplex equivalence class of MAMP CGs. We say that two nodes form a directed node pair if there is a directed edge between them.

\begin{lemma}\label{lem:directed}
For every triplex equivalence class of MAMP CGs, there is a unique maximal set of directed node pairs st some CG in the class has exactly those directed node pairs.
\end{lemma}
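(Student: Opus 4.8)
The plan is to establish existence of a CG in the triplex equivalence class that simultaneously orients a maximal collection of node pairs, and then to prove that this maximal set is unique. The starting point is Theorem~\ref{the:triplex}, which tells us that membership in the triplex equivalence class is governed entirely by adjacencies and triplexes; both of these are preserved under any local re-orientation of an edge that neither creates nor destroys a triplex and respects the defining constraints C1--C3. So the whole argument reduces to understanding which edges can be turned into directed edges, and how two such choices interact.

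First I would set up the key local operation. Given a MAMP CG $G$ in the class and an edge $A - B$ or $A \aa B$, I would ask under what conditions it can be replaced by $A \ra B$ (or $B \ra A$) while staying in the same triplex equivalence class. The requirements are: the new graph is still a MAMP CG (no semidirected cycle, and C2, C3 still hold), and the set of triplexes at the endpoints and their neighbors is unchanged. I expect that for each such re-orientable edge there is essentially a forced direction dictated by the surrounding triplexes (for instance, if $A$ already sits at the "arrow" end of some triplex through $B$, only one orientation is consistent). The core claim to isolate is then: \emph{if an edge $e$ can be directed in some member of the class and a different edge $f$ can be directed in some (possibly other) member, then there is a single member of the class in which both $e$ and $f$ are directed.} This ``simultaneous realizability'' or exchange property is the engine of the whole lemma.

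To prove the exchange property I would argue that the orientations are \emph{independent}: directing $e$ imposes no obstruction to directing $f$, because the triplex-preserving constraints on $e$ only involve the configuration around $e$'s endpoints, which is fixed across the whole equivalence class by Theorem~\ref{the:triplex}. Concretely, take a CG $G_e$ in the class with $e$ directed and a CG $G_f$ with $f$ directed; I would build a common refinement by directing $e$ in $G_f$ the same way it is directed in $G_e$, and check that C1--C3 and the triplex set survive. Semidirected acyclicity (C1) is the delicate bookkeeping: one must verify that orienting a second edge does not close a semidirected cycle, which I would handle by showing that any re-orientable edge lies inside a connectivity component in a way that the induced direction agrees with a common underlying partial order on components. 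Once the exchange property holds for pairs, a straightforward induction (iteratively adding one more directed pair at a time, using the pairwise result to merge) produces a single CG realizing the \emph{union} of all individually-directable node pairs. That union is the desired maximal set, and it is unique simply because it is defined as the union of all directable pairs, hence determined by the class alone.

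I expect the main obstacle to be the verification that the union is itself simultaneously realizable without violating C1 (semidirected acyclicity) or C3 (the ``$V_1 - V_2 - V_3$ with a spouse forces $V_1 - V_3$'' triangle condition). Orienting many edges at once could in principle create a semidirected cycle even when no single orientation does, so the induction step must carry a strong enough invariant --- most likely that all chosen directed orientations are consistent with one fixed total (or topological) order on the nodes, or on the connectivity components, that is intrinsic to the equivalence class. Pinning down that canonical order, and checking that every individually-directable pair respects it, is where the real work lies; the uniqueness half, by contrast, is essentially immediate once ``directable'' is shown to be a class-invariant notion.
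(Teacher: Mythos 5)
Your overall strategy---merge the directed orientations of different members of the class into a single graph and show the result is still a member---is the same one the paper uses: it takes graphs $G$ and $H$ realizing two allegedly distinct maximal sets, superimposes every directed edge of $G$ onto $H$ to form a graph $F$, and derives a contradiction from $F$ having a proper superset of $H$'s directed pairs. The problem is that the step you explicitly defer, namely verifying that the merged graph still satisfies C1 and has the same triplexes, is the entire content of the lemma, and the mechanism you propose for it cannot work as stated. There is no order on the nodes ``intrinsic to the equivalence class'' that all directable orientations respect: the class containing $A \ra B$ also contains $A \la B$ (the paper notes this immediately after the lemma), so the same node pair can be directed in opposite ways in different members. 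Any order you fix must therefore depend on the particular orientations chosen, and you give no argument that such choices can be made coherently across all directable pairs---which is essentially the statement to be proved, so the plan is circular at its key point.

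The paper needs no such order. For C1 it argues directly on a hypothetical semidirected cycle $\rho$ in $F$: if some directed edge of $\rho$ is already directed in $H$, then $H$ itself has a semidirected cycle, because every other edge of $\rho$ is, in $H$, either directed the same way or undirected/bidirected, all of which are legal continuations of a semidirected cycle; otherwise every directed edge of $\rho$ comes from $G$, and since the remaining edges of $\rho$ are undirected or bidirected in $G$ too, $G$ has a semidirected cycle. Either way a source graph is contradicted. Triplex preservation is a similarly short case analysis exploiting that $G$ and $H$ have the same triplexes: for instance, if $F$ gains a triplex $(\{A,C\},B)$, then $A - B - C$ must be in $H$ and $A \ra B$ or $B \la C$ in $G$, whence $G$ (having no triplex there) must contain $A \ra B \ra C$ or $A \la B \la C$, which $F$ then inherits, a contradiction. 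Replacing your canonical-order invariant with these ``push the violation back to $G$ or $H$'' arguments closes the gap, and it also makes the edge-by-edge induction unnecessary: a single wholesale merge of two maximal sets finishes the proof.
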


A MAMP CG is a maximally directed CG (MDCG) if it has exactly the maximal set of directed node pairs corresponding to its triplex equivalence class. Note that there may be several MDCGs in the class. For instance, the triplex equivalence class that contains the MAMP CG $A \ra B$ has two MDCGs (i.e. $A \ra B$ and $A \la B$).

\begin{lemma}\label{lem:bidirected}
For every triplex equivalence class of MDCGs, there is a unique maximal set of bidirected edges st some MDCG in the class has exactly those bidirected edges.
\end{lemma}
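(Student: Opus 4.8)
The plan is to show that the family $\mathcal{F}$ of those edge sets that arise as the bidirected edges of some MDCG in the class is closed under pairwise union; since $\mathcal{F}$ is a nonempty (MDCGs exist by Lemma~\ref{lem:directed}) finite family of subsets of a fixed finite ground set, closure under union forces a unique maximal element, namely $\bigcup_{F\in\mathcal{F}}F$, and this element itself lies in $\mathcal{F}$, which is exactly the assertion. So I would fix two MDCGs $G_1$ and $G_2$ in the class with bidirected edge sets $B_1$ and $B_2$. By Lemma~\ref{lem:directed} they share the same maximal set of directed node pairs, and being triplex equivalent they share the same adjacencies; hence they share the same set $N$ of non-directed adjacencies, and every edge of $N$ is bidirected or undirected in each $G_j$. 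In particular every edge in $B_2\setminus B_1$ is undirected in $G_1$: it is adjacent, it is not a directed pair (since $G_1,G_2$ have the same directed pairs and it is bidirected in $G_2$), and it is not in $B_1$. I would then define $G_3$ from $G_1$ by turning each edge of $B_2\setminus B_1$ from undirected into bidirected while leaving all directed edges untouched, so that $G_3$ has directed node pairs equal to those of $G_1$ and bidirected edge set $B_1\cup B_2$.

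It then remains to check that $G_3$ is a MAMP CG, that it is triplex equivalent to $G_1$ (hence in the class), and that it is an MDCG; the last is immediate once the first two hold, since $G_3$ keeps the directed node pairs of the MDCG $G_1$, i.e.\ the maximal set. For C1 and C2 I would reflect a putative bad cycle of $G_3$ back into $G_1$ or $G_2$. For C1, a semidirected cycle of $G_3$ is again semidirected in $G_1$, because its directed step $V_1\ra V_2$ is unchanged and every newly bidirected step was undirected in $G_1$, both admissible steps of a semidirected cycle. For C2, a cycle with a single bidirected edge $V_1\aa V_2$ and all other edges undirected has those undirected edges lying in $U_1\cap U_2$ (they are undirected in $G_3$), while the bidirected edge lies in $B_1\cup B_2$; whichever of $G_1,G_2$ makes that one edge bidirected also has all the remaining edges undirected there, producing the forbidden C2 cycle in a genuine MAMP CG, a contradiction.

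For C3, suppose $V_1-V_2-V_3$ in $G_3$ with $sp_{G_3}(V_2)\neq\emptyset$. The two undirected edges lie in $U_1\cap U_2$, so $V_1-V_2-V_3$ holds in both $G_1$ and $G_2$, and the bidirected edge at $V_2$ lies in $B_1$ or $B_2$, so $V_2$ has a spouse in $G_1$ or in $G_2$. Applying C3 in whichever graph gives $V_1-V_3$ there, so $\{V_1,V_3\}$ is undirected in that graph; and it cannot be bidirected in the other graph, because a bidirected $V_1\aa V_3$ together with the undirected $V_3-V_2-V_1$ is a forbidden C2 triangle there. Hence $\{V_1,V_3\}\notin B_1\cup B_2$, so $V_1-V_3$ in $G_3$. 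For triplex equivalence, $G_3$ and $G_1$ have the same adjacencies, and turning an undirected edge into a bidirected one only changes edge marks at its endpoints from ``none'' to ``arrowhead''; this can never destroy a triplex, and a newly created triplex $(\{P,Q\},R)$ would necessarily have both its legs at $R$ undirected in $G_1$ yet at least one leg bidirected in $G_2$, so $G_2$ would exhibit the triplex $(\{P,Q\},R)$ while $G_1$ does not, contradicting that $G_1$ and $G_2$ are triplex equivalent.

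The main obstacle is precisely this verification that the union construction stays inside the class and respects C1--C3, and above all that it introduces no new triplex. In each case the leverage is the same device: the ``other'' graph $G_2$ (or $G_1$) already realizes, with bidirected edges, the very configuration whose appearance in $G_3$ we fear, so that configuration is either already sanctioned or already forbidden in a genuine MAMP CG of the class, turning every worry into a contradiction. Once closure under union is in hand, a routine finite induction gives $\bigcup_{F\in\mathcal{F}}F\in\mathcal{F}$ as the unique maximal bidirected edge set realized by an MDCG, completing the proof.
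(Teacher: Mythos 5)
Your proof is correct and follows essentially the same route as the paper: both hinge on merging the bidirected edge sets of two MDCGs in the class by converting the relevant undirected edges of one graph into bidirected edges, then verifying C1--C3 and triplex equivalence by reflecting any offending configuration back into whichever of the two original graphs already realizes it. The only cosmetic difference is that you package the argument as closure of the realized bidirected edge sets under union, whereas the paper phrases it as a direct contradiction with the existence of two distinct maximal sets.
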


A MDCG is a maximally bidirected MDCG (MBMDCG) if it has exactly the maximal set of bidirected edges corresponding to its triplex equivalence class. Note that there may be several MBMDCGs in the class. For instance, the triplex equivalence class that contains the MAMP CG $A \ra B$ has two MBMDCGs (i.e. $A \ra B$ and $A \la B$). Note however that all the MBMDCGs in a triplex equivalence class have the same triplex edges, i.e. the edges in a triplex.

\section{Error MAMP CGs}\label{sec:emampcgs}

Unfortunately, MAMP CGs are not closed under marginalization, meaning that the independence model resulting from marginalizing out some nodes in a MAMP CG may not be representable by any MAMP CG. An example follows.

\begin{example}
The independence model resulting from marginalizing out $E$ and $I$ in the MAMP CG $G$ below cannot be represented by any MAMP CG.

\begin{table}[H]
\centering
\scalebox{1.0}{
\begin{tabular}{c}
\begin{tikzpicture}[inner sep=1mm]
\node at (0,0) (A) {$A$};
\node at (4,0) (B) {$B$};
\node at (-1,-1) (C) {$C$};
\node at (0,-1) (D) {$D$};
\node at (1,-1) (E) {$E$};
\node at (2,-1) (F) {$F$};
\node at (3,-1) (I) {$I$};
\node at (4,-1) (J) {$J$};
\node at (5,-1) (K) {$K$};
\path[->] (A) edge (D);
\path[->] (B) edge (J);
\path[->] (E) edge (F);
\path[->] (I) edge (F);
\path[-] (C) edge (D);
\path[-] (E) edge (D);
\path[-] (J) edge (K);
\path[-] (I) edge (J);
\end{tikzpicture}
\end{tabular}}
\end{table}

To see it, assume to the contrary that it can be represented by a MAMP CG $H$. Note that $C$ and $D$ must be adjacent in $H$, because $C \nci_G D | Z$ for all $Z \subseteq \{A, B, F, J, K\}$. Similarly, $D$ and $F$ must be adjacent in $H$. However, $H$ cannot have a triplex $(\{C,F\},D)$ because $C \ci_G F | A \cup D$. Moreover, $C \la D$ cannot be in $H$ because $A \ci_G C$, and $D \ra F$ cannot be in $H$ because $A \ci_G F$. Then, $C - D- F$ must be in $H$. Following an analogous reasoning, we can conclude that $F - J - K$ must be in $H$. However, this contradicts that $D \ci_G J$.
\end{example}

A solution to the problem above is to represent the marginal model by a MAMP CG with extra edges so as to avoid representing false independencies. This, of course, has two undesirable consequences: Some true independencies may not be represented, and the complexity of the CG increases. See \cite[p. 965]{RichardsonandSpirtes2002} for a discussion on the importance of the class of models considered being closed under marginalization. In this section, we propose an alternative solution to this problem: Much like we did in Section \ref{sec:eampcgs} with AMP CGs, we modify MAMP CGs into what we call EMAMP CGs, and show that the latter are closed under marginalization.\footnote{The reader may think that parts of this section are repetition of Section \ref{sec:eampcgs} and, thus, that both sections should be unified. However, we think that this would harm readability.}

\subsection{MAMP CGs with Deterministic Nodes}

We say that a node $A$ of a MAMP CG is determined by some $Z \subseteq V$ when $A \in Z$ or $A$ is a function of $Z$. In that case, we also say that $A$ is a deterministic node. We use $D(Z)$ to denote all the nodes that are determined by $Z$. From the point of view of the separations in a MAMP CG, that a node is determined by but is not in the conditioning set of a separation has the same effect as if the node were actually in the conditioning set. We extend the definition of separation for MAMP CGs to the case where deterministic nodes may exist.

Given a MAMP CG $G$, a path $\rho$ in $G$ is said to be $Z$-open when

\begin{itemize}
\item every triplex node in $\rho$ is in $D(Z) \cup san_G(D(Z))$, and

\item no non-triplex node $B$ in $\rho$ is in $D(Z)$, unless $A - B - C$ is a subpath of $\rho$ and $pa_G(B) \setminus D(Z) \neq \emptyset$.
\end{itemize}

\subsection{From MAMP CGs to EMAMP CGs}

\citet[Section 5]{Anderssonetal.2001} and \citet[Section 2]{KangandTian2009} show that any regular Gaussian probability distribution that is Markovian wrt an AMP or MVR CG $G$ can be expressed as a system of linear equations with correlated errors whose structure depends on $G$. As we show below, these two works can easily be combined to obtain a similar result for MAMP CGs.

Let $p$ denote any regular Gaussian distributions that is Markovian wrt a MAMP CG $G$. Assume without loss of generality that $p$ has mean 0. Let $K_i$ denote any connectivity component of $G$. Let $\Omega^i_{K_i,K_i}$ and $\Omega^i_{K_i ,pa_G(K_i)}$ denote submatrices of the precision matrix $\Omega^i$ of $p(K_i, pa_G(K_i))$. Then, as shown by \citet[Section 2.3.1]{Bishop2006},
\[
K_i | pa_G(K_i) \sim \mathcal{N}(\beta^i pa_G(K_i), \Lambda^i)
\]
where
\[
\beta^i= -(\Omega^i_{K_i,K_i})^{-1} \Omega^i_{K_i ,pa_G(K_i)}
\]
and
\[
(\Lambda^i)^{-1}= \Omega^i_{K_i,K_i}.
\]

Then, $p$ can be expressed as a system of linear equations with normally distributed errors whose structure depends on $G$ as follows:
\[
K_i = \beta^i \: pa_G(K_i) + \epsilon^i
\]
where 
\[
\epsilon^i \sim \mathcal{N}(0, \Lambda^i).
\]

Note that for all $A, B \in K_i$ st $uc_G(A)=uc_G(B)$ and $A- B$ is not in $G$, $A \ci_G B | pa_G(K_i) \cup uc_G(A) \setminus A \setminus B$ and thus $(\Lambda^i_{uc_G(A),uc_G(A)})^{-1}_{A,B} = 0$ \citep[Proposition 5.2]{Lauritzen1996}. Note also that for all $A, B \in K_i$ st $uc_G(A) \neq uc_G(B)$ and $A \aa B$ is not in $G$, $A \ci_G B | pa_G(K_i)$ and thus $\Lambda^i_{A,B} = 0$. Finally, note also that for all $A \in K_i$ and $B \in pa_G(K_i)$ st $A \la B$ is not in $G$, $A \ci_G B | pa_G(A)$ and thus $(\beta^i)_{A,B}=0$. Let $\beta_A$ contain the nonzero elements of the vector $(\beta^i)_{A, \bullet}$. Then, $p$ can be expressed as a system of linear equations with correlated errors whose structure depends on $G$ as follows. For any $A \in K_i$,
\[
A = \beta_A \: pa_G(A) + \epsilon^A
\]
and for any other $B \in K_i$,
\[
covariance(\epsilon^A, \epsilon^B) = \Lambda^i_{A,B}.
\]

It is worth mentioning that the mapping above between probability distributions and systems of linear equations is bijective. We omit the proof of this fact because it is unimportant in this work, but it can be proven much in the same way as Lemma 1 in \citet{Penna2011}. Note that each equation in the system of linear equations above is a univariate recursive regression, i.e. a random variable can be a regressor in an equation only if it has been the regressand in a previous equation. This has two main advantages, as \citet[p. 207]{CoxandWermuth1993} explain: "First, and most importantly, it describes a stepwise process by which the observations could have been generated and in this sense may prove the basis for developing potential causal explanations. Second, each parameter in the system [of linear equations] has a well-understood meaning since it is a regression coefficient: That is, it gives for unstandardized variables the amount by which the response is expected to change if the explanatory variable is increased by one unit and all other variables in the equation are kept constant." Therefore, a MAMP CG can be seen as a data generating process and, thus, it gives us insight into the system under study.

Note that no nodes in $G$ correspond to the errors $\epsilon^A$. Therefore, $G$ represent the errors implicitly. We propose to represent them explicitly. This can easily be done by transforming $G$ into what we call an EMAMP CG $G'$ as follows, where $A \bb B$ means $A \aa B$ or $A - B$:

\begin{table}[H]
\centering
\scalebox{1.0}{
\begin{tabular}{ll}
1 & Let $G'=G$\\
2 & For each node $A$ in $G$\\
3 & \hspace{0.3cm} Add the node $\epsilon^A$ to $G'$\\
4 & \hspace{0.3cm} Add the edge $\epsilon^A \ra A$ to $G'$\\
5 & For each edge $A \bb B$ in $G$\\
6 & \hspace{0.3cm} Add the edge $\epsilon^A \bb \epsilon^B$ to $G'$\\
7 & \hspace{0.3cm} Remove the edge $A \bb B$ from $G'$\\
\end{tabular}}
\end{table}

The transformation above basically consists in adding the error nodes $\epsilon^A$ to $G$ and connect them appropriately. Figure \ref{fig:example2} shows an example. Note that every node $A \in V$ is determined by $pa_{G'}(A)$ and, what will be more important, that $\epsilon^A$ is determined by $pa_{G'}(A) \setminus \epsilon^A \cup A$. Thus, the existence of deterministic nodes imposes independencies which do not correspond to separations in $G$. Note also that, given $Z \subseteq V$, a node $A \in V$ is determined by $Z$ iff $A \in Z$. The if part is trivial. To see the only if part, note that $\epsilon^A \notin Z$ and thus $A$ cannot be determined by $Z$ unless $A \in Z$. Therefore, a node $\epsilon^A$ in $G'$ is determined by $Z$ iff $pa_{G'}(A) \setminus \epsilon^A \cup A \subseteq Z$ because, as shown, there is no other way for $Z$ to determine $pa_{G'}(A) \setminus \epsilon^A \cup A$ which, in turn, determine $\epsilon^A$. Let $\epsilon$ denote all the error nodes in $G'$. It is easy to see that $G'$ is a MAMP CG over $V \cup \epsilon$ and, thus, its semantics are defined. The following theorem confirms that these semantics are as desired.

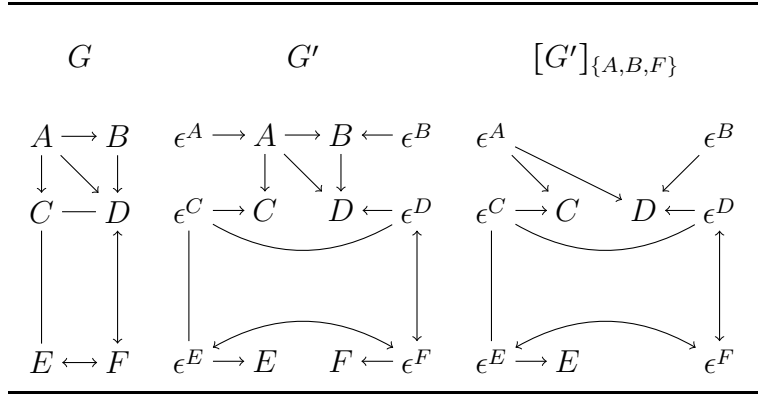
\begin{figure}
\centering
\scalebox{1.0}{
\begin{tabular}{ccc}\\
\hline
\\
$G$&$G'$&$[G']_{\{A,B,F\}}$\\
\\
\begin{tikzpicture}[inner sep=1mm]
\node at (0,0) (A) {$A$};
\node at (1,0) (B) {$B$};
\node at (0,-1) (C) {$C$};
\node at (1,-1) (D) {$D$};
\node at (0,-3) (E) {$E$};
\node at (1,-3) (F) {$F$};
\path[->] (A) edge (B);
\path[->] (A) edge (C);
\path[->] (A) edge (D);
\path[->] (B) edge (D);
\path[-] (C) edge (D);
\path[-] (C) edge (E);
\path[<->] (D) edge (F);
\path[<->] (E) edge (F);
\end{tikzpicture}
&
\begin{tikzpicture}[inner sep=1mm]
\node at (0,0) (A) {$A$};
\node at (1,0) (B) {$B$};
\node at (0,-1) (C) {$C$};
\node at (1,-1) (D) {$D$};
\node at (0,-3) (E) {$E$};
\node at (1,-3) (F) {$F$};
\node at (-1,0) (EA) {$\epsilon^A$};
\node at (2,0) (EB) {$\epsilon^B$};
\node at (-1,-1) (EC) {$\epsilon^C$};
\node at (2,-1) (ED) {$\epsilon^D$};
\node at (-1,-3) (EE) {$\epsilon^E$};
\node at (2,-3) (EF) {$\epsilon^F$};
\path[->] (EA) edge (A);
\path[->] (EB) edge (B);
\path[->] (EC) edge (C);
\path[->] (ED) edge (D);
\path[->] (EE) edge (E);
\path[->] (EF) edge (F);
\path[->] (A) edge (B);
\path[->] (A) edge (C);
\path[->] (A) edge (D);
\path[->] (B) edge (D);
\path[-] (EC) edge [bend right] (ED);
\path[-] (EC) edge (EE);
\path[<->] (ED) edge (EF);
\path[<->] (EE) edge [bend left] (EF);
\end{tikzpicture}
&
\begin{tikzpicture}[inner sep=1mm]
\node at (0,-1) (C) {$C$};
\node at (1,-1) (D) {$D$};
\node at (0,-3) (E) {$E$};
\node at (-1,0) (EA) {$\epsilon^A$};
\node at (2,0) (EB) {$\epsilon^B$};
\node at (-1,-1) (EC) {$\epsilon^C$};
\node at (2,-1) (ED) {$\epsilon^D$};
\node at (-1,-3) (EE) {$\epsilon^E$};
\node at (2,-3) (EF) {$\epsilon^F$};
\path[->] (EA) edge (C);
\path[->] (EA) edge (D);
\path[->] (EB) edge (D);
\path[->] (EC) edge (C);
\path[->] (ED) edge (D);
\path[->] (EE) edge (E);
\path[-] (EC) edge [bend right] (ED);
\path[-] (EC) edge (EE);
\path[<->] (ED) edge (EF);
\path[<->] (EE) edge [bend left] (EF);
\end{tikzpicture}\\
\hline
\end{tabular}}\caption{Example of the different transformations for MAMP CGs.}\label{fig:example2}
\end{figure}

\begin{theorem}\label{the:GG'2}
$I(G)=[I(G')]_\epsilon^\emptyset$.
\end{theorem}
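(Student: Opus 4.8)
The plan is to prove the two inclusions of $I(G)=[I(G')]_\epsilon^\emptyset$ by a path correspondence, bearing in mind that by the definition of $[\cdot]_\epsilon^\emptyset$ all of $X$, $Y$, $Z$ lie in $V$, so every relevant path has endpoints in $V$ and $Z$ contains no error node. First I would record the determinism bookkeeping that drives everything. As already observed, $D(Z)\cap V=Z$, and $\epsilon^A\in D(Z)$ iff $A\in Z$ and $pa_G(A)\subseteq Z$. Two further facts are essential: every node of $V$ is incident in $G'$ only to directed edges, and every error node $\epsilon^A$ has $pa_{G'}(\epsilon^A)=\emptyset$, its only directed edge being $\epsilon^A\ra A$; moreover the $\bb$-edges of $G$ correspond, preserving type, to the $\bb$-edges among error nodes of $G'$, while the directed edges of $G$ are exactly the $V$--$V$ edges of $G'$. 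The vanishing of $pa_{G'}(\epsilon^A)$ has the crucial consequence that the non-triplex exception ``$pa\setminus D(Z)\neq\emptyset$'' can never be invoked at an error node, so a non-triplex error node keeps a path open iff it is simply outside $D(Z)$; and since $\epsilon^A\in D(Z)$ iff $A\in Z$ and $pa_G(A)\subseteq Z$, this condition matches the MAMP exception ``$pa_G(B)\setminus Z\neq\emptyset$'' exactly. Finally I would note that $san$ is preserved upward along $\epsilon^A\ra A$, so $A\in Z\cup san_G(Z)$ implies $\epsilon^A\in D(Z)\cup san_{G'}(D(Z))$.

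The geometric core is a boundary lemma: on any $Z$-open path of a MAMP CG, a non-triplex node incident to a non-undirected edge of the path must lie outside $Z$, since the non-triplex clause of $Z$-openness admits a conditioned non-triplex node only inside an $A-B-C$ subpath. In particular the node just before the start $X_a$ of a $\bb$-run, when it points into $X_a$ as $X_{a-1}\ra X_a$, is non-triplex and directed, hence outside $Z$, so $pa_G(X_a)\not\subseteq Z$ and therefore $\epsilon^{X_a}\notin D(Z)$. This is exactly what prevents the error node mediating a section boundary from being accidentally treated as conditioned.

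For the inclusion $[I(G')]_\epsilon^\emptyset\subseteq I(G)$ I would lift: given a $Z$-open path $\rho$ in $G$ between $X$ and $Y$, I replace each maximal run of consecutive $\bb$-edges $X_a\bb\cdots\bb X_b$ by the detour $X_a\la\epsilon^{X_a}\bb\cdots\bb\epsilon^{X_b}\ra X_b$ through the corresponding error-node path, keeping all directed edges unchanged; this is a genuine path $\rho'$ in $G'$ because all the $\epsilon^{X_j}$ are distinct and disjoint from $V$. Then I verify $\rho'$ is $Z$-open node by node: directed-stretch and section-boundary nodes of $V$ keep their triplex status, and their conditions transfer via $D(Z)\cap V=Z$, via $san$-preservation, and via the boundary lemma; each interior section node $X_j$ is replaced by $\epsilon^{X_j}$ with identical local edge types, so triplexes match and their open/closed status transfers by the determinism--exception matching of the first paragraph; the two end error nodes $\epsilon^{X_a},\epsilon^{X_b}$ are never triplex and are shown to lie outside $D(Z)$ by the boundary lemma, or, when $X_a$ or $X_b$ is an endpoint of $\rho$ in $X\cup Y$, simply because it is outside $Z$.

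For the reverse inclusion $I(G)\subseteq[I(G')]_\epsilon^\emptyset$ I would project. Any $Z$-open path $\rho'$ in $G'$ decomposes into directed stretches through $V$ and maximal error-runs, and the structural fact that $\epsilon^A$ meets $V$ only at $A$ forces each error-run into the canonical shape $X_p\la\epsilon^{X_p}\bb\cdots\bb\epsilon^{X_q}\ra X_q$, which I collapse to the walk $X_p\bb\cdots\bb X_q$ in $G$; running the same node-by-node transfer in reverse yields a $Z$-open route $\sigma$ in $G$. The one genuinely technical point, and the step I expect to be the main obstacle, is that $\sigma$ is a priori only a route, because a reinstated interior section node may coincide with a node already present on $\rho'$. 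I would resolve this with a route-shortening argument: taking a shortest $Z$-open route between $X$ and $Y$ and splicing out any repeated node, checking case by case on the triplex/non-triplex status of the splice point that $Z$-openness is preserved, which forces the shortest such route to be a path. The remaining verifications are routine once the determinism bookkeeping and the boundary lemma are in place, and they parallel the proof of Theorem \ref{the:GG'}, the only new ingredient being that bidirected edges are pushed onto the error nodes exactly as undirected edges are and are handled by the same triplex analysis.
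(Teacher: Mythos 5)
Your proposal is correct and follows essentially the same route as the paper's proof: the same lift/projection between $Z$-open paths of $G$ and $G'$ obtained by swapping maximal $\bb$-runs $V_1 \bb \cdots \bb V_n$ with detours $V_1 \la \epsilon^{V_1} \bb \cdots \bb \epsilon^{V_n} \ra V_n$, the same determinism bookkeeping ($D(Z)\cap V = Z$ and $\epsilon^A \in D(Z)$ iff $A \in Z$ and $pa_G(A)\subseteq Z$), and the same case analysis, which you merely package as a ``boundary lemma'' instead of enumerating subpath configurations. Your extra attention to the possibility that the collapsed object in the projection direction is a route rather than a path is a point the paper passes over silently, and handling it via the route-based separation criterion (as the paper itself does later for Theorem \ref{the:closed}) is a legitimate and slightly more careful finish.
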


\subsection{EMAMP CGs Are Closed under Marginalization}

Finally, we show that EMAMP CGs are closed under marginalization, meaning that for any EMAMP CG $G'$ and $L \subseteq V$ there is an EMAMP CG $[G']_L$ st $[I(G')]_{L \cup \epsilon}=[I([G']_L)]_\epsilon$. We actually show how to transform $G'$ into $[G']_L$. Note that our definition of closed under marginalization is an adaptation of the standard one to the fact that we only care about independence models under marginalization of the error nodes.

To gain some intuition into the problem and our solution to it, assume that $L$ contains a single node $B$. Then, marginalizing out $B$ from the system of linear equations associated with $G$ implies the following: For every $C$ st $B \in pa_{G}(C)$, modify the equation $C = \beta_C \: pa_{G}(C) + \epsilon^C$ by replacing $B$ with the right-hand side of its corresponding equation, i.e. $\beta_B \: pa_{G}(B) + \epsilon^B$ and, then, remove the equation $B = \beta_B \: pa_{G}(B) + \epsilon^B$ from the system. In graphical terms, this corresponds to $C$ inheriting the parents of $B$ in $G'$ and, then, removing $B$ from $G'$. The following pseudocode formalizes this idea for any $L \subseteq V$.

\begin{table}[H]
\centering
\scalebox{1.0}{
\begin{tabular}{ll}
1 & Let $[G']_L=G'$\\
2 & Repeat until all the nodes in $L$ have been considered\\
3 & \hspace{0.3cm} Let $B$ denote any node in $L$ that has not been considered before\\
4 & \hspace{0.3cm} For each pair of edges $A \ra B$ and $B \ra C$ in $[G']_L$ with $A, C \in V \cup \epsilon$\\
5 & \hspace{0.8cm} Add the edge $A \ra C$ to $[G']_L$\\
6 & \hspace{0.3cm} Remove $B$ and all the edges it participates in from $[G']_L$\\
\end{tabular}}
\end{table}

Note that the result of the pseudocode above is the same no matter the ordering in which the nodes in $L$ are selected in line 3. Note also that we have not yet given a formal definition of EMAMP CGs. We define them recursively as all the graphs resulting from applying the first pseudocode in this section to a MAMP CG, plus all the graphs resulting from applying the second pseudocode in this section to an EMAMP CG. It is easy to see that every EMAMP CG is a MAMP CG over $W \cup \epsilon$ with $W \subseteq V$ and, thus, its semantics are defined. Theorem \ref{the:GG'2} together with the following theorem confirm that these semantics are as desired.

\begin{theorem}\label{the:closed}
$[I(G')]_{L \cup \epsilon}=[I([G']_L)]_\epsilon$.
\end{theorem}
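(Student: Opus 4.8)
The plan is to prove the equality $[I(G')]_{L \cup \epsilon}=[I([G']_L)]_\epsilon$ by showing that the marginalization pseudocode preserves exactly the separation structure over the observed nodes once the error nodes are marginalized out. I would first reduce to the case where $L$ is a single node $B$, since Theorem~\ref{the:GG'2} and the remark that the pseudocode output is independent of the ordering of $L$ let me iterate: if the single-node case holds, then writing $L=\{B_1,\ldots,B_k\}$ and peeling off one node at a time gives $[I(G')]_{\{B_1\}\cup\cdots\cup\{B_k\}\cup\epsilon} = [I([G']_{B_1})]_{\{B_2\}\cup\cdots\cup\epsilon}=\cdots=[I([G']_L)]_\epsilon$ by a straightforward induction. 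So the crux is the single-node statement $[I(G')]_{\{B\}\cup\epsilon}=[I([G']_{\{B\}})]_\epsilon$.

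For the single-node case, let $H=[G']_{\{B\}}$ denote the graph obtained by connecting every parent $A$ of $B$ to every child $C$ of $B$ with $A\ra C$ and then deleting $B$. The key structural observation is that in an EMAMP CG, the node $B\in V$ has a unique incoming edge $\epsilon^B\ra B$ from its own error node together with directed edges from $pa_{G'}(B)\cap V$, and $B$ has no bidirected or undirected edges (those live among the error nodes). Thus $B$ is a node that sits on directed paths only, and marginalizing it amounts to the standard operation of bypassing a node all of whose adjacencies are directed. I would then establish a path-correspondence lemma: for every $Z$-open path $\rho$ in $G'$ between observed nodes $X$ and $Y$ (with $Z\subseteq W\setminus\{B\}$, reading the separation in $[I(G')]_{\{B\}\cup\epsilon}$, so the conditioning set is really $Z$ and $B,\epsilon$ are marginalized), there is a corresponding $Z$-open path in $H$, and conversely. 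Concretely, a path in $G'$ passing through $B$ must enter and leave $B$ along directed edges, i.e. contain a subpath $A\ra B\ra C$ or $A\ra B\la C'$ or similar; I would check case by case that replacing the relevant $B$-segment by the new edges of $H$ preserves the triplex/non-triplex status and the $D(Z)$-membership conditions that define $Z$-openness for MAMP CGs with deterministic nodes.

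The main obstacle, and the step I expect to require the most care, is the bookkeeping of the deterministic-node semantics through the marginalization. Recall that $\epsilon^A$ is determined by $pa_{G'}(A)\setminus\epsilon^A\cup A$, so separation in these graphs uses $D(Z)$ rather than $Z$, and deleting $B$ changes the parent sets: after bypassing $B$, a child $C$ inherits $pa_{G'}(B)\cap V$ as new parents, which alters which error nodes become determined by a given $Z$. I would need to verify that the determination relation is preserved in the precise sense that, for $Z\subseteq W\setminus\{B\}$, a node $\epsilon^A$ is determined by $Z$ in $H$ iff it is determined by $Z$ (equivalently $Z\cup\{B\}$, since $B$ is marginalized and never conditioned) in $G'$. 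This follows because the new edges exactly encode the substitution $B\mapsto \beta_B\,pa_{G'}(B)+\epsilon^B$, so the functional dependencies of the remaining nodes on $Z$ are unchanged; but making this rigorous at the level of the graphical $D(\cdot)$ operator, rather than the algebraic system of equations, is the delicate part.

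Once the path-correspondence and the determination-preservation are in place, the two inclusions follow symmetrically: a separation $X\ci Y|Z$ fails in $[I(G')]_{\{B\}\cup\epsilon}$ iff there is a $D(Z)$-open path avoiding $B$ and the error nodes in $G'$, iff there is a corresponding such path in $H$, iff the separation fails in $[I(H)]_\epsilon$. Combined with the reduction to a single node and Theorem~\ref{the:GG'2} to relate everything back to $I(G)$, this yields $[I(G')]_{L\cup\epsilon}=[I([G']_L)]_\epsilon$, completing the proof that EMAMP CGs are closed under marginalization.
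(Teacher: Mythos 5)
Your overall strategy---reduce to a single marginalized node $B$, then set up a correspondence between $Z$-open connections in $G'$ and in $[G']_{\{B\}}$---is the same as the paper's, but the two steps you defer to ``case checking'' are exactly where the substance lies, and as described your replacement recipe fails on one of them. You describe $H=[G']_{\{B\}}$ as obtained by joining every parent of $B$ to every child of $B$ and claim a $B$-segment of an open connection can be replaced by ``the new edges of $H$''. However, since $B\notin D(Z)$, the collider configuration $A \ra B \la C$ is excluded, and the surviving configurations are $A \ra B \ra C$, $A \la B \la C$ \emph{and} the common-cause form $A \la B \ra C$ with both neighbours children of $B$. For that last form the pseudocode adds no edge between $A$ and $C$, so there is no new edge to substitute; the paper reroutes it as $A \la \epsilon^B \ra C$, using the edges $\epsilon^B \ra A$ and $\epsilon^B \ra C$ that the pseudocode creates from $\epsilon^B \ra B \ra A$ and $\epsilon^B \ra B \ra C$. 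This is the one non-obvious case, and it is the reason $\epsilon^B$ must survive the marginalization of $B$; your proposal never identifies it. A second, smaller gap: you phrase the correspondence in terms of paths, but the substitutions in both directions can repeat nodes (a path in $[G']_L$ traversing two new edges inherited from $B$ becomes a walk visiting $B$ twice, and the $\epsilon^B$ reroute can revisit $\epsilon^B$). The paper sidesteps this by first recasting separation in MAMP CGs in terms of routes; you would need to do the same, or add an explicit open-route-to-open-path reduction.

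By contrast, the determination bookkeeping that you single out as the delicate part is essentially immediate and is all the paper uses: since $Z \subseteq V\setminus L$ contains neither $B$ nor any error node, one has $B \notin D(Z)$ and $\epsilon^B \notin D(Z)$ in both graphs, which is what the case analysis needs; no finer comparison of the $D(\cdot)$ operators before and after marginalization is required. Your reduction from general $L$ to a single node by induction matches the paper and is fine.
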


\begin{figure}
\centering
\scalebox{1.0}{
\begin{tabular}{c}\\
\hline
\\
\begin{tikzpicture}[inner sep=1mm]
\node at (0,0) (A) {MAMP CGs};
\node at (2,-1) (B) {RCGs};
\node at (-2,-1) (C) {AMP CGs};
\node at (4,-2) (D) {MVR CGs};
\node at (0,-2) (E) {Markov networks};
\node at (4,-3) (F) {Covariance graphs};
\node at (-2,-3) (G) {Bayesian networks};
\path[->] (A) edge (B);
\path[->] (A) edge (C);
\path[->] (B) edge (D);
\path[->] (B) edge (E);
\path[->] (C) edge (E);
\path[->] (D) edge (F);
\path[->] (C) edge (G);
\path[->] (D) edge (G);
\end{tikzpicture}\\
\hline
\end{tabular}}\caption{Subfamilies of MAMP CGs.}\label{fig:subfamilies}
\end{figure}
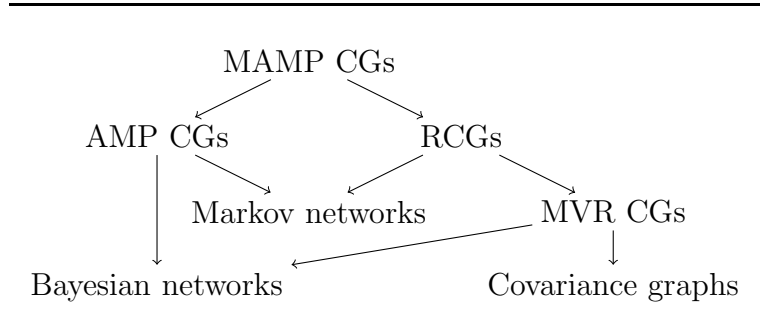

\section{Discussion}\label{sec:discussion}

In this paper we have introduced MAMP CGs, a new family of graphical models that unify and generalize AMP and MVR CGs. We have described global and pairwise Markov properties for them and proved their equivalence for compositional graphoids. We have shown that every MAMP CG is Markov equivalent to some DAG with deterministic nodes under marginalization and conditioning on some of its nodes. Therefore, the independence model represented by a MAMP CG can be accounted for by some data generating process that is partially observed and has selection bias. We have also characterized when two MAMP CGs are Markov equivalent. We conjecture that every Markov equivalence class of MAMP CGs has a distinguished member. We are currently working on this question. It is worth mentioning that such a result has been proven for AMP CGs \citep{RoveratoandStudeny2006}. Finally, we have modified MAMP CGs so that they are closed under marginalization. This is a desirable feature because it guarantees parsimonious models under marginalization. We are currently studying how to modify MAMP CGs so that they are closed under conditioning too. We are also working on a constraint based algorithm for learning a MAMP CG a given probability distribution is faithful to. The idea is to combine the learning algorithms that we have recently proposed for AMP CGs \citep{Penna2012} and MVR CGs \citep{SonntagandPenna2012}.

We believe that the most natural way to generalize AMP and MVR CGs is by allowing undirected, directed and bidirected edges. However, we are not the first to introduce a family of models that is based on graphs that may contain these three types of edges. In the rest of this section, we review some works that have done it before us, and explain how our work differs from them. \cite{CoxandWermuth1993,CoxandWermuth1996} introduced regression CGs (RCGs) to generalize MVR CGs by allowing them to have also undirected edges. The separation criterion for RCGs is identical to that of MVR CGs. Then, there are independence models that can be represented by MAMP CGs but that cannot be represented by RCGs, because RCGs generalize MVR CGs but not AMP CGs. An example follows.

\begin{example}
The independence model represented by the AMP CG $G$ below cannot be represented by any RCG.

\begin{table}[H]
\centering
\scalebox{1.0}{
\begin{tabular}{c}
\begin{tikzpicture}[inner sep=1mm]
\node at (1,1) (A) {$A$};
\node at (0,0) (B) {$B$};
\node at (1,0) (C) {$C$};
\node at (2,0) (D) {$D$};
\path[->] (A) edge (C);
\path[-] (B) edge (C);
\path[-] (C) edge (D);
\end{tikzpicture}
\end{tabular}}
\end{table}

To see it, assume to the contrary that it can be represented by a RCG $H$. Note that $H$ is a MAMP CG too. Then, $G$ and $H$ must have the same triplexes by Theorem \ref{the:triplex}. Then, $H$ must have triplexes $(\{A,B\},C)$ and $(\{A,D\},C)$ but no triplex $(\{B,D\},C)$. So, $B \oo C \ra D$, $B \oo C - D$, $B \la C \oo D$ or $B - C \oo D$ must be in $H$. However, this implies that $H$ does not have the triplex $(\{A,B\},C)$ or $(\{A,D\},C)$, which is a contradiction.
\end{example}

It is worth mentioning that, although RCGs can have undirected edges, they cannot have a subgraph of the form $A \oa B - C$. Therefore, RCGs are a subfamily of MAMP CGs. Figure \ref{fig:subfamilies} depicts this and other subfamilies of MAMP CGs.

Another family of models that is based on graphs that may contain undirected, directed and bidirected edges is maximal ancestral graphs (MAGs) \citep{RichardsonandSpirtes2002}. Although MAGs can have undirected edges, they must comply with certain topological constraints. The separation criterion for MAGs is identical to that of MVR CGs. Therefore, the example above also serves to illustrate that MAGs generalize MVR CGs but not AMP CGs, as MAMP CGs do. See also \citep[p. 1025]{RichardsonandSpirtes2002}. Therefore, MAMP CGs are not a subfamily of MAGs. The following example shows that MAGs are not a subfamily of MAMP CGs either.

\begin{example}\label{exa:reviewer}
The independence model represented by the MAG $G$ below cannot be represented by any MAMP CG.

\begin{table}[H]
\centering
\scalebox{1.0}{
\begin{tabular}{c}
\begin{tikzpicture}[inner sep=1mm]
\node at (0,0) (A) {$A$};
\node at (1,0) (B) {$B$};
\node at (2,0) (C) {$C$};
\node at (2,-1) (D) {$D$};
\path[->] (A) edge (B);
\path[->] (B) edge (D);
\path[<->] (B) edge (C);
\path[<->] (C) edge (D);
\end{tikzpicture}
\end{tabular}}
\end{table}

To see it, assume to the contrary that it can be represented by a MAMP CG $H$. Obviously, $G$ and $H$ must have the same adjacencies. Then, $H$ must have a triplex $(\{A,C\},B)$ because $A \ci_G C$, but it cannot have a triplex $(\{A,D\},B)$ because $A \ci_G D | B$. This is possible only if the edge $A \la B$ is not in $H$. Then, $H$ must have one of the following induced subgraphs:

\begin{table}[H]
\centering
\scalebox{1.0}{
\begin{tabular}{ccccc}
\begin{tikzpicture}[inner sep=1mm]
\node at (0,0) (A) {$A$};
\node at (1,0) (B) {$B$};
\node at (2,0) (C) {$C$};
\node at (2,-1) (D) {$D$};
\path[o->] (A) edge (B);
\path[->] (B) edge (D);
\path[o-] (B) edge (C);
\path[o-o] (C) edge (D);
\end{tikzpicture}
&
\begin{tikzpicture}[inner sep=1mm]
\node at (0,0) (A) {$A$};
\node at (1,0) (B) {$B$};
\node at (2,0) (C) {$C$};
\node at (2,-1) (D) {$D$};
\path[o->] (A) edge (B);
\path[->] (B) edge (D);
\path[<->] (B) edge (C);
\path[->] (C) edge (D);
\end{tikzpicture}
&
\begin{tikzpicture}[inner sep=1mm]
\node at (0,0) (A) {$A$};
\node at (1,0) (B) {$B$};
\node at (2,0) (C) {$C$};
\node at (2,-1) (D) {$D$};
\path[o->] (A) edge (B);
\path[->] (B) edge (D);
\path[<->] (B) edge (C);
\path[o-] (C) edge (D);
\end{tikzpicture}
&
\begin{tikzpicture}[inner sep=1mm]
\node at (0,0) (A) {$A$};
\node at (1,0) (B) {$B$};
\node at (2,0) (C) {$C$};
\node at (2,-1) (D) {$D$};
\path[-] (A) edge (B);
\path[-o] (B) edge (D);
\path[<-o] (B) edge (C);
\path[->] (C) edge (D);
\end{tikzpicture}
&
\begin{tikzpicture}[inner sep=1mm]
\node at (0,0) (A) {$A$};
\node at (1,0) (B) {$B$};
\node at (2,0) (C) {$C$};
\node at (2,-1) (D) {$D$};
\path[-] (A) edge (B);
\path[-o] (B) edge (D);
\path[<-o] (B) edge (C);
\path[o-] (C) edge (D);
\end{tikzpicture}
\end{tabular}}
\end{table}

However, the first and second cases are impossible because $A \ci_H D | B \cup C$ whereas $A \nci_G D | B \cup C$. The third case is impossible because it does not satisfy the constraint C1. In the fourth case, note that $C \aa B - D$ cannot be in $H$ because, otherwise, it does not satisfy the constraint C1. Then, the fourth case is impossible because $A \ci_H D | B \cup C$ whereas $A \nci_G D | B \cup C$. Finally, the fifth case is also impossible because it does not satisfy the constraint C1 or C2.
\end{example}

It is worth mentioning that the models represented by AMP and MVR CGs are smooth, i.e. they are curved exponential families, for Gaussian probability distributions. However, only the models represented by MVR CGs are smooth for discrete probability distributions. The models represented by MAGs are smooth in the Gaussian and discrete cases. See \cite{Drton2009} and \cite{EvansandRichardson2013}.

Finally, three other families of models that are based on graphs that may contain undirected, directed and bidirected edges are summary graphs after replacing the dashed undirected edges with bidirected edges \citep{CoxandWermuth1996}, MC graphs \citep{Koster2002}, and loopless mixed graphs \citep{SadeghiandLauritzen2012}. As shown in \citep[Sections 4.2 and 4.3]{SadeghiandLauritzen2012}, every independence model that can be represented by summary graphs and MC graphs can also be represented by loopless mixed graphs. The separation criterion for loopless mixed graphs is identical to that of MVR CGs. Therefore, the example above also serves to illustrate that loopless mixed graphs generalize MVR CGs but not AMP CGs, as MAMP CGs do. See also \citep[Section 4.1]{SadeghiandLauritzen2012}. Moreover, summary graphs and MC graphs have a rather counterintuitive and undesirable feature: Not every missing edge corresponds to a separation \citep[p. 1023]{RichardsonandSpirtes2002}. MAMP CGs, on the other hand, do not have this disadvantage (recall Theorem \ref{the:pairwise1}).

In summary, MAMP CGs are the only graphical models we are aware of that generalize both AMP and MVR CGs.

\section*{Acknowledgments}

We would like to thank the anonymous Reviewers and specially Reviewer 3 for suggesting Example \ref{exa:reviewer}. This work is funded by the Center for Industrial Information Technology (CENIIT) and a so-called career contract at Link\"oping University, by the Swedish Research Council (ref. 2010-4808), and by FEDER funds and the Spanish Government (MICINN) through the project TIN2010-20900-C04-03.

\section*{Appendix: Proofs}

\begin{proof}[{\bf Proof of Theorem \ref{the:GG'}}]

It suffices to show that every $Z$-open path between $\alpha$ and $\beta$ in $G$ can be transformed into a $Z$-open path between $\alpha$ and $\beta$ in $G'$ and vice versa, with $\alpha, \beta \in V$ and $Z \subseteq V \setminus \alpha \setminus \beta$.

Let $\rho$ denote a $Z$-open path between $\alpha$ and $\beta$ in $G$. We can easily transform $\rho$ into a path $\rho'$ between $\alpha$ and $\beta$ in $G'$: Simply, replace every maximal subpath of $\rho$ of the form $V_1 - V_2 - \ldots - V_{n-1} - V_n$ ($n \geq 2$) with $V_1 \la \epsilon^{V_1} - \epsilon^{V_2} - \ldots - \epsilon^{V_{n-1}} - \epsilon^{V_n} \ra V_n$. We now show that $\rho'$ is $Z$-open.

First, if $B \in V$ is a triplex node in $\rho'$, then $\rho'$ must have one of the following subpaths:

\begin{table}[H]
\centering
\scalebox{1.0}{
\begin{tabular}{c}
\begin{tikzpicture}[inner sep=1mm]
\node at (0,0) (A) {$A$};
\node at (1,0) (B) {$B$};
\node at (2,0) (C) {$C$};
\path[->] (A) edge (B);
\path[<-] (B) edge (C);
\end{tikzpicture}
\begin{tikzpicture}[inner sep=1mm]
\node at (0,0) (A) {$A$};
\node at (1,0) (B) {$B$};
\node at (2,0) (C) {$\epsilon^B$};
\node at (3,0) (D) {$\epsilon^C$};
\path[->] (A) edge (B);
\path[<-] (B) edge (C);
\path[-] (D) edge (C);
\end{tikzpicture}
\begin{tikzpicture}[inner sep=1mm]
\node at (0,0) (A) {$\epsilon^B$};
\node at (1,0) (B) {$B$};
\node at (2,0) (C) {$C$};
\node at (-1,0) (D) {$\epsilon^A$};
\path[->] (A) edge (B);
\path[<-] (B) edge (C);
\path[-] (D) edge (A);
\end{tikzpicture}
\end{tabular}}
\end{table}

with $A, C \in V$. Therefore, $\rho$ must have one of the following subpaths (specifically, if $\rho'$ has the $i$-th subpath above, then $\rho$ has the $i$-th subpath below):

\begin{table}[H]
\centering
\scalebox{1.0}{
\begin{tabular}{c}
\begin{tikzpicture}[inner sep=1mm]
\node at (0,0) (A) {$A$};
\node at (1,0) (B) {$B$};
\node at (2,0) (C) {$C$};
\path[->] (A) edge (B);
\path[<-] (B) edge (C);
\end{tikzpicture}
\begin{tikzpicture}[inner sep=1mm]
\node at (0,0) (A) {$A$};
\node at (1,0) (B) {$B$};
\node at (2,0) (C) {$C$};
\path[->] (A) edge (B);
\path[-] (B) edge (C);
\end{tikzpicture}
\begin{tikzpicture}[inner sep=1mm]
\node at (0,0) (A) {$A$};
\node at (1,0) (B) {$B$};
\node at (2,0) (C) {$C$};
\path[-] (A) edge (B);
\path[<-] (B) edge (C);
\end{tikzpicture}
\end{tabular}}
\end{table}

In either case, $B$ is a triplex node in $\rho$ and, thus, $B \in Z \cup san_G(Z)$ for $\rho$ to be $Z$-open. Then, $B \in Z \cup san_{G'}(Z)$ by construction of $G'$ and, thus, $B \in D(Z) \cup san_{G'}(D(Z))$.

Second, if $B \in V$ is a non-triplex node in $\rho'$, then $\rho'$ must have one of the following subpaths:

\begin{table}[H]
\centering
\scalebox{1.0}{
\begin{tabular}{c}
\begin{tikzpicture}[inner sep=1mm]
\node at (0,0) (A) {$A$};
\node at (1,0) (B) {$B$};
\node at (2,0) (C) {$C$};
\path[->] (A) edge (B);
\path[->] (B) edge (C);
\end{tikzpicture}
\begin{tikzpicture}[inner sep=1mm]
\node at (0,0) (A) {$A$};
\node at (1,0) (B) {$B$};
\node at (2,0) (C) {$C$};
\path[<-] (A) edge (B);
\path[->] (B) edge (C);
\end{tikzpicture}
\begin{tikzpicture}[inner sep=1mm]
\node at (0,0) (A) {$A$};
\node at (1,0) (B) {$B$};
\node at (2,0) (C) {$C$};
\path[<-] (A) edge (B);
\path[<-] (B) edge (C);
\end{tikzpicture}
\begin{tikzpicture}[inner sep=1mm]
\node at (0,0) (A) {$A$};
\node at (1,0) (B) {$B$};
\node at (2,0) (C) {$\epsilon^B$};
\node at (3,0) (D) {$\epsilon^C$};
\path[<-] (A) edge (B);
\path[<-] (B) edge (C);
\path[-] (D) edge (C);
\end{tikzpicture}
\begin{tikzpicture}[inner sep=1mm]
\node at (0,0) (A) {$\epsilon^B$};
\node at (1,0) (B) {$B$};
\node at (2,0) (C) {$C$};
\node at (-1,0) (D) {$\epsilon^A$};
\path[->] (A) edge (B);
\path[->] (B) edge (C);
\path[-] (D) edge (A);
\end{tikzpicture}
\end{tabular}}
\end{table}

with $A, C \in V$. Therefore, $\rho$ must have one of the following subpaths (specifically, if $\rho'$ has the $i$-th subpath above, then $\rho$ has the $i$-th subpath below):

\begin{table}[H]
\centering
\scalebox{1.0}{
\begin{tabular}{c}
\begin{tikzpicture}[inner sep=1mm]
\node at (0,0) (A) {$A$};
\node at (1,0) (B) {$B$};
\node at (2,0) (C) {$C$};
\path[->] (A) edge (B);
\path[->] (B) edge (C);
\end{tikzpicture}
\begin{tikzpicture}[inner sep=1mm]
\node at (0,0) (A) {$A$};
\node at (1,0) (B) {$B$};
\node at (2,0) (C) {$C$};
\path[<-] (A) edge (B);
\path[->] (B) edge (C);
\end{tikzpicture}
\begin{tikzpicture}[inner sep=1mm]
\node at (0,0) (A) {$A$};
\node at (1,0) (B) {$B$};
\node at (2,0) (C) {$C$};
\path[<-] (A) edge (B);
\path[<-] (B) edge (C);
\end{tikzpicture}
\begin{tikzpicture}[inner sep=1mm]
\node at (0,0) (A) {$A$};
\node at (1,0) (B) {$B$};
\node at (2,0) (C) {$C$};
\path[<-] (A) edge (B);
\path[-] (B) edge (C);
\end{tikzpicture}
\begin{tikzpicture}[inner sep=1mm]
\node at (0,0) (A) {$A$};
\node at (1,0) (B) {$B$};
\node at (2,0) (C) {$C$};
\path[-] (A) edge (B);
\path[->] (B) edge (C);
\end{tikzpicture}
\end{tabular}}
\end{table}

In either case, $B$ is a non-triplex node in $\rho$ and, thus, $B \notin Z$ for $\rho$ to be $Z$-open. Since $Z$ contains no error node, $Z$ cannot determine any node in $V$ that is not already in $Z$. Then, $B \notin D(Z)$.

Third, if $\epsilon^B$ is a non-triplex node in $\rho'$ (note that $\epsilon^B$ cannot be a triplex node in $\rho'$), then $\rho'$ must have one of the following subpaths:

\begin{table}[H]
\centering
\scalebox{1.0}{
\begin{tabular}{c}
\begin{tikzpicture}[inner sep=1mm]
\node at (0,0) (A) {$A$};
\node at (1,0) (B) {$B$};
\node at (2,0) (C) {$\epsilon^B$};
\node at (3,0) (D) {$\epsilon^C$};
\path[->] (A) edge (B);
\path[<-] (B) edge (C);
\path[-] (D) edge (C);
\end{tikzpicture}
\begin{tikzpicture}[inner sep=1mm]
\node at (0,0) (A) {$\epsilon^B$};
\node at (1,0) (B) {$B$};
\node at (2,0) (C) {$C$};
\node at (-1,0) (D) {$\epsilon^A$};
\path[->] (A) edge (B);
\path[<-] (B) edge (C);
\path[-] (D) edge (A);
\end{tikzpicture}
\begin{tikzpicture}[inner sep=1mm]
\node at (0.65,0) (B) {$\alpha=B$};
\node at (2,0) (C) {$\epsilon^B$};
\node at (3,0) (D) {$\epsilon^C$};
\path[<-] (B) edge (C);
\path[-] (D) edge (C);
\end{tikzpicture}
\begin{tikzpicture}[inner sep=1mm]
\node at (0,0) (A) {$\epsilon^B$};
\node at (1.35,0) (B) {$B=\beta$};
\node at (-1,0) (D) {$\epsilon^A$};
\path[->] (A) edge (B);
\path[-] (D) edge (A);
\end{tikzpicture}\\
\begin{tikzpicture}[inner sep=1mm]
\node at (0,0) (A) {$A$};
\node at (1,0) (B) {$B$};
\node at (2,0) (C) {$\epsilon^B$};
\node at (3,0) (D) {$\epsilon^C$};
\path[<-] (A) edge (B);
\path[<-] (B) edge (C);
\path[-] (D) edge (C);
\end{tikzpicture}
\begin{tikzpicture}[inner sep=1mm]
\node at (0,0) (A) {$\epsilon^B$};
\node at (1,0) (B) {$B$};
\node at (2,0) (C) {$C$};
\node at (-1,0) (D) {$\epsilon^A$};
\path[->] (A) edge (B);
\path[->] (B) edge (C);
\path[-] (D) edge (A);
\end{tikzpicture}
\begin{tikzpicture}[inner sep=1mm]
\node at (0,0) (A) {$\epsilon^A$};
\node at (1,0) (B) {$\epsilon^B$};
\node at (2,0) (C) {$\epsilon^C$};
\path[-] (A) edge (B);
\path[-] (B) edge (C);
\end{tikzpicture}
\end{tabular}}
\end{table}

with $A, C \in V$. Recall that $\epsilon^B \notin Z$ because $Z \subseteq V \setminus \alpha \setminus \beta$. In the first case, if $\alpha=A$ then $A \notin Z$, else $A \notin Z$ for $\rho$ to be $Z$-open. Then, $\epsilon^B \notin D(Z)$. In the second case, if $\beta=C$ then $C \notin Z$, else $C \notin Z$ for $\rho$ to be $Z$-open. Then, $\epsilon^B \notin D(Z)$. In the third and fourth cases, $B \notin Z$ because $\alpha=B$ or $\beta=B$. Then, $\epsilon^B \notin D(Z)$. In the fifth and sixth cases, $B \notin Z$ for $\rho$ to be $Z$-open. Then, $\epsilon^B \notin D(Z)$. The last case implies that $\rho$ has the following subpath:

\begin{table}[H]
\centering
\scalebox{1.0}{
\begin{tabular}{c}
\begin{tikzpicture}[inner sep=1mm]
\node at (0,0) (A) {$A$};
\node at (1,0) (B) {$B$};
\node at (2,0) (C) {$C$};
\path[-] (A) edge (B);
\path[-] (B) edge (C);
\end{tikzpicture}
\end{tabular}}
\end{table}

Thus, $B$ is a non-triplex node in $\rho$, which implies that $B \notin Z$ or $pa_G(B) \setminus Z \neq \emptyset$ for $\rho$ to be $Z$-open. In either case, $\epsilon^B \notin D(Z)$ (recall that $pa_{G'}(B)=pa_G(B) \cup \epsilon^B$ by construction of $G'$).

Finally, let $\rho'$ denote a $Z$-open path between $\alpha$ and $\beta$ in $G'$. We can easily transform $\rho'$ into a path $\rho$ between $\alpha$ and $\beta$ in $G$: Simply, replace every maximal subpath of $\rho'$ of the form $V_1 \la \epsilon^{V_1} - \epsilon^{V_2} - \ldots - \epsilon^{V_{n-1}} - \epsilon^{V_n} \ra V_n$ ($n \geq 2$) with $V_1 - V_2 - \ldots - V_{n-1} - V_n$. We now show that $\rho$ is $Z$-open.

First, note that all the nodes in $\rho$ are in $V$. Moreover, if $B$ is a triplex node in $\rho$, then $\rho$ must have one of the following subpaths:

\begin{table}[H]
\centering
\scalebox{1.0}{
\begin{tabular}{c}
\begin{tikzpicture}[inner sep=1mm]
\node at (0,0) (A) {$A$};
\node at (1,0) (B) {$B$};
\node at (2,0) (C) {$C$};
\path[->] (A) edge (B);
\path[<-] (B) edge (C);
\end{tikzpicture}
\begin{tikzpicture}[inner sep=1mm]
\node at (0,0) (A) {$A$};
\node at (1,0) (B) {$B$};
\node at (2,0) (C) {$C$};
\path[->] (A) edge (B);
\path[-] (B) edge (C);
\end{tikzpicture}
\begin{tikzpicture}[inner sep=1mm]
\node at (0,0) (A) {$A$};
\node at (1,0) (B) {$B$};
\node at (2,0) (C) {$C$};
\path[-] (A) edge (B);
\path[<-] (B) edge (C);
\end{tikzpicture}
\end{tabular}}
\end{table}

with $A, C \in V$. Therefore, $\rho'$ must have one of the following subpaths (specifically, if $\rho$ has the $i$-th subpath above, then $\rho'$ has the $i$-th subpath below):

\begin{table}[H]
\centering
\scalebox{1.0}{
\begin{tabular}{c}
\begin{tikzpicture}[inner sep=1mm]
\node at (0,0) (A) {$A$};
\node at (1,0) (B) {$B$};
\node at (2,0) (C) {$C$};
\path[->] (A) edge (B);
\path[<-] (B) edge (C);
\end{tikzpicture}
\begin{tikzpicture}[inner sep=1mm]
\node at (0,0) (A) {$A$};
\node at (1,0) (B) {$B$};
\node at (2,0) (C) {$\epsilon^B$};
\node at (3,0) (D) {$\epsilon^C$};
\path[->] (A) edge (B);
\path[<-] (B) edge (C);
\path[-] (D) edge (C);
\end{tikzpicture}
\begin{tikzpicture}[inner sep=1mm]
\node at (0,0) (A) {$\epsilon^B$};
\node at (1,0) (B) {$B$};
\node at (2,0) (C) {$C$};
\node at (-1,0) (D) {$\epsilon^A$};
\path[->] (A) edge (B);
\path[<-] (B) edge (C);
\path[-] (D) edge (A);
\end{tikzpicture}
\end{tabular}}
\end{table}

In either case, $B$ is a triplex node in $\rho'$ and, thus, $B \in D(Z) \cup san_{G'}(D(Z))$ for $\rho'$ to be $Z$-open. Since $Z$ contains no error node, $Z$ cannot determine any node in $V$ that is not already in $Z$. Then, $B \in D(Z)$ iff $B \in Z$. Since there is no strictly descending route from $B$ to any error node, then any strictly descending route from $B$ to a node $D \in D(Z)$ implies that $D \in V$ which, as seen, implies that $D \in Z$. Then, $B \in san_{G'}(D(Z))$ iff $B \in san_{G'}(Z)$. Moreover, $B \in san_{G'}(Z)$ iff $B \in san_{G}(Z)$ by construction of $G'$. These results together imply that $B \in Z \cup san_{G}(Z)$.

Second, if $B$ is a non-triplex node in $\rho$, then $\rho$ must have one of the following subpaths:

\begin{table}[H]
\centering
\scalebox{1.0}{
\begin{tabular}{c}
\begin{tikzpicture}[inner sep=1mm]
\node at (0,0) (A) {$A$};
\node at (1,0) (B) {$B$};
\node at (2,0) (C) {$C$};
\path[->] (A) edge (B);
\path[->] (B) edge (C);
\end{tikzpicture}
\begin{tikzpicture}[inner sep=1mm]
\node at (0,0) (A) {$A$};
\node at (1,0) (B) {$B$};
\node at (2,0) (C) {$C$};
\path[<-] (A) edge (B);
\path[->] (B) edge (C);
\end{tikzpicture}
\begin{tikzpicture}[inner sep=1mm]
\node at (0,0) (A) {$A$};
\node at (1,0) (B) {$B$};
\node at (2,0) (C) {$C$};
\path[<-] (A) edge (B);
\path[<-] (B) edge (C);
\end{tikzpicture}
\begin{tikzpicture}[inner sep=1mm]
\node at (0,0) (A) {$A$};
\node at (1,0) (B) {$B$};
\node at (2,0) (C) {$C$};
\path[<-] (A) edge (B);
\path[-] (B) edge (C);
\end{tikzpicture}
\begin{tikzpicture}[inner sep=1mm]
\node at (0,0) (A) {$A$};
\node at (1,0) (B) {$B$};
\node at (2,0) (C) {$C$};
\path[-] (A) edge (B);
\path[->] (B) edge (C);
\end{tikzpicture}
\begin{tikzpicture}[inner sep=1mm]
\node at (0,0) (A) {$A$};
\node at (1,0) (B) {$B$};
\node at (2,0) (C) {$C$};
\path[-] (A) edge (B);
\path[-] (B) edge (C);
\end{tikzpicture}
\end{tabular}}
\end{table}

with $A, C \in V$. Therefore, $\rho'$ must have one of the following subpaths (specifically, if $\rho$ has the $i$-th subpath above, then $\rho'$ has the $i$-th subpath below):

\begin{table}[H]
\centering
\scalebox{1.0}{
\begin{tabular}{c}
\begin{tikzpicture}[inner sep=1mm]
\node at (0,0) (A) {$A$};
\node at (1,0) (B) {$B$};
\node at (2,0) (C) {$C$};
\path[->] (A) edge (B);
\path[->] (B) edge (C);
\end{tikzpicture}
\begin{tikzpicture}[inner sep=1mm]
\node at (0,0) (A) {$A$};
\node at (1,0) (B) {$B$};
\node at (2,0) (C) {$C$};
\path[<-] (A) edge (B);
\path[->] (B) edge (C);
\end{tikzpicture}
\begin{tikzpicture}[inner sep=1mm]
\node at (0,0) (A) {$A$};
\node at (1,0) (B) {$B$};
\node at (2,0) (C) {$C$};
\path[<-] (A) edge (B);
\path[<-] (B) edge (C);
\end{tikzpicture}
\begin{tikzpicture}[inner sep=1mm]
\node at (0,0) (A) {$A$};
\node at (1,0) (B) {$B$};
\node at (2,0) (C) {$\epsilon^B$};
\node at (3,0) (D) {$\epsilon^C$};
\path[<-] (A) edge (B);
\path[<-] (B) edge (C);
\path[-] (D) edge (C);
\end{tikzpicture}
\begin{tikzpicture}[inner sep=1mm]
\node at (0,0) (A) {$\epsilon^B$};
\node at (1,0) (B) {$B$};
\node at (2,0) (C) {$C$};
\node at (-1,0) (D) {$\epsilon^A$};
\path[->] (A) edge (B);
\path[->] (B) edge (C);
\path[-] (D) edge (A);
\end{tikzpicture}\\
\begin{tikzpicture}[inner sep=1mm]
\node at (0,0) (A) {$\epsilon^A$};
\node at (1,0) (B) {$\epsilon^B$};
\node at (2,0) (C) {$\epsilon^C$};
\path[-] (A) edge (B);
\path[-] (B) edge (C);
\end{tikzpicture}
\end{tabular}}
\end{table}

In the first five cases, $B$ is a non-triplex node in $\rho'$ and, thus, $B \notin D(Z)$ for $\rho'$ to be $Z$-open. Since $Z$ contains no error node, $Z$ cannot determine any node in $V$ that is not already in $Z$. Then, $B \notin Z$. In the last case, $\epsilon^B$ is a non-triplex node in $\rho'$ and, thus, $\epsilon^B \notin D(Z)$ for $\rho'$ to be $Z$-open. Then, $B \notin Z$ or $pa_{G'}(B) \setminus \epsilon^B \setminus Z \ \neq \emptyset$. Then, $B \notin Z$ or $pa_{G}(B) \setminus Z \ \neq \emptyset$ (recall that $pa_{G'}(B)=pa_G(B) \cup \epsilon^B$ by construction of $G'$).

\end{proof}

\begin{proof}[{\bf Proof of Theorem \ref{the:G'G'}}]

Assume for a moment that $G'$ has no deterministic node. Note that $G'$ has no induced subgraph of the form $A \ra B - C$ with $A, B, C \in V \cup \epsilon$. Such an induced subgraph is called a flag by \citet[pp. 40-41]{Anderssonetal.2001}. They also introduce the term biflag, whose definition is irrelevant here. What is relevant here is the observation that a CG cannot have a biflag unless it has some flag. Therefore, $G'$ has no biflags. Consequently, every probability distribution that is Markovian wrt $G'$ when interpreted as an AMP CG is also Markovian wrt $G'$ when interpreted as a LWF CG and vice versa \citep[Corollary 1]{Anderssonetal.2001}. Now, note that there are Gaussian probability distributions that are faithful to $G'$ when interpreted as an AMP CG \citep[Theorem 6.1]{Levitzetal.2001} as well as when interpreted as a LWF CG \citep[Theorems 1 and 2]{Penna2011}. Therefore, $I_{AMP}(G')=I_{LWF}(G')$. We denote this independence model by $I_{NDN}(G')$.

Now, forget the momentary assumption made above that $G'$ has no deterministic node. Recall that we assumed that $D(Z)$ is the same under the AMP and the LWF interpretations of $G'$ for all $Z \subseteq V \cup \epsilon$. Recall also that, from the point of view of the separations in an AMP or LWF CG, that a node is determined by the conditioning set has the same effect as if the node were in the conditioning set. Then, $X \ci_{G'} Y | Z$ is in $I_{AMP}(G')$ iff $X \ci_{G'} Y | D(Z)$ is in $I_{NDN}(G')$ iff $X \ci_{G'} Y | Z$ is in $I_{LWF}(G')$. Then, $I_{AMP}(G')=I_{LWF}(G')$.

\end{proof}

\begin{proof}[{\bf Proof of Theorem \ref{the:G'G''}}]

Assume for a moment that $G'$ has no deterministic node. Then, $G''$ has no deterministic node either. We show below that every $Z$-open route between $\alpha$ and $\beta$ in $G'$ can be transformed into a $(Z \cup S)$-open route between $\alpha$ and $\beta$ in $G''$ and vice versa, with $\alpha, \beta \in V \cup \epsilon$. This implies that $I_{LWF}(G')=[I(G'')]_\emptyset^S$. We denote this independence model by $I_{NDN}(G')$.

First, let $\rho'$ denote a $Z$-open route between $\alpha$ and $\beta$ in $G'$. Then, we can easily transform $\rho'$ into a $(Z \cup S)$-open route $\rho''$ between $\alpha$ and $\beta$ in $G''$: Simply, replace every edge $\epsilon^A - \epsilon^B$ in $\rho'$ with $\epsilon^A \ra S_{\epsilon^A\epsilon^B} \la \epsilon^B$. To see that $\rho''$ is actually $(Z \cup S)$-open, note that every collider section in $\rho'$ is due to a subroute of the form $A \ra B \la C$ with $A, B \in V$ and $C \in V \cup \epsilon$. Then, any node that is in a collider (respectively non-collider) section of $\rho'$ is also in a collider (respectively non-collider) section of $\rho''$.

Second, let $\rho''$ denote a $(Z \cup S)$-open route between $\alpha$ and $\beta$ in $G''$. Then, we can easily transform $\rho''$ into a $Z$-open route $\rho'$ between $\alpha$ and $\beta$ in $G'$: First, replace every subroute $\epsilon^A \ra S_{\epsilon^A\epsilon^B} \la \epsilon^A$ of $\rho''$ with $\epsilon^A$ and, then, replace every subroute $\epsilon^A \ra S_{\epsilon^A\epsilon^B} \la \epsilon^B$ of $\rho''$ with $\epsilon^A - \epsilon^B$. To see that $\rho'$ is actually $Z$-open, note that every undirected edge in $\rho'$ is between two noise nodes and recall that no noise node has incoming directed edges in $G'$. Then, again every collider section in $\rho'$ is due to a subroute of the form $A \ra B \la C$ with $A, B \in V$ and $C \in V \cup \epsilon$. Then, again any node that is in a collider (respectively non-collider) section of $\rho'$ is also in a collider (respectively non-collider) section of $\rho''$.

Now, forget the momentary assumption made above that $G'$ has no deterministic node. Recall that we assumed that $D(Z)$ is the same no matter whether we are considering $G'$ or $G''$ for all $Z \subseteq V \cup \epsilon$. Recall also that, from the point of view of the separations in a LWF CG, that a node is determined by the conditioning set has the same effect as if the node were in the conditioning set. Then, $X \ci_{G''} Y | Z$ is in $[I(G'')]_\emptyset^S$ iff $X \ci_{G'} Y | D(Z)$ is in $I_{NDN}(G')$ iff $X \ci_{G'} Y | Z$ is in $I_{LWF}(G')$. Then, $I_{LWF}(G')=[I(G'')]_\emptyset^S$.

\end{proof}

\begin{proof}[{\bf Proof of Theorem \ref{the:faithfulness}}]

It suffices to replace every bidirected edge $A \aa B$ in $G$ with $A \la L_{AB} \ra B$ to create an AMP CG $\widehat{G}$, apply Theorem 6.1 by \cite{Levitzetal.2001} to conclude that there exists a regular Gaussian probability distribution $q$ that is faithful to $\widehat{G}$, and then let $p$ be the marginal probability distribution of $q$ over $V$.

\end{proof}

\begin{proof}[{\bf Proof of Corollary \ref{cor:wtc}}]

It follows from Theorem \ref{the:faithfulness} by just noting that the set of independencies in any regular Gaussian probability distribution satisfies the compositional graphoid properties \cite[Sections 2.2.2, 2.3.5 and 2.3.6]{Studeny2005}.

\end{proof}

\begin{proof}[{\bf Proof of Theorem \ref{the:pairwise1}}]

Since the independence model represented by $G$ satisfies the compositional graphoid properties by Corollary \ref{cor:wtc}, it suffices to prove that the pairwise separation base of $G$ is a subset of the independence model represented by $G$. We prove this next. Let $A, B \in V$ st $A \notin ad_G(B)$. Consider the following cases.

\begin{description}
\item[Case 1] Assume that $B \notin de_G(A)$. Then, every path between $A$ and $B$ in $G$ falls within one of the following cases.

\begin{description}
\item[Case 1.1] $A=V_1 \la V_2 \ldots V_n=B$. Then, this path is not $pa_G(A)$-open.

\item[Case 1.2] $A=V_1 \oa V_2 \ldots V_n=B$. Note that $V_2 \neq V_n$ because $A \notin ad_G(B)$. Note also that $V_2 \notin pa_G(A)$ due to the constraint C1. Then, $V_2 \ra V_3$ must be in $G$ for the path to be $pa_G(A)$-open. By repeating this reasoning, we can conclude that $A=V_1 \oa V_2 \ra V_3 \ra \ldots \ra V_n=B$ is in $G$. However, this contradicts that $B \notin de_G(A)$.

\item[Case 1.3] $A=V_1 - V_2 - \ldots - V_m \ao V_{m+1} \ldots V_n=B$. Note that $V_m \notin pa_G(A)$ due to the constraint C1. Then, this path is not $pa_G(A)$-open.

\item[Case 1.4] $A=V_1 - V_2 - \ldots - V_m \ra V_{m+1} \ldots V_n=B$. Note that $V_{m+1} \neq V_n$ because $B \notin de_G(A)$. Note also that $V_{m+1} \notin pa_G(A)$ due to the constraint C1. Then, $V_{m+1} \ra V_{m+2}$ must be in $G$ for the path to be $pa_G(A)$-open. By repeating this reasoning, we can conclude that $A=V_1 - V_2 - \ldots - V_m \ra V_{m+1} \ra \ldots \ra V_n=B$ is in $G$. However, this contradicts that $B \notin de_G(A)$.

\item[Case 1.5] $A=V_1 - V_2 - \ldots - V_n=B$. This case contradicts the assumption that $B \notin de_G(A)$.
\end{description}

\item[Case 2] Assume that $A \in de_G(B)$, $B \in de_G(A)$ and $uc_G(A)=uc_G(B)$. Then, there is an undirected path $\rho$ between $A$ and $B$ in $G$. Then, every path between $A$ and $B$ in $G$ falls within one of the following cases.

\begin{description}
\item[Case 2.1] $A=V_1 \la V_2 \ldots V_n=B$. Then, this path is not $(ne_G(A) \cup pa_G(A \cup ne_G(A)))$-open.

\item[Case 2.2] $A=V_1 \oa V_2 \ldots V_n=B$. Note that $V_2 \neq V_n$ because $A \notin ad_G(B)$. Note also that $V_2 \notin ne_G(A) \cup pa_G(A \cup ne_G(A))$ due to the constraints C1 and C2. Then, $V_2 \ra V_3$ must be in $G$ for the path to be $(ne_G(A) \cup pa_G(A \cup ne_G(A)))$-open. By repeating this reasoning, we can conclude that $A=V_1 \oa V_2 \ra V_3 \ra \ldots \ra V_n=B$ is in $G$. However, this together with $\rho$ violate the constraint C1.

\item[Case 2.3] $A=V_1 - V_2 \la V_3 \ldots V_n=B$. Then, this path is not $(ne_G(A) \cup pa_G(A \cup ne_G(A)))$-open.

\item[Case 2.4] $A=V_1 - V_2 \oa V_3 \ldots V_n=B$. Note that $V_3 \neq V_n$ due to $\rho$ and the constraints C1 and C2. Note also that $V_3 \notin ne_G(A) \cup pa_G(A \cup ne_G(A))$ due to the constraints C1 and C2. Then, $V_3 \ra V_4$ must be in $G$ for the path to be $(ne_G(A) \cup pa_G(A \cup ne_G(A)))$-open. By repeating this reasoning, we can conclude that $A=V_1 - V_2 \oa V_3 \ra \ldots \ra V_n=B$ is in $G$. However, this together with $\rho$ violate the constraint C1.

\item[Case 2.5] $A=V_1 - V_2 - V_3 \ldots V_n=B$ st $sp_G(V_2) = \emptyset$. Then, this path is not $(ne_G(A) \cup pa_G(A \cup ne_G(A)))$-open.

\item[Case 2.6] $A=V_1 - V_2 - \ldots - V_n=B$ st $sp_G(V_i) \neq \emptyset$ for all $2 \leq i \leq n-1$. Note that $V_i \in ne_G(V_1)$ for all $3 \leq i \leq n$ by the constraint C3. However, this contradicts that $A \notin ad_G(B)$.

\item[Case 2.7] $A=V_1 - V_2 - \ldots - V_m - V_{m+1} - V_{m+2} \ldots V_n=B$ st $sp_G(V_i) \neq \emptyset$ for all $2 \leq i \leq m$ and $sp_G(V_{m+1}) = \emptyset$. Note that $V_i \in ne_G(V_1)$ for all $3 \leq i \leq m+1$ by the constraint C3. Then, this path is not $(ne_G(A) \cup pa_G(A \cup ne_G(A)))$-open.

\item[Case 2.8] $A=V_1 - V_2 - \ldots - V_m - V_{m+1} \la V_{m+2} \ldots V_n=B$ st $sp_G(V_i) \neq \emptyset$ for all $2 \leq i \leq m$. Note that $V_i \in ne_G(V_1)$ for all $3 \leq i \leq m+1$ by the constraint C3. Then, this path is not $(ne_G(A) \cup pa_G(A \cup ne_G(A)))$-open.

\item[Case 2.9] $A=V_1 - V_2 - \ldots - V_m - V_{m+1} \oa V_{m+2} \ldots V_n=B$ st $sp_G(V_i) \neq \emptyset$ for all $2 \leq i \leq m$. Note that $V_{m+2} \neq V_n$ due to $\rho$ and the constraints C1 and C2. Note also that $V_{m+2} \notin ne_G(A) \cup pa_G(A \cup ne_G(A))$ due to the constraints C1 and C2. Then, $V_{m+2} \ra V_{m+3}$ must be in $G$ for the path to be $(ne_G(A) \cup pa_G(A \cup ne_G(A)))$-open. By repeating this reasoning, we can conclude that $A=V_1 - V_2 - \ldots - V_m - V_{m+1} \oa V_{m+2} \ra \ldots \ra V_n=B$ is in $G$. However, this together with $\rho$ violate the constraint C1.
\end{description}

\item[Case 3] Assume that $A \in de_G(B)$, $B \in de_G(A)$ and $uc_G(A) \neq uc_G(B)$. Then, every path between $A$ and $B$ in $G$ falls within one of the following cases.

\begin{description}
\item[Case 3.1] $A=V_1 \la V_2 \ldots V_n=B$. Then, this path is not $pa_G(A)$-open.

\item[Case 3.2] $A=V_1 \oa V_2 \ldots V_n=B$. Note that $V_2 \neq V_n$ because $A \notin ad_G(B)$. Note also that $V_2 \notin pa_G(A)$ due to the constraint C1. Then, $V_2 \ra V_3$ must be in $G$ for the path to be $pa_G(A)$-open. By repeating this reasoning, we can conclude that $A=V_1 \oa V_2 \ra V_3 \ra \ldots \ra V_n=B$ is in $G$. However, this together with the assumption that $A \in de_G(B)$ contradict the constraint C1.

\item[Case 3.3] $A=V_1 - V_2 - \ldots - V_m \ao V_{m+1} \ldots V_n=B$. Note that $V_m \notin pa_G(A)$ due to the constraint C1. Then, this path is not $pa_G(A)$-open.

\item[Case 3.4] $A=V_1 - V_2 - \ldots - V_m \ra V_{m+1} \ldots V_n=B$. Note that $V_{m+1} \neq V_n$ because, otherwise, this together with the assumption that $A \in de_G(B)$ contradict the constraint C1. Note also that $V_{m+1} \notin pa_G(A)$ due to the constraint C1. Then, $V_{m+1} \ra V_{m+2}$ must be in $G$ for the path to be $pa_G(A)$-open. By repeating this reasoning, we can conclude that $A=V_1 - V_2 - \ldots - V_m \ra V_{m+1} \ra \ldots \ra V_n=B$ is in $G$. However, this together with the assumption that $A \in de_G(B)$ contradict the constraint C1.
\end{description}

\end{description}

\end{proof}

\begin{lemma}\label{lem:aux}
Let $X$ and $Y$ denote two nodes of a MAMP CG $G$ with only one connectivity component. If $X \ci_G Y | Z$ and there is a node $C \in Z$ st $sp_G(C) \neq \emptyset$, then $X \ci_G Y | Z \setminus C$.
\end{lemma}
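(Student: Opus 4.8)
The plan is to argue by contradiction, but the decisive first step is a structural observation: a MAMP CG $G$ with only one connectivity component can have no directed edge at all. Indeed, suppose $C \ra N$ were in $G$. Since $C$ and $N$ lie in the same (and only) connectivity component, there is a path from $N$ back to $C$ all of whose edges are undirected or bidirected; prepending the edge $C \ra N$ to this path yields a cycle whose first edge is $C \ra N$ and all of whose remaining edges are undirected or bidirected, i.e.\ a semidirected cycle, contradicting C1. Hence every edge of $G$ is undirected or bidirected, so $pa_G(W) = \emptyset$ and $san_G(W) = \emptyset$ for every node $W$.

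With this in hand the separation semantics collapses. On any path $\rho$ a triplex node is exactly a node incident to a bidirected edge of $\rho$, and a non-triplex node is exactly one occurring in a subpath $A - B - C'$ with both edges undirected. Because $san_G(Z) = \emptyset$ and $pa_G(B)\setminus Z = \emptyset$ for every $B$, the condition for $\rho$ to be $Z$-open simplifies to: every bidirected-incident node of $\rho$ lies in $Z$, and every purely undirected node of $\rho$ lies outside $Z$.

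Now I would assume, for contradiction, that $X \nci_G Y | Z \setminus C$ and fix a $(Z\setminus C)$-open path $\rho$ between $X$ and $Y$; the goal is to produce a $Z$-open path, contradicting $X \ci_G Y | Z$. Passing from $Z\setminus C$ to $Z$ only adds $C$ to the conditioning set, and by the simplified criterion the only node whose status this can spoil is $C$ itself: a bidirected-incident node that lay in $Z\setminus C$ still lies in the larger set $Z$, and any undirected node other than $C$ that was outside $Z\setminus C$ is still outside $Z$. If $C$ does not occur on $\rho$, then $\rho$ is already $Z$-open and we are done. If $C$ does occur on $\rho$, it cannot be a triplex node there, since that would force $C \in (Z\setminus C)\cup san_G(Z\setminus C) = Z\setminus C$, which is false; hence $C$ occurs in a subpath $A - C - C'$ with both edges undirected. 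This is precisely where the hypothesis $sp_G(C) \neq \emptyset$ is used: by C3 the edge $A - C'$ is in $G$, so I replace $A - C - C'$ by $A - C'$ to obtain a path $\rho^*$ between $X$ and $Y$ avoiding $C$. Since the marks at $A$ and at $C'$ remain undirected after the surgery, the triplex/non-triplex status of every surviving node is unchanged, so $\rho^*$ is still $(Z\setminus C)$-open; and as $C \notin \rho^*$, the preceding observation shows $\rho^*$ is in fact $Z$-open, the desired contradiction.

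The main obstacle is really the first paragraph: recognizing that ``one connectivity component'' forces the absence of directed edges is what makes $pa_G$ and $san_G$ vanish and reduces the whole argument to the single rerouting step licensed by C3. Once that is seen, the remaining bookkeeping is routine; the only case needing a word of care is the degenerate one where $\rho$ is the three-node path $X - C - Y$, in which $\rho^*$ is the single edge $X - Y$ supplied by C3 and is trivially $Z$-open.
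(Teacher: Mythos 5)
Your proof is correct and follows essentially the same route as the paper's: assume a $(Z\setminus C)$-open path exists, argue that $C$ must occur on it as a non-triplex node flanked by two undirected edges, and use C3 to shortcut past $C$, producing a $Z$-open path and hence a contradiction. The only difference is that you make explicit the structural observation that a MAMP CG with a single connectivity component has no directed edges (so $pa_G$ and $san_G$ vanish and the openness criterion simplifies), which the paper's terser proof uses implicitly.
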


\begin{proof}

Assume to the contrary that there is a $(Z \setminus C)$-open path $\rho$ between $X$ and $Y$ in $G$. Note that $C$ must occur in $\rho$ because, otherwise, $\rho$ is $Z$-open which contradicts that $X \ci_G Y | Z$. For the same reason, $C$ must be a non-triplex node in $\rho$. Then, $D - C - E$ must be a subpath of $\rho$ and, thus, the edge $D - E$ must be in $G$ by the constraint C3, because $sp_G(C) \neq \emptyset$. Then, the path obtained from $\rho$ by replacing the subpath $D - C - E$ with the edge $D - E$ is $Z$-open. However, this contradicts that $X \ci_G Y | Z$.

\end{proof}

\begin{lemma}\label{lem:aux2}
Let $X$ and $Y$ denote two nodes of a MAMP CG $G$ with only one connectivity component. If $X \ci_G Y | Z$ then $X \ci_{cl(G)} Y | Z$.
\end{lemma}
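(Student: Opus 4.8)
The plan is to prove the implication by induction, using the compositional graphoid properties to reshape the conditioning set of a known separation until it becomes exactly $Z$. First note that $X \ci_G Y | Z$ already forces $X \notin ad_G(Y)$, since a single edge between $X$ and $Y$ would be a $Z$-open path of length one; hence one of the three members of the pairwise separation base applies to the ordered pair $(X,Y)$ (or, after symmetry, to $(Y,X)$), and this gives a starting separation $X \ci_{cl(G)} Y | W$ for a canonical set $W$ equal to $pa_G(X)$ or to $ne_G(X) \cup pa_G(X \cup ne_G(X))$. The whole task is then to transport the conditioning set from $W$ to $Z$ without leaving $cl(G)$. I would run the induction on $|V \setminus Z|$ (refined lexicographically when needed), the base case being $Z = V \setminus \{X,Y\}$.

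The two atomic moves are the following. \emph{Insertion of a free node $c \notin Z \cup \{X,Y\}$:} if I can exhibit $X \ci_{cl(G)} c | Z$, then composition gives $X \ci_{cl(G)} Y \cup c | Z$ and weak union gives $X \ci_{cl(G)} Y | Z \cup c$, so $c$ has entered the conditioning set. \emph{Deletion of such a $c$:} if both $X \ci_{cl(G)} Y | Z \cup c$ and $X \ci_{cl(G)} c | Z$ hold, then contraction gives $X \ci_{cl(G)} Y \cup c | Z$ and decomposition gives $X \ci_{cl(G)} Y | Z$, so $c$ has left it. The separation $X \ci_G Y | Z \cup c$ has strictly smaller $|V \setminus Z|$ and is therefore supplied by the induction hypothesis whenever it holds in $G$; the auxiliary statements $X \ci_{cl(G)} c | Z$ or $Y \ci_{cl(G)} c | Z$ are to be obtained either as base members or, again, from the induction hypothesis once the corresponding graph separations are verified. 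Thus everything reduces to locating, at each stage, a free node $c$ for which the needed graph separations are genuinely available.

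The hard part is precisely this availability, and it has two sources of difficulty that do not arise for ordinary undirected graphs. The first is that separation in a MAMP CG is \emph{not} monotone in the conditioning set: adjoining $c$ can \emph{open} a path, because a triplex node in $Z \cup c \cup san_G(Z \cup c)$ may now be active. So I cannot add an arbitrary node; I must argue that a \emph{safe} free node always exists, e.g. by selecting $c$ minimal/maximal in a suitable ordering derived from $san_G$ and $de_G$, so that its adjunction creates no new triplex activation and $c$ remains separated from at least one of $X, Y$ given $Z$. The second, subtler, difficulty is that a parent or neighbour of $X$ (or of $Y$) that lies outside $Z$ can never be stripped by any of the moves above, since it is adjacent to an endpoint; this is why one cannot simply peel $W$ down to $Z$, and it is the main obstacle I expect. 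The remedy is to symmetrize — deriving parts of the target from the base attached to $Y$ as well as the one attached to $X$ — and to invoke the intersection property to reconcile the two, so that such boundary nodes are never removed but rather absorbed on the opposite side.

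Lemma \ref{lem:aux} is the decisive device for the remaining degeneracy: it certifies that a conditioning node carrying a spouse is irrelevant to separation, which is exactly the configuration (an undirected section $D - C - E$ flanked by $sp_G(C) \neq \emptyset$, cf. constraint C3) in which the naive ``add a node / split off an endpoint'' argument would otherwise break. Applying it lets me assume, at the graph level, that no spouse-bearing node obstructs the chosen elimination, after which the triplex bookkeeping is controlled. The single-connectivity-component hypothesis is used throughout to keep every relevant open route inside one undirected–bidirected web, which both localizes this case analysis and is what makes Lemma \ref{lem:aux} applicable. Assembling the insertion and deletion moves along the chosen elimination order then transforms the starting base separation $X \ci_{cl(G)} Y | W$ into the desired $X \ci_{cl(G)} Y | Z$, closing the induction.
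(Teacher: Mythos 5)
Your toolbox is the right one (compositional-graphoid moves, Lemma \ref{lem:aux}, induction), but the induction is run in the wrong direction, and this creates a genuine gap. You induct on $|V \setminus Z|$, so your hypothesis covers only conditioning sets \emph{strictly larger} than $Z$. Yet your deletion move needs the auxiliary statement $X \ci_{cl(G)} c \,|\, Z$, whose conditioning set is exactly $Z$ --- the same size as the goal --- so it is supplied neither by the pairwise separation base (whose conditioning sets are the fixed canonical ones) nor by your induction hypothesis; as written the recursion is not well-founded. The paper's proof inducts on $|Z|$ \emph{downward}: it removes a node $C$ from $Z$, so every auxiliary separation it needs ($X \ci_G Y \,|\, Z \setminus C$, and $X \ci_G C \,|\, Z \setminus C$ or $Y \ci_G C \,|\, Z \setminus C$) lives at the strictly smaller level $|Z|-1$ and is delivered by the induction hypothesis, after which $C$ is reinserted by symmetry, composition and weak union.

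Second, the step you yourself flag as the hard part --- exhibiting a node $c$ for which the needed graph-level separations ``are genuinely available'' --- is exactly what you leave unproved, and the device that makes it work in the paper is one you never mention: weak transitivity of $I(G)$ (Corollary \ref{cor:wtc}). Given $X \ci_G Y \,|\, Z$ and $X \ci_G Y \,|\, Z \setminus C$ (the latter supplied by Lemma \ref{lem:aux} when $C$ has a spouse, or by a direct path argument otherwise), weak transitivity yields $X \ci_G C \,|\, Z \setminus C$ or $Y \ci_G C \,|\, Z \setminus C$, which is precisely the disjunction your moves require; an ordering ``derived from $san_G$ and $de_G$'' is not a substitute for this. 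Finally, your base case $Z = V \setminus \{X,Y\}$ is itself nontrivial and unestablished, whereas the paper's base case $|Z|=0$ is immediate (it forces $uc_G(X) \neq uc_G(Y)$, and since a one-component MAMP CG has no directed edges, the pairwise base gives $X \ci_{cl(G)} Y$ outright); the remaining anchor case $Z \subseteq uc_G(X)$ is handled by the classical pairwise-to-global theorem for undirected graphs, not by transporting the canonical conditioning set $W$ to $Z$.
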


\begin{proof}

We prove the lemma by induction on $|Z|$. If $|Z|=0$, then $uc_G(X) \neq uc_G(Y)$. Consequently, $X \ci_{cl(G)} Y$ follows from the pairwise separation base of $G$ because $X \notin ad_G(Y)$. Assume as induction hypothesis that the lemma holds for $|Z|<l$. We now prove it for $|Z|=l$. Consider the following cases.

\begin{description}

\item[Case 1] Assume that $uc_G(X) = uc_G(Y)$. Consider the following cases.

\begin{description}

\item[Case 1.1] Assume that $Z \subseteq uc_G(X)$. Then, the pairwise separation base of $G$ implies that $C \ci_{cl(G)} uc_G(X) \setminus C \setminus ne_G(C) | ne_G(C)$ for all $C \in uc_G(X)$ by repeated composition, which implies $X \ci_{cl(G)} Y | Z$ by the graphoid properties \cite[Theorem 3.7]{Lauritzen1996}. 

\item[Case 1.2] Assume that there is some node $C \in Z \setminus uc_G(X)$ st $C \aa D$ is in $G$ with $D \in uc_G(X)$ and $X \nci_G C | Z \setminus C$. Then, $Y \ci_G C | Z \setminus C$. To see it, assume the contrary. Then, $X \nci_G Y | Z \setminus C$ by weak transitivity because $X \ci_G Y | Z$. However, this contradicts Lemma \ref{lem:aux}.

Now, note that $Y \ci_G C | Z \setminus C$ implies $Y \ci_{cl(G)} C | Z \setminus C$ by the induction hypothesis. Note also that $X \ci_G Y | Z \setminus C$ by Lemma \ref{lem:aux} and, thus, $X \ci_{cl(G)} Y | Z \setminus C$ by the induction hypothesis. Then, $X \ci_{cl(G)} Y | Z$ by symmetry, composition and weak union.

\item[Case 1.3] Assume that Cases 1.1 and 1.2 do not apply. Let $E \in Z \setminus uc_G(X)$. Such a node $E$ exists because, otherwise, Case 1.1 applies. Moreover, $X \ci_G E | Z \setminus E$ because, otherwise, there is some node $C$ that satisfies the conditions of Case 1.2. Note also that $X \ci_G Y | Z \setminus E$. To see it, assume the contrary. Then, there is a $(Z \setminus E)$-open path between $X$ and $Y$ in $G$. Note that $E$ must occur in the path because, otherwise, the path is $Z$-open, which contradicts that $X \ci_G Y | Z$. However, this implies that $X \nci_G E | Z \setminus E$, which is a contradiction.

Now, note that $X \ci_G E | Z \setminus E$ and $X \ci_G Y | Z \setminus E$ imply $X \ci_{cl(G)} E | Z \setminus E$ and $X \ci_{cl(G)} Y | Z \setminus E$ by the induction hypothesis. Then, $X \ci_{cl(G)} Y | Z$ by composition and weak union.

\end{description}

\item[Case 2] Assume that $uc_G(X) \neq uc_G(Y)$. Consider the following cases.

\begin{description}

\item[Case 2.1] Assume that there is some node $C \in Z$ st $C \aa X$ is in $G$. Then, $Y \ci_G C | Z \setminus C$ because, otherwise, $X \nci_G Y | Z$. Then, $Y \ci_{cl(G)} C | Z \setminus C$ by the induction hypothesis. Note that $X \ci_G Y | Z \setminus C$ by Lemma \ref{lem:aux} and, thus, $X \ci_{cl(G)} Y | Z \setminus C$ by the induction hypothesis. Then, $X \ci_{cl(G)} Y | Z$ by symmetry, composition and weak union.

\item[Case 2.2] Assume that there is some node $C \in Z \cap uc_G(X)$ st $sp_G(C) \neq \emptyset$, and $X \ci_G C | Z \setminus C$. Then, $X \ci_{cl(G)} C | Z \setminus C$ by the induction hypothesis. Note that $X \ci_G Y | Z \setminus C$ by Lemma \ref{lem:aux} and, thus, $X \ci_{cl(G)} Y | Z \setminus C$ by the induction hypothesis. Then, $X \ci_{cl(G)} Y | Z$ by composition and weak union.

\item[Case 2.3] Assume that there is some node $C \in Z \cap uc_G(X)$ st $sp_G(C) \neq \emptyset$, and $X \nci_G C | Z \setminus C$. Then, $Y \ci_G C | Z \setminus C$. To see it, assume the contrary. Then, $X \nci_G Y | Z \setminus C$ by weak transitivity because $X \ci_G Y | Z$. However, this contradicts Lemma \ref{lem:aux}.

Now, note that $Y \ci_G C | Z \setminus C$ implies $Y \ci_{cl(G)} C | Z \setminus C$ by the induction hypothesis. Note also that $X \ci_G Y | Z \setminus C$ by Lemma \ref{lem:aux} and, thus, $X \ci_{cl(G)} Y | Z \setminus C$ by the induction hypothesis. Then, $X \ci_{cl(G)} Y | Z$ by composition and weak union.

\item[Case 2.4] Assume that Cases 2.1-2.3 do not apply. Let $V_1, \ldots, V_m$ be the nodes in $Z \cap uc_G(X)$. Let $W_1, \ldots, W_n$ be the nodes in $Z \setminus uc_G(X)$. Then, 

\begin{enumerate}

\item $X \ci_{cl(G)} Y$ follows from the pairwise separation base of $G$ because $uc_G(X) \neq uc_G(Y)$ and $X \notin ad_G(Y)$. Moreover, for all $1 \leq i \leq m$

\item $V_i \ci_{cl(G)} Y$ follows from the pairwise separation base of $G$ because $V_i \notin uc_G(Y)$ and $V_i \notin ad_G(Y)$, since $sp_G(V_i) = \emptyset$ because, otherwise, Case 2.2 or 2.3 applies. Moreover, for all $1 \leq j \leq n$

\item $X \ci_{cl(G)} W_j$ follows from the pairwise separation base of $G$ because $W_j \notin uc_G(X)$ and $W_j \notin ad_G(X)$, since $W_j \aa X$ is not in $G$ because, otherwise, Case 2.1 applies. Moreover, for all $1 \leq i \leq m$ and $1 \leq j \leq n$

\item $V_i \ci_{cl(G)} W_j$ follows from the pairwise separation base of $G$ because $uc_G(V_i) \neq uc_G(W_j)$ and $V_i \notin ad_G(W_j)$, since $sp_G(V_i) = \emptyset$ because, otherwise, Case 2.2 or 2.3 applies. Then, 

\item $X \ci_{cl(G)} Y | Z$ by repeated symmetry, composition and weak union.

\end{enumerate}

\end{description}

\end{description}

\end{proof}

We sort the connectivity components of a MAMP CG $G$ as $K_1, \ldots, K_n$ st if $X \ra Y$ is in $G$, then $X \in K_i$ and $Y \in K_j$ with $i<j$. It is worth mentioning that, in the proofs below, we make use of the fact that the independence model represented by $G$ satisfies weak transitivity by Corollary \ref{cor:wtc}. Note, however, that this property is not used in the construction of $cl(G)$. In the expressions below, we give equal precedence to the operators set minus, set union and set intersection.

\begin{lemma}\label{lem:same}
Let $X$ and $Y$ denote two nodes of a MAMP CG $G$ st $X, Y \in K_m$, $X \ci_G Y | Z$ and $Z \cap (K_{m+1} \cup \ldots \cup K_n) = \emptyset$. Let $H$ denote the subgraph of $G$ induced by $K_m$. Let $W = Z \cap K_m$. Let $W_1$ denote a minimal (wrt set inclusion) subset of $W$ st $X \ci_H W \setminus W_1 | W_1$. Then, $X \ci_{cl(G)} Y | Z \cup pa_G(X \cup W_1)$.
\end{lemma}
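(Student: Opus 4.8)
The plan is to reduce the statement to the single-component Lemma~\ref{lem:aux2} applied to $H=G_{K_m}$, and then to re-introduce the directed structure (the set $pa_G(X\cup W_1)$) and the conditioning nodes that lie in earlier components. The first thing I would record is that $H$ has no directed edge: a directed edge inside a connectivity component of $G$ would, together with an undirected/bidirected path joining its endpoints inside $K_m$, form a semidirected cycle and violate C1. Hence $pa_H(\cdot)=\emptyset$ and $san_H(\cdot)=\emptyset$, and the unique connectivity component of $H$ is $K_m$, so Lemma~\ref{lem:aux2} applies to $H$.

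Next I would establish the purely undirected separation $X \ci_H Y\cup(W\setminus W_1)\mid W_1$. The part $X \ci_H W\setminus W_1\mid W_1$ is the defining property of $W_1$. For the $Y$ part I would first show $X \ci_H Y\mid W$: any $W$-open path $\rho$ between $X$ and $Y$ in $H$ lies entirely in $K_m$, and since $H$ and $G$ induce the same edges on $K_m$ the triplex/non-triplex status of each internal node of $\rho$ is the same in both graphs; as the triplex nodes of $\rho$ lie in $W\subseteq Z\subseteq Z\cup san_G(Z)$ and its non-triplex nodes lie outside $W=Z\cap K_m$ and hence outside $Z$, such a $\rho$ is also $Z$-open in $G$, contradicting $X \ci_G Y\mid Z$. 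Thus $X \ci_H Y\mid W$, and since the independence model of $H$ is a compositional graphoid by Corollary~\ref{cor:wtc}, contraction applied to $X \ci_H Y\mid W_1\cup(W\setminus W_1)$ and $X \ci_H W\setminus W_1\mid W_1$ yields $X \ci_H Y\cup(W\setminus W_1)\mid W_1$. Lemma~\ref{lem:aux2} then promotes this to $X \ci_{cl(H)} Y\cup(W\setminus W_1)\mid W_1$.

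The crux is to transfer this $cl(H)$-statement into $cl(G)$ while adjoining exactly the parents $pa_G(X\cup W_1)$, and this is the step I expect to be the main obstacle. The mechanism I would exploit is twofold: every element of the pairwise separation base of $H$ is the corresponding element of the base of $G$ with its parent term deleted (for non-adjacent $C,D\in K_m$ one compares $C \ci D\mid ne_G(C)$ with $C \ci D\mid ne_G(C)\cup pa_G(C\cup ne_G(C))$, and $C \ci D\mid \emptyset$ with $C \ci D\mid pa_G(C)$); and adjoining a \emph{fixed} set $T$ to the conditioning side of every statement commutes with symmetry, decomposition, weak union, contraction, intersection and composition, so it carries any compositional-graphoid derivation through verbatim. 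The difficulty is that distinct base elements carry distinct parent sets, so $T$ cannot be chosen uniformly for free; the missing parents must be re-introduced via the first-type base separations $X \ci_{cl(G)} B\mid pa_G(X)$ (valid whenever $B\notin ad_G(X)$ and $B\notin de_G(X)$), and it is the minimality of $W_1$ that should confine the parents actually needed to $pa_G(X\cup W_1)$ rather than to the parents of the entire undirected neighbourhood. The output of this step is $X \ci_{cl(G)} Y\cup(W\setminus W_1)\mid W_1\cup pa_G(X\cup W_1)$.

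Finally I would reassemble the full conditioning set. Write $P=pa_G(X\cup W_1)$ and $Z'=Z\cap(K_1\cup\ldots\cup K_{m-1})$, so that $Z\cup P=Z'\cup W_1\cup(W\setminus W_1)\cup P$. For every $B\in Z'\setminus(W_1\cup P)$ one has $B\notin de_G(X)$ (descendants of $X$ lie in $K_m$ and later components) and $B\notin ad_G(X)$ (the only possible edge between $B\in K_1\cup\ldots\cup K_{m-1}$ and $X\in K_m$ is $B\ra X$, which would force $B\in pa_G(X)\subseteq P$), so the first type of pairwise base gives $X \ci_{cl(G)} B\mid pa_G(X)$. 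Aligning conditioning sets and then using composition and weak union exactly in the manner of Case~2.4 in the proof of Lemma~\ref{lem:aux2}, I would move $W\setminus W_1$ and $Z'$ onto the conditioning side, obtaining $X \ci_{cl(G)} Y\mid W_1\cup(W\setminus W_1)\cup Z'\cup P=X \ci_{cl(G)} Y\mid Z\cup pa_G(X\cup W_1)$, as required. The bookkeeping in this last step is of the same nature as the transfer above (adjoining earlier-component parents to conditioning sets), so the genuine work is concentrated in the third paragraph.
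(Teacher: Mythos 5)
Your setup is sound: $H$ indeed has no directed edges and a single connectivity component, Lemma \ref{lem:aux2} applies to it, and the derivation of $X \ci_H Y \cup (W \setminus W_1) \mid W_1$ is fine (the paper asserts $X \ci_H Y \mid W_1$ more tersely but to the same effect). The final paragraph's treatment of $Z \cap (K_1 \cup \ldots \cup K_{m-1})$ is also essentially the paper's steps, although you would need the base separations $C \ci B \mid pa_G(C)$ for every $C \in X \cup W_1$, not only for $C = X$, in order to compose $Z'$ onto a conditioning set that already contains $W_1$.

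The genuine gap is in your third paragraph: you correctly identify the transfer from $cl(H)$ to $cl(G)$ as the crux, note that the base elements of $G$ carry non-uniform parent sets so a fixed $T$ cannot be adjoined for free, and then do not resolve this; the resolution you gesture at --- that ``the minimality of $W_1$ should confine the parents actually needed to $pa_G(X \cup W_1)$'' --- is not the right mechanism. Minimality of $W_1$ plays no role in this lemma at all (it is used only in Lemma \ref{lem:aux3} and later in Theorem \ref{the:pairwise2} to discharge the extra conditioning on $pa_G(X \cup W_1)$). The paper's device is to adjoin the \emph{uniform} set $pa_G(K_m)$ to every element of the pairwise base of $H$, obtaining an ``extended separation base''; since the adjoined set is the same for every statement, the compositional-graphoid derivation carries over verbatim, yielding $X \ci W_2 \mid W_1 \cup pa_G(K_m)$ and $X \ci Y \mid W_1 \cup pa_G(K_m)$. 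One then shows separately that each extended base element lies in $cl(G)$, using that for $A \in K_m$ every node of $pa_G(K_m) \setminus pa_G(A)$ is a non-adjacent non-descendant of $A$ (by C1 and the component ordering), so $A \ci pa_G(K_m) \setminus pa_G(A) \mid pa_G(A)$ follows from first-type base separations by repeated composition. Only \emph{after} this does one prune $pa_G(K_m)$ down to $pa_G(X \cup W_1)$, by contracting with $X \ci_{cl(G)} pa_G(K_m) \setminus pa_G(X \cup W_1) \mid W_1 \cup pa_G(X \cup W_1)$ (itself assembled from the same first-type base separations for the nodes of $X \cup W_1$) and then decomposing. Without this uniformization-then-pruning argument, your proof does not actually produce the claimed statement.
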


\begin{proof}

We define the restricted separation base of $G$ as the following set of separations:

\begin{itemize}
\item[R1.] $A \ci B | ne_G(A)$ for all $A, B \in K_m$ st $A \notin ad_G(B)$ and $uc_G(A)=uc_G(B)$, and
\item[R2.] $A \ci B$ for all $A, B \in K_m$ st $A \notin ad_G(B)$ and $uc_G(A) \neq uc_G(B)$.
\end{itemize}

We define the extended separation base of $G$ as the following set of separations:

\begin{itemize}
\item[E1.] $A \ci B | ne_G(A) \cup pa_G(K_m)$ for all $A, B \in K_m$ st $A \notin ad_G(B)$ and $uc_G(A)=uc_G(B)$, and
\item[E2.] $A \ci B | pa_G(K_m)$ for all $A, B \in K_m$ st $A \notin ad_G(B)$ and $uc_G(A) \neq uc_G(B)$.
\end{itemize}

Note that the separations E1 (resp. E2) are in one-to-one correspondence with the separations R1 (resp. R2) st the latter can be obtained from the former by adding $pa_G(K_m)$ to the conditioning sets. Let $W_2=W \setminus W_1$. Then, $X \ci_H W_2 | W_1$ implies that $X \ci_{cl(H)} W_2 | W_1$ by Lemma \ref{lem:aux2}. Note also that the pairwise separation base of $H$ coincides with the restricted separation base of $G$. Then, $X \ci_{cl(H)} W_2 | W_1$ implies that $X \ci W_2 | W_1$ can be derived from the restricted separation base of $G$ by applying the compositional graphoid properties. We can now reuse this derivation to derive $X \ci W_2 | W_1 \cup pa_G(K_m)$ from the extended separation base of $G$ by applying the compositional graphoid properties: It suffices to apply the same sequence of properties but replacing any separation of the restricted separation base in the derivation with the corresponding separation of the extended separation base. In fact, $X \ci W_2 | W_1 \cup pa_G(K_m)$ is not only in the closure of the extended separation base of $G$ but also in the closure of the pairwise separation base of $G$, i.e. $X \ci_{cl(G)} W_2 | W_1 \cup pa_G(K_m)$. To show it, it suffices to show that the extended separation base is in the closure of the pairwise separation base. Specifically, consider any $A, B \in K_m$ st $A \notin ad_G(B)$ and $uc_G(A) \neq uc_G(B)$. Then,

\begin{enumerate}

\item $A \ci_{cl(G)} B | pa_G(A)$ follows from the pairwise separation base of $G$, and

\item $A \ci_{cl(G)} pa_G(K_m) \setminus pa_G(A) | pa_G(A)$ follows from the pairwise separation base of $G$ by repeated composition. Then,

\item $A \ci_{cl(G)} B | pa_G(K_m)$ by composition on (1) and (2), and weak union.

\end{enumerate}

Now, consider any $A, B \in K_m$ st $A \notin ad_G(B)$ and $uc_G(A) = uc_G(B)$. Then,

\begin{enumerate}\setcounter{enumi}{3}

\item $A \ci_{cl(G)} B | ne_G(A) \cup pa_G(A \cup ne_G(A))$ follows from the pairwise separation base of $G$. Moreover, for any $C \in A \cup ne_G(A)$

\item $C \ci_{cl(G)} pa_G(K_m) \setminus pa_G(C) | pa_G(C)$ follows from the pairwise separation base of $G$ by repeated composition. Then,

\item $C \ci_{cl(G)} pa_G(K_m) \setminus pa_G(A \cup ne_G(A)) | pa_G(A \cup ne_G(A))$ by weak union. Then,

\item $A \ci_{cl(G)} pa_G(K_m) \setminus pa_G(A \cup ne_G(A)) | ne_G(A) \cup pa_G(A \cup ne_G(A))$ by repeated symmetry, composition and weak union. Then,

\item $A \ci_{cl(G)} B | ne_G(A) \cup pa_G(K_m)$ by composition on (4) and (7), and weak union.

\end{enumerate}

Note that $X \ci_H Y | W_1$ because, otherwise, $X \nci_G Y | Z$ which is a contradiction. Then, we can repeat the reasoning above to show that $X \ci_{cl(G)} Y | W_1 \cup pa_G(K_m)$. Then, $X \ci_{cl(G)} Y \cup W_2 | W_1 \cup pa_G(K_m)$ by composition on $X \ci_{cl(G)} W_2 | W_1 \cup pa_G(K_m)$. Finally, we show that this implies that $X \ci_{cl(G)} Y | Z \cup pa_G(X \cup W_1)$. Specifically,

\begin{enumerate}\setcounter{enumi}{8}

\item $X \ci_{cl(G)} Y \cup W_2 | W_1 \cup pa_G(K_m)$ as shown above. Moreover, for any $C \in X \cup W_1$

\item $C \ci_{cl(G)} pa_G(K_m) \setminus pa_G(C) | pa_G(C)$ follows from the pairwise separation base of $G$ by repeated composition. Then,

\item $C \ci_{cl(G)} pa_G(K_m) \setminus pa_G(X \cup W_1) | pa_G(X \cup W_1)$ by weak union. Then,

\item $X \ci_{cl(G)} pa_G(K_m) \setminus pa_G(X \cup W_1) | W_1 \cup pa_G(X \cup W_1)$ by repeated symmetry, composition and weak union. Then,

\item $X \ci_{cl(G)} Y \cup W_2 | W_1 \cup pa_G(X \cup W_1)$ by contraction on (9) and (12), and decomposition. Moreover, for any $C \in X \cup W_1$

\item $C \ci_{cl(G)} Z \setminus W \cup pa_G(X \cup W_1) \setminus pa_G(C) | pa_G(C)$ follows from the pairwise separation base of $G$ by repeated composition. Then,

\item $C \ci_{cl(G)} Z \setminus W \setminus pa_G(X \cup W_1) | pa_G(X \cup W_1)$ by weak union. Then,

\item $X \ci_{cl(G)} Z \setminus W \setminus pa_G(X \cup W_1) | W_1 \cup pa_G(X \cup W_1)$ by repeated symmetry, composition and weak union. Then,

\item $X \ci_{cl(G)} Y | Z \cup pa_G(X \cup W_1)$ by composition on (13) and (16), and weak union.

\end{enumerate}

\end{proof}

\begin{lemma}\label{lem:aux3}
Let $X$ and $Y$ denote two nodes of a MAMP CG $G$ st $Y \in K_1 \cup \ldots \cup K_m$, $X \in K_m$ and $X \ci_G Y | Z$. Let $H$ denote the subgraph of $G$ induced by $K_m$. Let $W = Z \cap K_m$. Let $W_1$ denote a minimal (wrt set inclusion) subset of $W$ st $X \ci_H W \setminus W_1 | W_1$. Then, $X \nci_G C | Z$ for all $C \in pa_G(X \cup W_1) \setminus Z$.
\end{lemma}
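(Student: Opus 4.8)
The plan is to prove $X \nci_G C | Z$ by exhibiting an explicit $Z$-open path between $X$ and $C$ in $G$. Fix $C \in pa_G(X \cup W_1) \setminus Z$. Since $C$ is a parent of some node of $X \cup W_1$, either $C \ra X$ is in $G$, or $C \ra D$ is in $G$ for some $D \in W_1$. In the first case the argument is immediate: the length-one path $X \la C$ has no internal node and is therefore vacuously $Z$-open. So the work is all in the second case, where I would build the path by running inside $H$ from $X$ to $D$ and then stepping out to $C$ along $D \la C$.

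The key preliminary observations concern $H$. Because the connectivity components are ordered so that every directed edge goes from a lower-indexed to a strictly higher-indexed component, $H = G_{K_m}$ contains no directed edges, whence $pa_H(\cdot) = \emptyset$ and $san_H(\cdot) = \emptyset$. Consequently, in any open path in $H$ every triplex node must actually lie in the conditioning set (there is no $san_H$ to help), and every conditioned non-triplex node $B$ must sit on a subpath $A - B - E$ with both incident edges undirected and $sp_H(B) \neq \emptyset$ (since $pa_H(B) = \emptyset$, only the spouse clause of the exception can fire).

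Next I would produce a $W_1$-open path $\rho$ in $H$ from $X$ to $D$, using the minimality of $W_1$. Since $W_1 \setminus D \subsetneq W_1$, minimality gives $X \nci_H (W \setminus W_1) \cup D | W_1 \setminus D$, so there is a $(W_1 \setminus D)$-open path from $X$ to some $F \in (W \setminus W_1) \cup D$. If $F = D$ this path is already $W_1$-open, because adding the endpoint $D$ to the conditioning set cannot alter the status of any interior node. If $F \in W \setminus W_1$, the path cannot be $W_1$-open (as $X \ci_H W \setminus W_1 | W_1$), so $D$ must occur on it, and by the triplex observation above $D$ occurs as a non-triplex node whose sole obstruction under $W_1$ is its membership in $W_1$; truncating the path at $D$ then gives the desired $W_1$-open $\rho$. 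Moreover any such $\rho$ automatically avoids $W \setminus W_1$ in its interior: an interior node of $\rho$ in $W \setminus W_1$ would, by the same triplex observation, be non-triplex, and truncating $\rho$ there would produce a $W_1$-open path from $X$ into $W \setminus W_1$, contradicting $X \ci_H W \setminus W_1 | W_1$.

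Finally I would append $D \la C$ to $\rho$ to obtain a path $\rho^*$ between $X$ and $C$; this is a genuine path since $C$ lies in $K_1 \cup \ldots \cup K_{m-1}$, hence off $K_m$ and off $\rho$. To verify $\rho^*$ is $Z$-open I check each interior node. The nodes inherited from $\rho$ retain their local configuration and hence their triplex status: a triplex node of $\rho$ lies in $W_1 \subseteq Z$ and is open; a non-triplex node lies either outside $W = Z \cap K_m$ and hence outside $Z$, or inside $W_1$ on an undirected subpath with $sp_H(B) \neq \emptyset$, in which case $sp_H(B) \subseteq sp_G(B)$ makes the undirected-spouse exception fire under $Z$ as well. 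The junction $D$ acquires the arrowhead of $C \ra D$ alongside its undirected or bidirected $\rho$-edge, so it becomes a triplex node of $\rho^*$ lying in $W_1 \subseteq Z$, and is therefore open; the endpoints $X$ and $C$ carry no condition. Hence $\rho^*$ is $Z$-open and $X \nci_G C | Z$. I expect the main obstacle to be the construction of $\rho$ and the argument that it avoids $W \setminus W_1$; this is precisely where the minimality of $W_1$ and the vanishing of $pa_H$ and $san_H$ are essential, whereas the subsequent lift to $G$ turns only on $sp_H(B) \subseteq sp_G(B)$ and on $D$ becoming a triplex node once the arrowhead from $C$ is attached.
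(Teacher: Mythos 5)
Your argument is correct, and its overall strategy coincides with the paper's: both exploit the minimality of $W_1$ to produce an open path from $X$ to $D$ inside $K_m$ and then append the edge $C \ra D$, which turns $D$ into a triplex node lying in $Z$. The execution of the key step differs, though. The paper derives $X \nci_H D | W \setminus D$ in one stroke via the intersection property (if $X \ci_H D | W \setminus D$ held, combining it with $X \ci_H W \setminus W_1 | W_1$ by intersection would show that $W_1 \setminus D$ already works, contradicting minimality); the resulting path is conditioned on all of $W \setminus D$, so its interior automatically avoids $W \setminus W_1$, and the paper then simply asserts the extension to a $Z$-open path in $G$. You instead invoke minimality directly to get $X \nci_H (W \setminus W_1) \cup D \,|\, W_1 \setminus D$ and recover a $W_1$-open path to $D$ by truncation, which obliges you to argue separately that its interior avoids $W \setminus W_1$. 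What your route buys is that it is purely path-theoretic --- it never appeals to the graphoid properties of $I(H)$, i.e.\ to Corollary \ref{cor:wtc} applied to the induced subgraph --- and it makes explicit the final lifting step that the paper leaves implicit (the vanishing of $pa_H$ and $san_H$, the survival of the spouse exception from $H$ to $G$ via $sp_H(B) \subseteq sp_G(B)$, and $D$ acquiring triplex status once the arrowhead from $C$ is attached). What the paper's route buys is brevity: one application of intersection replaces your truncation and avoidance arguments.
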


\begin{proof}

Note that $X \nci_H D | W \setminus D$ for all $D \in W_1$. To see it, assume the contrary. Then, $X \ci_H D | W \setminus D$ and $X \ci_H W \setminus W_1 | W_1$ imply $X \ci_H W \setminus W_1 \cup D | W_1 \setminus D$ by intersection, which contradicts the definition of $W_1$. Finally, note that $X \nci_H D | W \setminus D$ implies that there is a $(W \setminus D)$-open path between $X$ and $D$ in $G$ whose all nodes are in $K_m$. Then, $X \nci_G C | Z$ for all $C \in pa_G(X \cup W_1) \setminus Z$.

\end{proof}

\begin{lemma}\label{lem:different}
Let $X$ and $Y$ denote two nodes of a MAMP CG $G$ st $Y \in K_1 \cup \ldots \cup K_{m-1}$, $X \in K_m$, $X \ci_G Y | Z$ and $Z \cap (K_{m+1} \cup \ldots \cup K_n) = \emptyset$. Let $H$ denote the subgraph of $G$ induced by $K_m$. Let $W = Z \cap K_m$. Let $W_1$ denote a minimal (wrt set inclusion) subset of $W$ st $X \ci_H W \setminus W_1 | W_1$. Then, $X \ci_{cl(G)} Y | Z \cup pa_G(X \cup W_1)$.
\end{lemma}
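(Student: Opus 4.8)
The plan is to mirror the proof of Lemma \ref{lem:same} as closely as possible, since the two statements differ only in that $Y$ now lies in a strictly earlier component ($Y \in K_1 \cup \ldots \cup K_{m-1}$) rather than in $K_m$. The entire within-component part of that proof transfers unchanged: I would set up the restricted and extended separation bases of $G$, verify that the extended base is contained in $cl(G)$, and then — since $W_2 = W \setminus W_1$ satisfies $X \ci_H W_2 | W_1$ by the definition of $W_1$ and hence $X \ci_{cl(H)} W_2 | W_1$ by Lemma \ref{lem:aux2} — reuse that derivation through the extended base to obtain $X \ci_{cl(G)} W_2 | W_1 \cup pa_G(K_m)$. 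None of this mentions $Y$, so it carries over word for word.

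The only step of Lemma \ref{lem:same} that actually uses $Y \in K_m$ is the derivation of $X \ci_{cl(G)} Y | W_1 \cup pa_G(K_m)$, obtained there by lifting the within-$H$ separation $X \ci_H Y | W_1$; this is unavailable here because $Y \notin K_m$. I would replace it as follows. For every $A \in \{X\} \cup W_1$ (all in $K_m$) the node $Y$ satisfies $Y \notin de_G(A)$, because descendants of a node of $K_m$ lie in $K_m$ or in later components while $Y$ is earlier, and $A \notin ad_G(Y)$, because an edge between the earlier node $Y$ and the $K_m$-node $A$ could only be $Y \ra A$. Hence the first clause of the pairwise separation base gives $A \ci_{cl(G)} Y | pa_G(A)$. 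Since each $P \in pa_G(K_m) \setminus pa_G(A)$ is likewise a non-adjacent non-descendant of $A$, repeated composition of base separations gives $A \ci_{cl(G)} pa_G(K_m) \setminus pa_G(A) | pa_G(A)$; composing the two statements and applying weak union (using $pa_G(A) \subseteq pa_G(K_m)$) yields $A \ci_{cl(G)} Y | pa_G(K_m)$. Now comes the crucial move: I collect these over all $A$. By symmetry and repeated composition, $Y \ci_{cl(G)} \{X\} \cup W_1 | pa_G(K_m)$, and a single weak union returning $W_1$ to the conditioning set (followed by symmetry) gives $X \ci_{cl(G)} Y | W_1 \cup pa_G(K_m)$.

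With both separations available, composition yields $X \ci_{cl(G)} Y \cup W_2 | W_1 \cup pa_G(K_m)$, which is exactly the statement that the proof of Lemma \ref{lem:same} reaches just before its final block. From there the remaining steps of that proof — contracting away $pa_G(K_m) \setminus pa_G(X \cup W_1)$ and then adjoining the earlier-component nodes $Z \setminus W$, all via base separations of the nodes of $X \cup W_1$ against their non-adjacent non-descendants — apply verbatim to produce the desired $X \ci_{cl(G)} Y | Z \cup pa_G(X \cup W_1)$.

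The main obstacle is precisely the derivation of $X \ci_{cl(G)} Y | W_1 \cup pa_G(K_m)$. The nodes of $W_1$ sit in the same component as $X$ and are in general dependent on it (compare Lemma \ref{lem:aux3}), so they cannot be forced into the conditioning set by ``composition then weak union'' applied to $X$ alone; the device of first separating the whole set $\{X\} \cup W_1$ from $Y$ and only afterwards weak-unioning $W_1$ back into the conditioning set is what circumvents this. Two points require care. First, every $A \in \{X\} \cup W_1$ must be non-adjacent to $Y$; this uses the hypothesis $Y \notin pa_G(X \cup W_1)$, which is needed anyway for the conclusion to be a legitimate (disjoint) separation. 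Second, $Y$ itself must stay out of the conditioning set, so in the case where $Y$ happens to be a parent of some other node of $K_m$ one runs the same argument with $pa_G(X \cup W_1)$ in place of $pa_G(K_m)$ throughout and enters the proof of Lemma \ref{lem:same} one step later, at the point where its conditioning set has already been reduced to $W_1 \cup pa_G(X \cup W_1)$.
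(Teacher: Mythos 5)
Your proposal is correct and follows essentially the same route as the paper: the paper's steps for the new part are exactly your ``crucial move'' (derive $C \ci_{cl(G)} Y \cup pa_G(K_m) \setminus pa_G(C) \,|\, pa_G(C)$ from the first clause of the pairwise base for each $C \in X \cup W_1$, weak-union to condition on $pa_G(K_m)$, then collect over $C$ by symmetry, composition and weak union to get $X \ci_{cl(G)} Y \,|\, W_1 \cup pa_G(K_m)$), after which it composes with $X \ci_{cl(G)} W_2 \,|\, W_1 \cup pa_G(K_m)$ from Lemma \ref{lem:same} and finishes with the same contraction/composition steps. The only difference is that the paper does not merely assume $Y \notin pa_G(X \cup W_1)$ but derives it from Lemma \ref{lem:aux3} (if $Y$ were in $pa_G(X \cup W_1) \setminus Z$ then $X \nci_G Y \,|\, Z$, a contradiction), which is the clean way to discharge the point you flag at the end.
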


\begin{proof}

Let $W_2=W \setminus W_1$. Note that $X \nci_G C | Z$ for all $C \in pa_G(X \cup W_1) \setminus Z$ by Lemma \ref{lem:aux3}, because $Y \in K_1 \cup \ldots \cup K_{m-1}$, $X \in K_m$ and $X \ci_G Y | Z$. Then, $Y \notin pa_G(X \cup W_1)$ because, otherwise, $X \nci_G Y | Z$ which is a contradiction. Moreover, for any $C \in X \cup W_1$

\begin{enumerate}

\item $C \ci_{cl(G)} Y \cup pa_G(K_m) \setminus pa_G(C) | pa_G(C)$ follows from the pairwise separation base of $G$ by repeated composition. Then,

\item $C \ci_{cl(G)} Y | pa_G(K_m)$ by weak union. Then,

\item $X \ci_{cl(G)} Y | W_1 \cup pa_G(K_m)$ by repeated symmetry, composition and weak union. Moreover,

\item $X \ci_{cl(G)} W_2 | W_1 \cup pa_G(K_m)$ as shown in the third paragraph of the proof of Lemma \ref{lem:same}. Then,

\item $X \ci_{cl(G)} Y \cup W_2 | W_1 \cup pa_G(K_m)$ by composition on (3) and (4). Moreover, for any $C \in X \cup W_1$

\item $C \ci_{cl(G)} pa_G(K_m) \setminus pa_G(C) | pa_G(C)$ follows from the pairwise separation base of $G$ by repeated composition. Then,

\item $C \ci_{cl(G)} pa_G(K_m) \setminus pa_G(X \cup W_1) | pa_G(X \cup W_1)$ by weak union. Then,

\item $X \ci_{cl(G)} pa_G(K_m) \setminus pa_G(X \cup W_1) | W_1 \cup pa_G(X \cup W_1)$ by repeated symmetry, composition and weak union. Then,

\item $X \ci_{cl(G)} Y \cup W_2 | W_1 \cup pa_G(X \cup W_1)$ by contraction on (5) and (8), and decomposition. Moreover, for any $C \in X \cup W_1$

\item $C \ci_{cl(G)} Z \setminus W \cup pa_G(X \cup W_1) \setminus pa_G(C) | pa_G(C)$ follows from the pairwise separation base of $G$ by repeated composition. Then,

\item $C \ci_{cl(G)} Z \setminus W \setminus pa_G(X \cup W_1) | pa_G(X \cup W_1)$ by weak union. Then,

\item $X \ci_{cl(G)} Z \setminus W \setminus pa_G(X \cup W_1) | W_1 \cup pa_G(X \cup W_1)$ by repeated symmetry, composition and weak union. Then,

\item $X \ci_{cl(G)} Y | Z \cup pa_G(X \cup W_1)$ by composition on (9) and (12), and weak union.

\end{enumerate}

\end{proof}

\begin{proof}[{\bf Proof of Theorem \ref{the:pairwise2}}]

Since the independence model induced by $G$ satisfies the decomposition property and $cl(G)$ satisfies the composition property, it suffices to prove the theorem for $|X|=|Y|=1$. Moreover, assume without loss of generality that $Y \in K_1 \cup \ldots \cup K_m$ and $X \in K_m$. We prove the theorem by induction on $|Z|$. The theorem holds for $|Z|=0$ and $m=1$ by Lemma \ref{lem:same}, because $X, Y \in K_1$, $X \ci_G Y | Z$, $Z \cap (K_2 \cup \ldots \cup K_n) = \emptyset$ and $pa_G(X \cup W_1) \setminus Z=\emptyset$. Assume as induction hypothesis that the theorem holds for $|Z|=0$ and $m<l$. We now prove it for $|Z|=0$ and $m=l$. Consider the following cases.

\begin{description}

\item[Case 1] Assume that $Y \in K_1 \cup \ldots \cup K_{l-1}$. Then,

\begin{enumerate}

\item $X \ci_{cl(G)} Y | Z \cup pa_G(X \cup W_1)$ by Lemma \ref{lem:different}, because $Y \in K_1 \cup \ldots \cup K_{l-1}$, $X \in K_l$, $X \ci_G Y | Z$ and $Z \cap (K_{l+1} \cup \ldots \cup K_n) = \emptyset$. Moreover, for any $C \in pa_G(X \cup W_1) \setminus Z$

\item $X \nci_G C | Z$ by Lemma \ref{lem:aux3}, because $Y \in K_1 \cup \ldots \cup K_{l-1}$, $X \in K_l$ and $X \ci_G Y | Z$. Then,

\item $C \ci_G Y | Z$ because, otherwise, $X \nci_G Y | Z$ which is a contradiction. Then,

\item $C \ci_{cl(G)} Y | Z$ by the induction hypothesis, because $C, Y \in K_1 \cup \ldots \cup K_{l-1}$. Then,

\item $pa_G(X \cup W_1) \setminus Z \ci_{cl(G)} Y | Z$ by repeated symmetry and composition. Then, 

\item $X \ci_{cl(G)} Y | Z$ by symmetry, contraction on (1) and (5), and decomposition.

\end{enumerate}

\item[Case 2] Assume that $Y \in K_l$. Then,

\begin{enumerate}

\item $X \ci_{cl(G)} Y | Z \cup pa_G(X \cup W_1)$ by Lemma \ref{lem:same}, because $X, Y \in K_l$, $X \ci_G Y | Z$ and $Z \cap (K_{l+1} \cup \ldots \cup K_n) = \emptyset$. Moreover, for any $D \in pa_G(X \cup W_1) \setminus Z$

\item $X \nci_G D | Z$ by Lemma \ref{lem:aux3}, because $X, Y \in K_l$ and $X \ci_G Y | Z$. Then,

\item $Y \ci_G D | Z$ because, otherwise, $X \nci_G Y | Z$ which is a contradiction. Then,

\item $Y \ci_{cl(G)} D | Z$ by Case 1 replacing $X$ with $Y$ and $Y$ with $D$, because $D \in K_1 \cup \ldots \cup K_{l-1}$, $Y \in K_l$ and (3). Then,

\item $Y \ci_{cl(G)} pa_G(X \cup W_1) \setminus Z | Z$ by repeated composition. Then,

\item $X \ci_{cl(G)} Y | Z$ by symmetry, contraction on (1) and (5), and decomposition.

\end{enumerate}

\end{description}

This ends the proof for $|Z|=0$. Assume as induction hypothesis that the theorem holds for $|Z|<t$. We now prove it for $|Z|=t$ and $m=1$. Let $K_j$ be the connectivity component st $Z \cap K_j \neq \emptyset$ and $Z \cap (K_{j+1} \cup \ldots \cup K_n) = \emptyset$. Consider the following cases.

\begin{description}

\item[Case 3] Assume that $j=1$. Then, $X \ci_{cl(G)} Y | Z$ holds by Lemma \ref{lem:same}, because $X, Y \in K_1$, $X \ci_G Y | Z$, $Z \cap (K_2 \cup \ldots \cup K_n) = \emptyset$ and $pa_G(X \cup W_1) \setminus Z=\emptyset$.

\item[Case 4] Assume that $j>1$ and $pa_G(Z \cap K_j) \setminus Z = \emptyset$. Then, note that there is no $(Z \setminus C)$-open path between $X$ and any $C \in Z \cap K_j$. To see it, assume the contrary. Since $X \in K_1$ and $j>1$, the path must reach $K_j$ from one of its parents or children. However, the path cannot reach $K_j$ from one of its children because, otherwise, the path has a triplex node outside $Z$ since $X \in K_1$, $j>1$ and $Z \cap (K_{j+1} \cup \ldots \cup K_n) = \emptyset$. This contradicts that the path is $(Z \setminus C)$-open. Then, the path must reach $K_j$ from one of its parents. However, this contradicts that the path is $(Z \setminus C)$-open, because $pa_G(Z \cap K_j) \setminus Z=\emptyset$. Then,

\begin{enumerate}

\item $X \ci_G C | Z \setminus C$ as shown above. Then,

\item $X \ci_{cl(G)} C | Z \setminus C$ by the induction hypothesis. Moreover,

\item $X \ci_G Y | Z \setminus C$ by contraction on $X \ci_G Y | Z$ and (1), and decomposition. Then,

\item $X \ci_{cl(G)} Y | Z \setminus C$ by the induction hypothesis. Then, 

\item $X \ci_{cl(G)} Y | Z$ by composition on (2) and (4), and weak union.

\end{enumerate} 

\item[Case 5] Assume that $j>1$ and $pa_G(C) \setminus Z \neq \emptyset$ for some $C \in Z \cap K_j$. Then, note that there is no $(Z \setminus C)$-open path between $X$ and $Y$. To see it, assume the contrary. If $C$ is not in the path, then $C \in pa_G(D)$ st $-D-$ is in the path and $D \in Z$ because, otherwise, the path is $Z$-open which contradicts that $X \ci_G Y | Z$. However, this implies a contradiction because $C \in K_j$ and thus $D \in K_{j+1} \cup \ldots \cup K_n$, but $Z \cap (K_{j+1} \cup \ldots \cup K_n) = \emptyset$. Therefore, $C$ must be in the path. In fact, $C$ must be a non-triplex node in the path because, otherwise, the path is not $(Z \setminus C)$-open. Then, either (i) $-C-$, (ii) $\la C \oo$ or (iii) $\oo C \ra$ is in the path. Case (i) implies that the path is $Z$-open, because $pa_G(C) \setminus Z \neq \emptyset$. This contradicts that $X \ci_G Y | Z$. Cases (ii) and (iii) imply that the path has a directed subpath from $C$ to (iv) $X$, (v) $Y$ or (vi) a triplex node $E$ in the path. Cases (iv) and (v) are impossible because $X, Y \in K_1$ but $C \in K_j$ with $j > 1$. Case (vi) contradicts that the path is $(Z \setminus C)$-open, because $C \in K_j$ and thus $E \in K_{j+1} \cup \ldots \cup K_n$, but $Z \cap (K_{j+1} \cup \ldots \cup K_n) = \emptyset$. Then,

\begin{enumerate}

\item $X \ci_G Y | Z \setminus C$ as shown above. Then,

\item $X \ci_{cl(G)} Y | Z \setminus C$ by the induction hypothesis. Moreover,

\item $X \ci_G C | Z \setminus C$ or $C \ci_G Y | Z \setminus C$ by weak transitivity on $X \ci_G Y | Z$ and (1). Then,

\item $X \ci_{cl(G)} C | Z \setminus C$ or $C \ci_{cl(G)} Y | Z \setminus C$ by the induction hypothesis. Then, 

\item $X \ci_{cl(G)} Y | Z$ by symmetry, composition on (2) and (4), and weak union.

\end{enumerate} 

\end{description}

This ends the proof for $|Z|=t$ and $m=1$. Assume as induction hypothesis that the theorem holds for $|Z|=t$ and $m<l$. In order to prove it for $|Z|=t$ and $m=l$, it suffices to repeat Cases 1 and 2 if $Z \cap (K_{l+1} \cup \ldots \cup K_n) = \emptyset$, and Cases 4 and 5 replacing 1 with $l$ otherwise.

\end{proof}

\begin{proof}[{\bf Proof of Theorem \ref{the:triplex}}]

We first prove the ``only if" part. Let $G_1$ and $G_2$ be two Markov equivalent MAMP CGs. First, assume that two nodes $A$ and $C$ are adjacent in $G_2$ but not in $G_1$. If $A$ and $C$ are in the same undirected connectivity component of $G_1$, then $A \ci C | ne_{G_1}(A) \cup pa_{G_1}(A \cup ne_{G_1}(A))$ holds for $G_1$ by Theorem \ref{the:pairwise1} but it does not hold for $G_2$, which is a contradiction. On the other hand, if $A$ and $C$ are in different undirected connectivity components of $G_1$, then $A \ci C | pa_{G_1}(C)$ or $A \ci C | pa_{G_1}(A)$ holds for $G_1$ by Theorem \ref{the:pairwise1} but neither holds for $G_2$, which is a contradiction. Consequently, $G_1$ and $G_2$ must have the same adjacencies.

Finally, assume that $G_1$ and $G_2$ have the same adjacencies but $G_1$ has a triplex $(\{A,C\},B)$ that $G_2$ does not have. If $A$ and $C$ are in the same undirected connectivity component of $G_1$, then $A \ci C | ne_{G_1}(A) \cup pa_{G_1}(A \cup ne_{G_1}(A))$ holds for $G_1$ by Theorem \ref{the:pairwise1}. Note also that $B \notin ne_{G_1}(A) \cup pa_{G_1}(A \cup ne_{G_1}(A))$ because, otherwise, $G_1$ would not satisfy the constraint C1 or C2. Then, $A \ci C | ne_{G_1}(A) \cup pa_{G_1}(A \cup ne_{G_1}(A))$ does not hold for $G_2$, which is a contradiction. On the other hand, if $A$ and $C$ are in different undirected connectivity components of $G_1$, then $A \ci C | pa_{G_1}(C)$ or $A \ci C | pa_{G_1}(A)$ holds for $G_1$ by Theorem \ref{the:pairwise1}. Note also that $B \notin pa_{G_1}(A)$ and $B \notin pa_{G_1}(C)$ because, otherwise, $G_1$ would not have the triplex $(\{A,C\},B)$. Then, neither $A \ci C | pa_{G_1}(C)$ nor $A \ci C | pa_{G_1}(A)$ holds for $G_2$, which is a contradiction. Consequently, $G_1$ and $G_2$ must be triplex equivalent.

We now prove the ``if" part. Let $G_1$ and $G_2$ be two triplex equivalent MAMP CGs. We just prove that all the non-separations in $G_1$ are also in $G_2$. The opposite result can be proven in the same manner by just exchanging the roles of $G_1$ and $G_2$ in the proof. Specifically, assume that $\alpha \ci \beta | Z$ does not hold for $G_1$. We prove that $\alpha \ci \beta | Z$ does not hold for $G_2$ either. We divide the proof in three parts.

{\bf Part 1}

We say that a path has a triplex $(\{A,C\},B)$ if it has a subpath of the form $A \oa B \ao C$, $A \oa B - C$, or $A - B \ao C$. Let $\rho_1$ be any path between $\alpha$ and $\beta$ in $G_1$ that is $Z$-open st (i) no subpath of $\rho_1$ between $\alpha$ and $\beta$ in $G_1$ is $Z$-open, (ii) every triplex node in $\rho_1$ is in $Z$, and (iii) $\rho_1$ has no non-triplex node in $Z$. Let $\rho_2$ be the path in $G_2$ that consists of the same nodes as $\rho_1$. Then, $\rho_2$ is $Z$-open. To see it, assume the contrary. Then, one of the following cases must occur.

\begin{description}
\item[Case 1] $\rho_2$ does not have a triplex $(\{A,C\},B)$ and $B \in Z$. Then, $\rho_1$ must have a triplex $(\{A,C\},B)$ because it is $Z$-open. Then, $A$ and $C$ must be adjacent in $G_1$ and $G_2$ because these are triplex equivalent. Let $\varrho_1$ be the path obtained from $\rho_1$ by replacing the triplex $(\{A,C\},B)$ with the edge between $A$ and $C$ in $G_1$. Note that $\varrho_1$ cannot be $Z$-open because, otherwise, it would contradict the condition (i). Then, $\varrho_1$ is not $Z$-open because $A$ or $C$ do not meet the requirements. Assume without loss of generality that $C$ does not meet the requirements. Then, one of the following cases must occur.

\begin{description}
\item[Case 1.1] $\varrho_1$ does not have a triplex $(\{A,D\},C)$ and $C \in Z$. Then, one of the following subgraphs must occur in $G_1$.\footnote{If $\varrho_1$ does not have a triplex $(\{A,D\},C)$, then $A \la C$, $C \ra D$ or $A - C - D$ must be in $G_1$. Moreover, recall that $B$ is a triplex node in $\rho_1$. Then, $A \ra B \la C$, $A \ra B \aa C$, $A \ra B - C$, $A \aa B \la C$, $A \aa B \aa C$, $A \aa B - C$, $A - B \la C$ or $A - B \aa C$ must be in $G_1$. However, if $A \la C$ is in $G_1$ then the only legal options are those that contain the edge $B \la C$. On the other hand, if $A - C - D$ is in $G_1$ then the only legal options are $A \ra B \la C$ and $A \aa B \aa C$.}

\begin{table}[H]
\centering
\scalebox{1.0}{
\begin{tabular}{cccc}
\begin{tikzpicture}[inner sep=1mm]
\node at (0,0) (A) {$A$};
\node at (1,0) (B) {$B$};
\node at (2,0) (C) {$C$};
\node at (3,0) (D) {$D$};
\path[o-o] (A) edge (B);
\path[o-o] (B) edge (C);
\path[->] (C) edge (D);
\path[o-o] (A) edge [bend left] (C);
\end{tikzpicture}
&
\begin{tikzpicture}[inner sep=1mm]
\node at (0,0) (A) {$A$};
\node at (1,0) (B) {$B$};
\node at (2,0) (C) {$C$};
\node at (3,0) (D) {$D$};
\path[o-o] (A) edge (B);
\path[<-] (B) edge (C);
\path[o-o] (C) edge (D);
\path[<-] (A) edge [bend left] (C);
\end{tikzpicture}
&
\begin{tikzpicture}[inner sep=1mm]
\node at (0,0) (A) {$A$};
\node at (1,0) (B) {$B$};
\node at (2,0) (C) {$C$};
\node at (3,0) (D) {$D$};
\path[->] (A) edge (B);
\path[<-] (B) edge (C);
\path[-] (C) edge (D);
\path[-] (A) edge [bend left] (C);
\end{tikzpicture}
&
\begin{tikzpicture}[inner sep=1mm]
\node at (0,0) (A) {$A$};
\node at (1,0) (B) {$B$};
\node at (2,0) (C) {$C$};
\node at (3,0) (D) {$D$};
\path[<->] (A) edge (B);
\path[<->] (B) edge (C);
\path[-] (C) edge (D);
\path[-] (A) edge [bend left] (C);
\end{tikzpicture}
\end{tabular}}
\end{table}

However, the first three subgraphs imply that $\rho_1$ is not $Z$-open, which is a contradiction. The fourth subgraph implies that $\varrho_1$ is $Z$-open, which is a contradiction.

\item[Case 1.2] $\varrho_1$ has a triplex $(\{A,D\},C)$ and $C \notin Z \cup san_{G_1}(Z)$. Note that $C$ cannot be a triplex node in $\rho_1$ because, otherwise, $\rho_1$ would not be $Z$-open. Then, one of the following subgraphs must occur in $G_1$.

\begin{table}[H]
\centering
\scalebox{1.0}{
\begin{tabular}{cccc}
\begin{tikzpicture}[inner sep=1mm]
\node at (0,0) (A) {$A$};
\node at (1,0) (B) {$B$};
\node at (2,0) (C) {$C$};
\node at (3,0) (D) {$D$};
\path[o-o] (A) edge (B);
\path[<-] (B) edge (C);
\path[<-o] (C) edge (D);
\path[o-o] (A) edge [bend left] (C);
\end{tikzpicture}
&
\begin{tikzpicture}[inner sep=1mm]
\node at (0,0) (A) {$A$};
\node at (1,0) (B) {$B$};
\node at (2,0) (C) {$C$};
\node at (3,0) (D) {$D$};
\path[o-o] (A) edge (B);
\path[<-] (B) edge (C);
\path[-] (C) edge (D);
\path[o->] (A) edge [bend left] (C);
\end{tikzpicture}
&
\begin{tikzpicture}[inner sep=1mm]
\node at (0,0) (A) {$A$};
\node at (1,0) (B) {$B$};
\node at (2,0) (C) {$C$};
\node at (3,0) (D) {$D$};
\path[<->] (A) edge (B);
\path[-] (B) edge (C);
\path[-] (C) edge (D);
\path[<->] (A) edge [bend left] (C);
\end{tikzpicture}
&
\begin{tikzpicture}[inner sep=1mm]
\node at (0,0) (A) {$A$};
\node at (1,0) (B) {$B$};
\node at (2,0) (C) {$C$};
\node at (3,0) (D) {$D$};
\path[->] (A) edge (B);
\path[-] (B) edge (C);
\path[-] (C) edge (D);
\path[->] (A) edge [bend left] (C);
\end{tikzpicture}
\end{tabular}}
\end{table}

However, the first and second subgraphs imply that $C \in Z \cup san_{G_1}(Z)$ because $B \in Z$, which is a contradiction. The third subgraph implies that $B - D$ is in $G_1$ by the constraint C3 and, thus, that the path obtained from $\rho_1$ by replacing $B - C - D$ with $B - D$ is $Z$-open, which contradicts the condition (i). For the fourth subgraph, assume that $A$ and $D$ are adjacent in $G_1$. Then, one of the following subgraphs must occur in $G_1$.

\begin{table}[H]
\centering
\scalebox{1.0}{
\begin{tabular}{ccc}
\begin{tikzpicture}[inner sep=1mm]
\node at (0,0) (A) {$A$};
\node at (1,0) (B) {$B$};
\node at (2,0) (C) {$C$};
\node at (3,0) (D) {$D$};
\node at (4,0) (E) {$E$};
\path[->] (A) edge (B);
\path[-] (B) edge (C);
\path[-] (C) edge (D);
\path[->] (A) edge [bend left] (C);
\path[->] (D) edge (E);
\path[->] (A) edge [bend left] (D);
\end{tikzpicture}
&
\begin{tikzpicture}[inner sep=1mm]
\node at (0,0) (A) {$A$};
\node at (1,0) (B) {$B$};
\node at (2,0) (C) {$C$};
\node at (3,0) (D) {$D$};
\node at (4,0) (E) {$E$};
\path[->] (A) edge (B);
\path[-] (B) edge (C);
\path[-] (C) edge (D);
\path[->] (A) edge [bend left] (C);
\path[<-o] (D) edge (E);
\path[->] (A) edge [bend left] (D);
\end{tikzpicture}
&
\begin{tikzpicture}[inner sep=1mm]
\node at (0,0) (A) {$A$};
\node at (1,0) (B) {$B$};
\node at (2,0) (C) {$C$};
\node at (3,0) (D) {$D$};
\node at (4,0) (E) {$E$};
\path[->] (A) edge (B);
\path[-] (B) edge (C);
\path[-] (C) edge (D);
\path[->] (A) edge [bend left] (C);
\path[-] (D) edge (E);
\path[->] (A) edge [bend left] (D);
\end{tikzpicture}
\end{tabular}}
\end{table}

However, the first subgraph implies that the path obtained from $\rho_1$ by replacing $A \ra B - C - D$ with $A \ra D$ is $Z$-open, because $D \notin Z$ since $\rho_1$ is $Z$-open. This contradicts the condition (i). The second subgraph implies that the path obtained from $\rho_1$ by replacing $A \ra B - C - D$ with $A \ra D$ is $Z$-open, because $D \in Z \cup san_{G_1}(Z)$ since $\rho_1$ is $Z$-open. This contradicts the condition (i). Therefore, only the third subgraph is possible. Thus, by repeatedly applying the previous reasoning, we can conclude without loss of generality that the following subgraph must occur in $G_1$, with $n \geq 4$, $V_1=A$, $V_2=B$, $V_3=C$, $V_4=D$ and where $V_1$ and $V_n$ are not adjacent in $G_1$. Note that the subgraph below covers the case where $A$ and $D$ are not adjacent in the original subgraph by simply taking $n=4$. 

\begin{table}[H]
\centering
\scalebox{1.0}{
\begin{tabular}{c}
\begin{tikzpicture}[inner sep=1mm]
\node at (0,0) (A) {$V_1$};
\node at (1,0) (B) {$V_2$};
\node at (2,0) (C) {$V_3$};
\node at (3,0) (D) {$V_4$};
\node at (4,0) (D2) {$\ldots$};
\node at (5,0) (E) {$V_{n-1}$};
\node at (6,0) (F) {$V_n$};
\path[->] (A) edge (B);
\path[-] (B) edge (C);
\path[-] (C) edge (D);
\path[->] (A) edge [bend left] (C);
\path[-] (D) edge (D2);
\path[-] (D2) edge (E);
\path[->] (A) edge [bend left] (D);
\path[-] (E) edge (F);
\path[->] (A) edge [bend left] (E);
\end{tikzpicture}
\end{tabular}}
\end{table}

Since $V_1$ and $V_n$ are not adjacent in $G_1$, $G_1$ has a triplex $(\{V_1, V_n\}, V_{n-1})$ and, thus, so does $G_2$ because $G_1$ and $G_2$ are triplex equivalent. Then, one of the following subgraphs must occur in $G_2$.

\begin{table}[H]
\centering
\scalebox{1.0}{
\begin{tabular}{ccc}
\begin{tikzpicture}[inner sep=1mm]
\node at (0,0) (A) {$V_1$};
\node at (1,0) (B) {$\ldots$};
\node at (2,0) (C) {$V_{n-1}$};
\node at (3.2,0) (D) {$V_n$};
\path[<-o] (C) edge (D);
\path[-] (A) edge [bend left] (C);
\end{tikzpicture}
&
\begin{tikzpicture}[inner sep=1mm]
\node at (0,0) (A) {$V_1$};
\node at (1,0) (B) {$\ldots$};
\node at (2,0) (C) {$V_{n-1}$};
\node at (3.2,0) (D) {$V_n$};
\path[<-o] (C) edge (D);
\path[o->] (A) edge [bend left] (C);
\end{tikzpicture}
&
\begin{tikzpicture}[inner sep=1mm]
\node at (0,0) (A) {$V_1$};
\node at (1,0) (B) {$\ldots$};
\node at (2,0) (C) {$V_{n-1}$};
\node at (3,0) (D) {$V_n$};
\path[-] (C) edge (D);
\path[o->] (A) edge [bend left] (C);
\end{tikzpicture}
\end{tabular}}
\end{table}

Note that $V_1, \ldots, V_n$ must be a path in $G_2$, because $G_1$ and $G_2$ are triplex equivalent. Note also that this path cannot have any triplex in $G_2$. To see it, recall that we assumed that $\rho_2$ does not have a triplex $(\{A,C\},B)$. Recall that $V_1=A$, $V_2=B$, $V_3=C$. Moreover, if the path $V_1, \ldots, V_n$ has a triplex $(\{V_i,V_{i+2}\},V_{i+1})$ in $G_2$ with $2 \leq i \leq n-2$, then $V_i$ and $V_{i+2}$ must be adjacent in $G_1$ and $G_2$, because such a triplex does not exist in $G_1$, which is triplex equivalent to $G_2$. Specifically, $V_i - V_{i+2}$ must be in $G_1$ because, as seen above, $V_i - V_{i+1} - V_{i+2}$ is in $G_1$. Then, the path obtained from $\rho_1$ by replacing $V_i - V_{i+1} - V_{i+2}$ with $V_i - V_{i+2}$ is $Z$-open, which contradicts the condition (i). However, if the path $V_1, \ldots, V_n$ has no triplex in $G_2$, then every edge in the path must be directed as $\la$ in the case of the first and second subgraphs above, whereas every edge in the path must be undirected or directed as $\la$ in the third subgraph above. Either case contradicts the constraint C1 or C2.

\end{description}

\item[Case 2] Case 1 does not apply. Then, $\rho_2$ has a triplex $(\{A,C\},B)$ and $B \notin Z \cup san_{G_2}(Z)$. Then, $\rho_1$ cannot have a triplex $(\{A,C\},B)$. Then, $A$ and $C$ must be adjacent in $G_1$ and $G_2$ because these are triplex equivalent. Let $\varrho_1$ be the path obtained from $\rho_1$ by replacing the triplex $(\{A,C\},B)$ with the edge between $A$ and $C$ in $G_1$. Note that $\varrho_1$ cannot be $Z$-open because, otherwise, it would contradict the condition (i). Then, $\varrho_1$ is not $Z$-open because $A$ or $C$ do not meet the requirements. Assume without loss of generality that $C$ does not meet the requirements. Then, one of the following cases must occur.

\begin{description}
\item[Case 2.1] $\varrho_1$ has a triplex $(\{A,D\},C)$ and $C \notin Z \cup san_{G_1}(Z)$. Then, one of the following subgraphs must occur in $G_1$.\footnote{If $\varrho_1$ has a triplex $(\{A,D\},C)$, then $A \ra C \ao D$, $A \ra C - D$, $A \aa C \ao D$, $A \aa C - D$ or $A - C \ao D$ must be in $G_1$. Moreover, recall that $B$ is not a triplex node in $\rho_1$. Then, $A \la B \la C$, $A \la B \ra C$, $A \la B \aa C$, $A \la B - C$, $A \ra B \ra C$, $A \aa B \ra C$, $A - B \ra C$ or $A - B - C$ must be in $G_1$. However, if $A \ra C$ is in $G_1$ then the only legal options are those that contain the edge $B \ra C$. On the other hand, if $A \aa C$ is in $G_1$ then the only legal option is $A \la B \ra C$. Finally, if $A - C$ is in $G_1$ then the only legal options are $A \la B \ra C$ and $A - B - C$.}

\begin{table}[H]
\centering
\scalebox{1.0}{
\begin{tabular}{ccc}
\begin{tikzpicture}[inner sep=1mm]
\node at (0,0) (A) {$A$};
\node at (1,0) (B) {$B$};
\node at (2,0) (C) {$C$};
\node at (3,0) (D) {$D$};
\path[o-o] (A) edge (B);
\path[->] (B) edge (C);
\path[<-o] (C) edge (D);
\path[->] (A) edge [bend left] (C);
\end{tikzpicture}
&
\begin{tikzpicture}[inner sep=1mm]
\node at (0,0) (A) {$A$};
\node at (1,0) (B) {$B$};
\node at (2,0) (C) {$C$};
\node at (3,0) (D) {$D$};
\path[o-o] (A) edge (B);
\path[->] (B) edge (C);
\path[-] (C) edge (D);
\path[->] (A) edge [bend left] (C);
\end{tikzpicture}
&
\begin{tikzpicture}[inner sep=1mm]
\node at (0,0) (A) {$A$};
\node at (1,0) (B) {$B$};
\node at (2,0) (C) {$C$};
\node at (3,0) (D) {$D$};
\path[<-] (A) edge (B);
\path[->] (B) edge (C);
\path[<-o] (C) edge (D);
\path[<->] (A) edge [bend left] (C);
\end{tikzpicture}
\\
\begin{tikzpicture}[inner sep=1mm]
\node at (0,0) (A) {$A$};
\node at (1,0) (B) {$B$};
\node at (2,0) (C) {$C$};
\node at (3,0) (D) {$D$};
\path[<-] (A) edge (B);
\path[->] (B) edge (C);
\path[-] (C) edge (D);
\path[<->] (A) edge [bend left] (C);
\end{tikzpicture}
&
\begin{tikzpicture}[inner sep=1mm]
\node at (0,0) (A) {$A$};
\node at (1,0) (B) {$B$};
\node at (2,0) (C) {$C$};
\node at (3,0) (D) {$D$};
\path[<-] (A) edge (B);
\path[->] (B) edge (C);
\path[<-o] (C) edge (D);
\path[-] (A) edge [bend left] (C);
\end{tikzpicture}
&
\begin{tikzpicture}[inner sep=1mm]
\node at (0,0) (A) {$A$};
\node at (1,0) (B) {$B$};
\node at (2,0) (C) {$C$};
\node at (3,0) (D) {$D$};
\path[-] (A) edge (B);
\path[-] (B) edge (C);
\path[<-o] (C) edge (D);
\path[-] (A) edge [bend left] (C);
\end{tikzpicture}
\end{tabular}}
\end{table}

However, this implies that $C$ is a triplex node in $\rho_1$, which is a contradiction because $\rho_1$ is $Z$-open but $C \notin Z \cup san_{G_1}(Z)$.

\item[Case 2.2] $\varrho_1$ does not have a triplex $(\{A,D\},C)$ and $C \in Z$. Then, $A \la C$, $C \ra D$ or $A - C - D$.

\begin{description}
\item[Case 2.2.1] If $C \ra D$ or $A - C - D$, then one of the following subgraphs must occur in $G_1$.

\begin{table}[H]
\centering
\scalebox{1.0}{
\begin{tabular}{ccc}
\begin{tikzpicture}[inner sep=1mm]
\node at (0,0) (A) {$A$};
\node at (1,0) (B) {$B$};
\node at (2,0) (C) {$C$};
\node at (3,0) (D) {$D$};
\path[o-o] (A) edge (B);
\path[o-o] (B) edge (C);
\path[->] (C) edge (D);
\path[o-o] (A) edge [bend left] (C);
\end{tikzpicture}
&
\begin{tikzpicture}[inner sep=1mm]
\node at (0,0) (A) {$A$};
\node at (1,0) (B) {$B$};
\node at (2,0) (C) {$C$};
\node at (3,0) (D) {$D$};
\path[-] (A) edge (B);
\path[-] (B) edge (C);
\path[-] (C) edge (D);
\path[-] (A) edge [bend left] (C);
\end{tikzpicture}
&
\begin{tikzpicture}[inner sep=1mm]
\node at (0,0) (A) {$A$};
\node at (1,0) (B) {$B$};
\node at (2,0) (C) {$C$};
\node at (3,0) (D) {$D$};
\path[<-] (A) edge (B);
\path[->] (B) edge (C);
\path[-] (C) edge (D);
\path[-] (A) edge [bend left] (C);
\end{tikzpicture}
\end{tabular}}
\end{table}

However, the first and second subgraphs imply that $\rho_1$ is not $Z$-open, which is a contradiction. The third subgraph implies that $\varrho_1$ is $Z$-open, which is a contradiction.

\item[Case 2.2.2] If $A \la C$ then $(\{A,D\},C)$ is not a triplex in $\varrho_1$. However, note that $\rho_1$ must have a triplex $(\{B,D\},C)$, because $\rho_1$ is $Z$-open and $C \in Z$. Then, one of the following subgraphs must occur in $G_1$.

\begin{table}[H]
\centering
\scalebox{1.0}{
\begin{tabular}{ccc}
\begin{tikzpicture}[inner sep=1mm]
\node at (0,0) (A) {$A$};
\node at (1,0) (B) {$B$};
\node at (2,0) (C) {$C$};
\node at (3,0) (D) {$D$};
\path[<-] (A) edge (B);
\path[o->] (B) edge (C);
\path[<-o] (C) edge (D);
\path[<-] (A) edge [bend left] (C);
\end{tikzpicture}
&
\begin{tikzpicture}[inner sep=1mm]
\node at (0,0) (A) {$A$};
\node at (1,0) (B) {$B$};
\node at (2,0) (C) {$C$};
\node at (3,0) (D) {$D$};
\path[<-] (A) edge (B);
\path[o->] (B) edge (C);
\path[-] (C) edge (D);
\path[<-] (A) edge [bend left] (C);
\end{tikzpicture}
&
\begin{tikzpicture}[inner sep=1mm]
\node at (0,0) (A) {$A$};
\node at (1,0) (B) {$B$};
\node at (2,0) (C) {$C$};
\node at (3,0) (D) {$D$};
\path[<-] (A) edge (B);
\path[-] (B) edge (C);
\path[<-o] (C) edge (D);
\path[<-] (A) edge [bend left] (C);
\end{tikzpicture}
\end{tabular}}
\end{table}

Assume that $A$ and $D$ are adjacent in $G_1$. Then, $A \la D$ must be in $G_1$. Moreover, $D \in Z$ because, otherwise, we can remove $B$ and $C$ from $\rho_1$ and get a $Z$-open path between $A$ and $B$ in $G_1$ that is shorter than $\rho_1$, which contradicts the condition (i). Then, $D$ must be a triplex node in $\rho_1$. Then, one of the following subgraphs must occur in $G_1$.

\begin{table}[H]
\centering
\scalebox{1.0}{
\begin{tabular}{ccc}
\begin{tikzpicture}[inner sep=1mm]
\node at (0,0) (A) {$A$};
\node at (1,0) (B) {$B$};
\node at (2,0) (C) {$C$};
\node at (3,0) (D) {$D$};
\node at (4,0) (E) {$E$};
\path[<-] (A) edge (B);
\path[o->] (B) edge (C);
\path[<->] (C) edge (D);
\path[<-o] (D) edge (E);
\path[<-] (A) edge [bend left] (C);
\path[<-] (A) edge [bend left] (D);
\end{tikzpicture}
&
\begin{tikzpicture}[inner sep=1mm]
\node at (0,0) (A) {$A$};
\node at (1,0) (B) {$B$};
\node at (2,0) (C) {$C$};
\node at (3,0) (D) {$D$};
\node at (4,0) (E) {$E$};
\path[<-] (A) edge (B);
\path[o->] (B) edge (C);
\path[<->] (C) edge (D);
\path[-] (D) edge (E);
\path[<-] (A) edge [bend left] (C);
\path[<-] (A) edge [bend left] (D);
\end{tikzpicture}
&
\begin{tikzpicture}[inner sep=1mm]
\node at (0,0) (A) {$A$};
\node at (1,0) (B) {$B$};
\node at (2,0) (C) {$C$};
\node at (3,0) (D) {$D$};
\node at (4,0) (E) {$E$};
\path[<-] (A) edge (B);
\path[o->] (B) edge (C);
\path[-] (C) edge (D);
\path[<-o] (D) edge (E);
\path[<-] (A) edge [bend left] (C);
\path[<-] (A) edge [bend left] (D);
\end{tikzpicture}
\\
\begin{tikzpicture}[inner sep=1mm]
\node at (0,0) (A) {$A$};
\node at (1,0) (B) {$B$};
\node at (2,0) (C) {$C$};
\node at (3,0) (D) {$D$};
\node at (4,0) (E) {$E$};
\path[<-] (A) edge (B);
\path[-] (B) edge (C);
\path[<->] (C) edge (D);
\path[<-o] (D) edge (E);
\path[<-] (A) edge [bend left] (C);
\path[<-] (A) edge [bend left] (D);
\end{tikzpicture}
&
\begin{tikzpicture}[inner sep=1mm]
\node at (0,0) (A) {$A$};
\node at (1,0) (B) {$B$};
\node at (2,0) (C) {$C$};
\node at (3,0) (D) {$D$};
\node at (4,0) (E) {$E$};
\path[<-] (A) edge (B);
\path[-] (B) edge (C);
\path[<->] (C) edge (D);
\path[-] (D) edge (E);
\path[<-] (A) edge [bend left] (C);
\path[<-] (A) edge [bend left] (D);
\end{tikzpicture}
\end{tabular}}
\end{table}

Thus, by repeatedly applying the previous reasoning, we can conclude without loss of generality that the following subgraph must occur in $G_1$, with $n \geq 4$, $V_1=A$, $V_2=B$, $V_3=C$, $V_4=D$ and where $V_1$ and $V_n$ are not adjacent in $G_1$. Note that the subgraph below covers the case where $A$ and $D$ are not adjacent in the original subgraph by simply taking $n=4$. 

\begin{table}[H]
\centering
\scalebox{1.0}{
\begin{tabular}{c}
\begin{tikzpicture}[inner sep=1mm]
\node at (0,0) (A) {$V_1$};
\node at (1,0) (B) {$V_2$};
\node at (2,0) (D2) {$\ldots$};
\node at (3,0) (E) {$V_{n-1}$};
\node at (4.3,0) (F) {$V_n$};
\path[<-] (A) edge (B);
\path[o-o] (E) edge (F);
\path[<-] (A) edge [bend left] (E);
\end{tikzpicture}
\end{tabular}}
\end{table}

Note that $V_i$ is a triplex node in $\rho_1$ for all $3 \leq i \leq n-1$. Then, $V_i \in Z$ for all $3 \leq i \leq n-1$ by the condition (ii) because $\rho_1$ is $Z$-open. Then, $V_i$ must be a triplex node in $\rho_2$ for all $3 \leq i \leq n-1$ because, otherwise, Case 1 would apply instead of Case 2. Recall that $V_2=B$ is also a triplex node in $\rho_2$. Note that $G_1$ does not have a triplex $(\{V_1, V_n\},V_{n-1})$ and, thus, $G_2$ does not have it either because these are triplex equivalent. Then, one of the following subgraphs must occur in $G_2$.

\begin{table}[H]
\centering
\scalebox{1.0}{
\begin{tabular}{ccc}
\begin{tikzpicture}[inner sep=1mm]
\node at (0,0) (A) {$V_1$};
\node at (1,0) (D2) {$\ldots$};
\node at (2,0) (E) {$V_{n-1}$};
\node at (3,0) (F) {$V_n$};
\path[->] (E) edge (F);
\path[o-o] (A) edge [bend left] (E);
\end{tikzpicture}
&
\begin{tikzpicture}[inner sep=1mm]
\node at (0,0) (A) {$V_1$};
\node at (1,0) (D2) {$\ldots$};
\node at (2,0) (E) {$V_{n-1}$};
\node at (3.3,0) (F) {$V_n$};
\path[o-o] (E) edge (F);
\path[<-] (A) edge [bend left] (E);
\end{tikzpicture}
&
\begin{tikzpicture}[inner sep=1mm]
\node at (0,0) (A) {$V_1$};
\node at (1,0) (D2) {$\ldots$};
\node at (2,0) (E) {$V_{n-1}$};
\node at (3,0) (F) {$V_n$};
\path[-] (E) edge (F);
\path[-] (A) edge [bend left] (E);
\end{tikzpicture}
\end{tabular}}
\end{table}

However, the first subgraph implies that $V_{n-1}$ is not a triplex node in $\rho_2$, which is a contradiction. The second subgraph implies that $G_2$ has a cycle that violates the constraint C1. To see it, recall that $V_i$ is a triplex node in $\rho_2$ for all $2 \leq i \leq n-1$ and, thus, $V_i \la V_{i+1}$ is not in $G_2$ for all $1 \leq i \leq n-2$. The third subgraph implies that $V_{n-2} \aa V_{n-1}$ is not in $G_2$ because, otherwise, $V_1$ and $V_n$ would be adjacent by the constraint C3. Therefore, $V_{n-2} \ra V_{n-1}$ must be in $G_2$ because $V_{n-1}$ is a triplex node in $\rho_2$. However, this implies that $V_{n-2}$ is not a triplex node in $\rho_2$, which is a contradiction.

\end{description}

\end{description}

\end{description}

{\bf Part 2}

Let $\rho_1$ be any of the shortest $Z$-open paths between $\alpha$ and $\beta$ in $G_1$ st all its triplex nodes are in $Z$. Let $\rho_2$ be the path in $G_2$ that consists of the same nodes as $\rho_1$. We prove below that $\rho_2$ is $Z$-open. We prove this result by induction on the number of non-triplex nodes of $\rho_1$ that are in $Z$. If this number is zero, then Part 1 proves the result. Assume as induction hypothesis that the result holds when the number is smaller than $m$. We now prove it for $m$.

Let $\rho_1^{A:B}$ denote the subpath of $\rho_1$ between the nodes $A$ and $B$. Let $C$ be any of the non-triplex nodes of $\rho_1$ that are in $Z$. Note that there must exist some node $D \in pa_{G_1}(C) \setminus Z$ for $\rho_1$ to be $Z$-open. If $D$ is in $\rho_1$, then $\rho_1^{\alpha:D} \cup D \ra C \cup \rho_1^{C:\beta}$ or $\rho_1^{\alpha:C} \cup C \la D \cup \rho_1^{D:\beta}$ is a $Z$-open path between $\alpha$ and $\beta$ in $G_1$ that has fewer than $m$ non-triplex nodes in $Z$. Then, the result holds by the induction hypothesis. On the other hand, if $D$ is not in $\rho_1$, then $\rho_1^{\alpha:C} \cup C \la D$ and $D \ra C \cup \rho_1^{C:\beta}$ are two paths. Moreover, they are $Z$-open in $G_1$ and they have fewer than $m$ non-triplex nodes in $Z$. Then, by the induction hypothesis, there are two $Z$-open paths $\rho_2^{\alpha:D}$ and $\rho_2^{D:\beta}$ in $G_2$ st the former ends with the nodes $C$ and $D$ and the latter starts with these two nodes. Now, consider the following cases.

\begin{description}

\item[Case 1] $\rho_2^{\alpha:D}$ ends with $A - C \la D$. Then, $\rho_2^{D:\beta}$ starts with $D \ra C - B$ or $D \ra C \ao B$. Then, $\rho_2 = \rho_2^{\alpha:C} \cup \rho_2^{C:\beta}$ is $Z$-open a path in either case.

\item[Case 2] $\rho_2^{\alpha:D}$ ends with $A - C \aa D$. Then, $\rho_2^{D:\beta}$ starts with $D \aa C - B$ or $D \aa C \ao B$. Then, $\rho_2 = \rho_2^{\alpha:C} \cup \rho_2^{C:\beta}$ is $Z$-open a path in either case.

\item[Case 3] $\rho_2^{\alpha:D}$ ends with $A \oa C - D$. Then, $\rho_2^{D:\beta}$ starts with $D - C \ao B$, or $D - C - B$ st there is some node $E \in pa_{G_2}(C) \setminus Z$. Then, $\rho_2 = \rho_2^{\alpha:C} \cup \rho_2^{C:\beta}$ is $Z$-open a path in either case.

\item[Case 4] $\rho_2^{\alpha:D}$ ends with $A \oa C \ao D$. Then, $\rho_2^{D:\beta}$ starts with $D \oa C - B$ or $D \oa C \ao B$. Then, $\rho_2 = \rho_2^{\alpha:C} \cup \rho_2^{C:\beta}$ is $Z$-open a path in either case.

\item[Case 5] $\rho_2^{\alpha:D}$ ends with $A - C - D$ st there is some node $E \in pa_{G_2}(C) \setminus Z$. Then, $\rho_2^{D:\beta}$ starts with $D - C \ao B$, or $D - C - B$ st there is some node $F \in pa_{G_2}(C) \setminus Z$. Then, $\rho_2 = \rho_2^{\alpha:C} \cup \rho_2^{C:\beta}$ is a $Z$-open path in either case.

\end{description}

{\bf Part 3}

Assume that Part 2 does not apply. Then, every $Z$-open path between $\alpha$ and $\beta$ in $G_1$ has some triplex node $B_1$ that is outside $Z$ because, otherwise, Part 2 would apply. Note that for the path to be $Z$-open, $G_1$ must have a subgraph $B_1 \ra \ldots \ra B_n$ st $B_1, \ldots, B_{n-1} \notin Z$ but $B_n \in Z$. Let us convert every $Z$-open path between $\alpha$ and $\beta$ in $G_1$ into a route by replacing each of its triplex nodes $B_1$ that are outside $Z$ with the corresponding route $B_1 \ra \ldots \ra B_n \la \ldots \la B_1$. Let $\varrho_1$ be any of the shortest routes so-constructed. Let $\rho_1$ be the path from which $\varrho_1$ was constructed. Note that $\rho_1$ cannot be $Z$-open st all its triplex nodes are in $Z$ because, otherwise, Part 2 would apply. Let $W$ denote the set of all the triplex nodes in $\rho_1$ that are outside $Z$. Then, $\rho_1$ is one of the shortest $(Z \cup W)$-open paths between $\alpha$ and $\beta$ in $G_1$ st all its triplex nodes are in $Z \cup W$. To see it, assume to the contrary that $\rho_1'$ is a $(Z \cup W)$-open path between $\alpha$ and $\beta$ in $G_1$ that is shorter than $\rho_1$ and st all the triplex nodes in $\rho_1'$ are in $Z \cup W$. Let $\varrho_1'$ be the route resulting from replacing every node $B_1$ of $\rho_1'$ that is in $W$ with the route $B_1 \ra \ldots \ra B_n \la \ldots \la B_1$ that was added to $\rho_1$ to construct $\varrho_1$. Clearly, $\varrho_1'$ is shorter than $\varrho_1$, which is a contradiction. Let $\varrho_2$ and $\rho_2$ be the route and the path in $G_2$ that consist of the same nodes as $\varrho_1$ and $\rho_1$. Note that $\rho_2$ is $(Z \cup W)$-open by Part 2.

Consider any of the routes $B_1 \ra \ldots \ra B_n \la \ldots \la B_1$ that were added to $\rho_1$ to construct $\varrho_1$. This implies that $\rho_1$ has a triplex $(\{A,C\},B_1)$. Assume that $B_1 \ra B_2$ is in $G_1$ but $B_1 - B_2$ or $B_1 \ao B_2$ is in $G_2$. Note that $A \oa B_1$ or $B_1 \ao C$ is in $G_2$ because, as noted above, $\rho_2$ is $(Z \cup W)$-open. Assume without loss of generality that $A \oa B_1$ is in $G_2$. Then, $A - B_1 \ra B_2$ or $A \oa B_1 \ra B_2$ is in $G_1$ whereas $A \oa B_1 - B_2$ or $A \oa B_1 \ao B_2$ is in $G_2$. Therefore, $A$ and $B_2$ must be adjacent in $G_1$ and $G_2$ because these are triplex equivalent. This implies that $A \ra B_2$ is in $G_1$. Moreover, $A \in Z$ because, otherwise, we can construct a route that is shorter than $\varrho_1$ by simply removing $B_1$ from $\varrho_1$, which is a contradiction. This implies that $A \aa B_1$ is in $G_2$ because, otherwise, $\rho_2$ would not be $(Z \cup W)$-open. This implies that $A \aa B_1 - B_2$ or $A \aa B_1 \ao B_2$ is in $G_2$, which implies that $A - B_2$ or $A \ao B_2$ is in $G_2$. The situation is depicted in the following subgraphs.

\begin{table}[H]
\centering
\scalebox{1.0}{
\begin{tabular}{cccc}
$G_1$&$G_1$
\\
\begin{tikzpicture}[inner sep=1mm]
\node at (0,0) (A) {$A$};
\node at (2,0) (C) {$C$};
\node at (1,-1) (B) {$B_1$};
\node at (1,-2) (D) {$B_2$};
\path[-] (A) edge (B);
\path[o-o] (C) edge (B);
\path[->] (B) edge (D);
\path[->] (A) edge (D);
\end{tikzpicture}
&
\begin{tikzpicture}[inner sep=1mm]
\node at (0,0) (A) {$A$};
\node at (2,0) (C) {$C$};
\node at (1,-1) (B) {$B_1$};
\node at (1,-2) (D) {$B_2$};
\path[o->] (A) edge (B);
\path[o-o] (C) edge (B);
\path[->] (B) edge (D);
\path[->] (A) edge (D);
\end{tikzpicture}
\\
\begin{tikzpicture}[inner sep=1mm]
\node at (0,0) (A) {$A$};
\node at (2,0) (C) {$C$};
\node at (1,-1) (B) {$B_1$};
\node at (1,-2) (D) {$B_2$};
\path[o-o] (C) edge (B);
\path[<->] (A) edge (B);
\path[-] (B) edge (D);
\path[<->] (A) edge (D);
\end{tikzpicture}
&
\begin{tikzpicture}[inner sep=1mm]
\node at (0,0) (A) {$A$};
\node at (2,0) (C) {$C$};
\node at (1,-1) (B) {$B_1$};
\node at (1,-2) (D) {$B_2$};
\path[o-o] (C) edge (B);
\path[<->] (A) edge (B);
\path[<->] (B) edge (D);
\path[<->] (A) edge (D);
\end{tikzpicture}
&
\begin{tikzpicture}[inner sep=1mm]
\node at (0,0) (A) {$A$};
\node at (2,0) (C) {$C$};
\node at (1,-1) (B) {$B_1$};
\node at (1,-2) (D) {$B_2$};
\path[o-o] (C) edge (B);
\path[<->] (A) edge (B);
\path[<->] (B) edge (D);
\path[-] (A) edge (D);
\end{tikzpicture}
&
\begin{tikzpicture}[inner sep=1mm]
\node at (0,0) (A) {$A$};
\node at (2,0) (C) {$C$};
\node at (1,-1) (B) {$B_1$};
\node at (1,-2) (D) {$B_2$};
\path[o-o] (C) edge (B);
\path[<->] (A) edge (B);
\path[<-] (B) edge (D);
\path[<-] (A) edge (D);
\end{tikzpicture}
\\
$G_2$&$G_2$&$G_2$&$G_2$
\end{tabular}}
\end{table}

Now, let $A'$ be the node that precedes $A$ in $\rho_1$. Note that $A' \la A$ cannot be in $\rho_1$ or $\rho_2$ because, otherwise, these would not be $(Z \cup W)$-open since $A \in Z$. Then, $A' - A$ or $A' \oa A$ is in $G_1$ and $G_2$. Then, $A' - A \ra B_2$ or $A' \oa A \ra B_2$ is in $G_1$ whereas $A' - A \ao B_2$, $A' - A - B_2$, $A' \oa A \ao B_2$ or $A' \oa A - B_2$ is in $G_2$. These four subgraphs of $G_2$ imply that $A'$ and $B_2$ must be adjacent in $G_1$ and $G_2$: The second subgraph due to the constraint C3 because $A \aa B_1$ is in $G_2$, and the other three subgraphs because $G_1$ and $G_2$ are triplex equivalent. By repeating the reasoning in the paragraph above, we can conclude that $A' \ra B_2$ is in $G_1$, which implies that $A' \in Z$, which implies that $A' - A$ or $A' \aa A$ is in $G_2$, which implies that $A' - B_2$ or $A' \ao B_2$ is in $G_2$.

By repeating the reasoning in the paragraph above,\footnote{Let $A''$ be the node that precedes $A'$ in $\rho_1$. For this repeated reasoning to be correct, it is important to realize that if $A' - A$ is in $G_2$, then $A'' \aa A'$ must be in $G_2$, because $A' \in Z$ and $\rho_2$ is $(Z \cup W)$-open.} we can conclude that $\alpha \ra B_2$ is in $G_1$ and, thus, we can construct a route that is shorter than $\varrho_1$ by simply removing some nodes from $\varrho_1$, which is a contradiction. Consequently, $B_1 \ra B_2$ must be in $G_2$.

Finally, assume that $B_1 \ra B_2 \ra B_3$ is in $G_1$ but $B_1 \ra B_2 - B_3$ or $B_1 \ra B_2 \ao B_3$ is in $G_2$. Then, $B_1$ and $B_3$ must be adjacent in $G_1$ and $G_2$ because these are triplex equivalent. This implies that $B_1 \ra B_3$ is in $G_1$, which implies that we can construct a route that is shorter than $\varrho_1$ by simply removing $B_2$ from $\varrho_1$, which is a contradiction. By repeating this reasoning, we can conclude that $B_1 \ra \ldots \ra B_n$ is in $G_2$ and, thus, that $\rho_2$ is $Z$-open.

\end{proof}

\begin{proof}[{\bf Proof of Lemma \ref{lem:directed}}]

Assume to the contrary that there are two such sets of directed node pairs. Let the MAMP CG $G$ contain exactly the directed node pairs in one of the sets, and let the MAMP CG $H$ contain exactly the directed node pairs in the other set. For every $A \ra B$ in $G$ st $A - B$ or $A \aa B$ is in $H$, replace the edge between $A$ and $B$ in $H$ with $A \ra B$ and call the resulting graph $F$. We prove below that $F$ is a MAMP CG that is triplex equivalent to $G$ and thus to $H$, which is a contradiction since $F$ has a proper superset of the directed node pairs in $H$.

First, note that $F$ cannot violate the constraints C2 and C3. Assume to the contrary that $F$ violates the constraint C1 due to a cycle $\rho$. Note that none of the directed edges in $\rho$ can be in $H$ because, otherwise, $H$ would violate the constraint C1, since $H$ has the same adjacencies as $F$ but a subset of the directed edges in $F$. Then, all the directed edges in $\rho$ must be in $G$. However, this implies the contradictory conclusion that $G$ violates the constraint C1, since $G$ has the same adjacencies as $F$ but a subset of the directed edges in $F$.

Second, assume to the contrary that $G$ (and, thus, $H$) has a triplex $(\{A,C\},B)$ that $F$ has not. Then, $\{A,B\}$ or $\{B,C\}$ must an directed node pair in $G$ because, otherwise, $F$ would have a triplex $(\{A,C\},B)$ since $F$ would have the same induced graph over $\{A, B, C\}$ as $H$. Specifically, $A \ra B$ or $B \la C$ must be in $G$ because, otherwise, $G$ would not have a triplex $(\{A,C\},B)$. Moreover, neither $A \la B$ nor $B \ra C$ can be $H$ because, otherwise, $H$ would not have a triplex $(\{A,C\},B)$. Therefore, if $A \ra B$ or $B \la C$ is in $G$ and neither $A \la B$ nor $B \ra C$ is in $H$, then $A \ra B$ or $B \la C$ must be in $F$. However, this implies that $B \ra C$ or $A \la B$ must be in $F$ because, otherwise, $F$ would have a triplex $(\{A,C\},B)$ which would be a contradiction. However, this is a contradiction since neither $B \ra C$ nor $A \la B$ can be in $G$ or $H$ because, otherwise, neither $G$ nor $H$ would have a triplex $(\{A,C\},B)$.

Finally, assume to the contrary that $F$ has a triplex $(\{A,C\},B)$ that $G$ has not (and, thus, nor does $H$). Then, $A - B - C$ must be in $H$ because, otherwise, $A \la B$ or $B \ra C$ would be in $H$ and, thus, $F$ would not have a triplex $(\{A,C\},B)$. However, this implies that $A \ra B$ or $B \la C$ is in $G$ because, otherwise, $F$ would not have a triplex $(\{A,C\},B)$. However, this implies that $B \ra C$ or $A \la B$ is in $G$ because, otherwise, $G$ would have a triplex $(\{A,C\},B)$. Therefore, $A \ra B \ra C$ or $A \la B \la C$ is in $G$ and, thus, $A \ra B \ra C$ or $A \la B \la C$ must be in $F$ since $A - B - C$ is in $H$. However, this contradicts the assumption that $F$ has a triplex $(\{A,C\},B)$.

\end{proof}

\begin{proof}[{\bf Proof of Lemma \ref{lem:bidirected}}]

Assume to the contrary that there are two such sets of bidirected edges. Let the MDCG $G$ contain exactly the bidirected edges in one of the sets, and let the MDCG $H$ contain exactly the bidirected edges in the other set. For every $A \aa B$ in $G$ st $A - B$ is in $H$, replace $A - B$ with $A \aa B$ in $H$ and call the resulting graph $F$. We prove below that $F$ is a MDCG that is triplex equivalent to $G$, which is a contradiction since $F$ has a proper superset of the bidirected edges in $G$.

First, note that $F$ cannot violate the constraint C1. Assume to the contrary that $F$ violates the constraint C2 due to a cycle $\rho$. Note that all the undirected edges in $\rho$ are in $H$. In fact, they must also be in $G$, because $G$ and $H$ have the same directed node pairs and bidirected edges. Moreover, the bidirected edge in $\rho$ must be in $G$ or $H$. However, this is a contradiction. Now, assume to the contrary that $F$ violates the constraint C3 because $A - B - C$ and $B \aa D$ are in $F$ but $A$ and $C$ are not adjacent in $F$ (note that if $A$ and $C$ were adjacent in $F$, then they would not violate the constraint C3 or they would violate the constraint C1 or C2, which is impossible as we have just shown). Note that $A - B - C$ must be in $H$. In fact, $A - B - C$ must also be in $G$, because $G$ and $H$ have the same directed node pairs and bidirected edges. Moreover, $B \aa D$ must be in $G$ or $H$. However, this implies that $A$ and $C$ are adjacent in $G$ or $H$ by the constraint C3, which implies that $A$ and $C$ are adjacent in $G$ and $H$ because they are triplex equivalent and thus also in $F$, which is a contradiction. Consequently, $F$ is a MAMP CG, which implies that $F$ is a MDCG because it has the same directed edges as $G$ and $H$.

Second, note that all the triplexes in $G$ are in $F$ too.

Finally, assume to the contrary that $F$ has a triplex $(\{A,C\},B)$ that $G$ has not (and, thus, nor does $H$). Then, $A - B - C$ must be in $H$ because, otherwise, $A \la B$ or $B \ra C$ would be in $H$ and thus $F$ would not have a triplex $(\{A,C\},B)$. However, this implies that $F$ has the same induced graph over $\{A, B, C\}$ as $G$, which contradicts the assumption that $F$ has a triplex $(\{A,C\},B)$.

\end{proof}

\begin{proof}[{\bf Proof of Theorem \ref{the:GG'2}}]

It suffices to show that every $Z$-open path between $\alpha$ and $\beta$ in $G$ can be transformed into a $Z$-open path between $\alpha$ and $\beta$ in $G'$ and vice versa, with $\alpha, \beta \in V$ and $Z \subseteq V \setminus \alpha \setminus \beta$.

Let $\rho$ denote a $Z$-open path between $\alpha$ and $\beta$ in $G$. We can easily transform $\rho$ into a path $\rho'$ between $\alpha$ and $\beta$ in $G'$: Simply, replace every maximal subpath of $\rho$ of the form $V_1 \bb V_2 \bb \ldots \bb V_{n-1} \bb V_n$ ($n \geq 2$) with $V_1 \la \epsilon^{V_1} \bb \epsilon^{V_2} \bb \ldots \bb \epsilon^{V_{n-1}} \bb \epsilon^{V_n} \ra V_n$. We now show that $\rho'$ is $Z$-open.

\begin{description}

\item[Case 1.1] If $B \in V$ is a triplex node in $\rho'$, then $\rho'$ must have one of the following subpaths:

\begin{table}[H]
\centering
\scalebox{1.0}{
\begin{tabular}{c}
\begin{tikzpicture}[inner sep=1mm]
\node at (0,0) (A) {$A$};
\node at (1,0) (B) {$B$};
\node at (2,0) (C) {$C$};
\path[->] (A) edge (B);
\path[<-] (B) edge (C);
\end{tikzpicture}
\begin{tikzpicture}[inner sep=1mm]
\node at (0,0) (A) {$A$};
\node at (1,0) (B) {$B$};
\node at (2,0) (C) {$\epsilon^B$};
\node at (3,0) (D) {$\epsilon^C$};
\path[->] (A) edge (B);
\path[<-] (B) edge (C);
\path[|-|] (D) edge (C);
\end{tikzpicture}
\begin{tikzpicture}[inner sep=1mm]
\node at (0,0) (A) {$\epsilon^B$};
\node at (1,0) (B) {$B$};
\node at (2,0) (C) {$C$};
\node at (-1,0) (D) {$\epsilon^A$};
\path[->] (A) edge (B);
\path[<-] (B) edge (C);
\path[|-|] (D) edge (A);
\end{tikzpicture}
\end{tabular}}
\end{table}

with $A, C \in V$. Therefore, $\rho$ must have one of the following subpaths (specifically, if $\rho'$ has the $i$-th subpath above, then $\rho$ has the $i$-th subpath below):

\begin{table}[H]
\centering
\scalebox{1.0}{
\begin{tabular}{c}
\begin{tikzpicture}[inner sep=1mm]
\node at (0,0) (A) {$A$};
\node at (1,0) (B) {$B$};
\node at (2,0) (C) {$C$};
\path[->] (A) edge (B);
\path[<-] (B) edge (C);
\end{tikzpicture}
\begin{tikzpicture}[inner sep=1mm]
\node at (0,0) (A) {$A$};
\node at (1,0) (B) {$B$};
\node at (2,0) (C) {$C$};
\path[->] (A) edge (B);
\path[|-|] (B) edge (C);
\end{tikzpicture}
\begin{tikzpicture}[inner sep=1mm]
\node at (0,0) (A) {$A$};
\node at (1,0) (B) {$B$};
\node at (2,0) (C) {$C$};
\path[|-|] (A) edge (B);
\path[<-] (B) edge (C);
\end{tikzpicture}
\end{tabular}}
\end{table}

In either case, $B$ is a triplex node in $\rho$ and, thus, $B \in Z \cup san_G(Z)$ for $\rho$ to be $Z$-open. Then, $B \in Z \cup san_{G'}(Z)$ by construction of $G'$ and, thus, $B \in D(Z) \cup san_{G'}(D(Z))$.

\item[Case 1.2] If $B \in V$ is a non-triplex node in $\rho'$, then $\rho'$ must have one of the following subpaths:

\begin{table}[H]
\centering
\scalebox{1.0}{
\begin{tabular}{c}
\begin{tikzpicture}[inner sep=1mm]
\node at (0,0) (A) {$A$};
\node at (1,0) (B) {$B$};
\node at (2,0) (C) {$C$};
\path[->] (A) edge (B);
\path[->] (B) edge (C);
\end{tikzpicture}
\begin{tikzpicture}[inner sep=1mm]
\node at (0,0) (A) {$A$};
\node at (1,0) (B) {$B$};
\node at (2,0) (C) {$C$};
\path[<-] (A) edge (B);
\path[->] (B) edge (C);
\end{tikzpicture}
\begin{tikzpicture}[inner sep=1mm]
\node at (0,0) (A) {$A$};
\node at (1,0) (B) {$B$};
\node at (2,0) (C) {$C$};
\path[<-] (A) edge (B);
\path[<-] (B) edge (C);
\end{tikzpicture}
\begin{tikzpicture}[inner sep=1mm]
\node at (0,0) (A) {$A$};
\node at (1,0) (B) {$B$};
\node at (2,0) (C) {$\epsilon^B$};
\node at (3,0) (D) {$\epsilon^C$};
\path[<-] (A) edge (B);
\path[<-] (B) edge (C);
\path[|-|] (D) edge (C);
\end{tikzpicture}
\begin{tikzpicture}[inner sep=1mm]
\node at (0,0) (A) {$\epsilon^B$};
\node at (1,0) (B) {$B$};
\node at (2,0) (C) {$C$};
\node at (-1,0) (D) {$\epsilon^A$};
\path[->] (A) edge (B);
\path[->] (B) edge (C);
\path[|-|] (D) edge (A);
\end{tikzpicture}
\end{tabular}}
\end{table}

with $A, C \in V$. Therefore, $\rho$ must have one of the following subpaths (specifically, if $\rho'$ has the $i$-th subpath above, then $\rho$ has the $i$-th subpath below):

\begin{table}[H]
\centering
\scalebox{1.0}{
\begin{tabular}{c}
\begin{tikzpicture}[inner sep=1mm]
\node at (0,0) (A) {$A$};
\node at (1,0) (B) {$B$};
\node at (2,0) (C) {$C$};
\path[->] (A) edge (B);
\path[->] (B) edge (C);
\end{tikzpicture}
\begin{tikzpicture}[inner sep=1mm]
\node at (0,0) (A) {$A$};
\node at (1,0) (B) {$B$};
\node at (2,0) (C) {$C$};
\path[<-] (A) edge (B);
\path[->] (B) edge (C);
\end{tikzpicture}
\begin{tikzpicture}[inner sep=1mm]
\node at (0,0) (A) {$A$};
\node at (1,0) (B) {$B$};
\node at (2,0) (C) {$C$};
\path[<-] (A) edge (B);
\path[<-] (B) edge (C);
\end{tikzpicture}
\begin{tikzpicture}[inner sep=1mm]
\node at (0,0) (A) {$A$};
\node at (1,0) (B) {$B$};
\node at (2,0) (C) {$C$};
\path[<-] (A) edge (B);
\path[|-|] (B) edge (C);
\end{tikzpicture}
\begin{tikzpicture}[inner sep=1mm]
\node at (0,0) (A) {$A$};
\node at (1,0) (B) {$B$};
\node at (2,0) (C) {$C$};
\path[|-|] (A) edge (B);
\path[->] (B) edge (C);
\end{tikzpicture}
\end{tabular}}
\end{table}

In either case, $B$ is a non-triplex node in $\rho$ and, thus, $B \notin Z$ for $\rho$ to be $Z$-open. Since $Z$ contains no error node, $Z$ cannot determine any node in $V$ that is not already in $Z$. Then, $B \notin D(Z)$.

\item[Case 1.3] If $\epsilon^B$ is a triplex node in $\rho'$, then $\rho'$ must have one of the following subpaths:

\begin{table}[H]
\centering
\scalebox{1.0}{
\begin{tabular}{c}
\begin{tikzpicture}[inner sep=1mm]
\node at (0,0) (A) {$\epsilon^A$};
\node at (1,0) (B) {$\epsilon^B$};
\node at (2,0) (C) {$\epsilon^C$};
\path[<->] (A) edge (B);
\path[|-|] (B) edge (C);
\end{tikzpicture}
\begin{tikzpicture}[inner sep=1mm]
\node at (0,0) (A) {$\epsilon^A$};
\node at (1,0) (B) {$\epsilon^B$};
\node at (2,0) (C) {$\epsilon^C$};
\path[-] (A) edge (B);
\path[<->] (B) edge (C);
\end{tikzpicture}
\end{tabular}}
\end{table}

Therefore, $\rho$ must have one of the following subpaths (specifically, if $\rho'$ has the $i$-th subpath above, then $\rho$ has the $i$-th subpath below):

\begin{table}[H]
\centering
\scalebox{1.0}{
\begin{tabular}{c}
\begin{tikzpicture}[inner sep=1mm]
\node at (0,0) (A) {$A$};
\node at (1,0) (B) {$B$};
\node at (2,0) (C) {$C$};
\path[<->] (A) edge (B);
\path[|-|] (B) edge (C);
\end{tikzpicture}
\begin{tikzpicture}[inner sep=1mm]
\node at (0,0) (A) {$A$};
\node at (1,0) (B) {$B$};
\node at (2,0) (C) {$C$};
\path[-] (A) edge (B);
\path[<->] (B) edge (C);
\end{tikzpicture}
\end{tabular}}
\end{table}

In either case, $B$ is a triplex node in $\rho$ and, thus, $B \in Z \cup san_G(Z)$ for $\rho$ to be $Z$-open. Then, $\epsilon^B \in Z \cup san_{G'}(Z)$ by construction of $G'$ and, thus, $\epsilon^B \in D(Z) \cup san_{G'}(D(Z))$.

\item[Case 1.4] If $\epsilon^B$ is a non-triplex node in $\rho'$, then $\rho'$ must have one of the following subpaths:

\begin{table}[H]
\centering
\scalebox{1.0}{
\begin{tabular}{c}
\begin{tikzpicture}[inner sep=1mm]
\node at (0,0) (A) {$A$};
\node at (1,0) (B) {$B$};
\node at (2,0) (C) {$\epsilon^B$};
\node at (3,0) (D) {$\epsilon^C$};
\path[->] (A) edge (B);
\path[<-] (B) edge (C);
\path[|-|] (D) edge (C);
\end{tikzpicture}
\begin{tikzpicture}[inner sep=1mm]
\node at (0,0) (A) {$\epsilon^B$};
\node at (1,0) (B) {$B$};
\node at (2,0) (C) {$C$};
\node at (-1,0) (D) {$\epsilon^A$};
\path[->] (A) edge (B);
\path[<-] (B) edge (C);
\path[|-|] (D) edge (A);
\end{tikzpicture}
\begin{tikzpicture}[inner sep=1mm]
\node at (0.65,0) (B) {$\alpha=B$};
\node at (2,0) (C) {$\epsilon^B$};
\node at (3,0) (D) {$\epsilon^C$};
\path[<-] (B) edge (C);
\path[|-|] (D) edge (C);
\end{tikzpicture}
\begin{tikzpicture}[inner sep=1mm]
\node at (0,0) (A) {$\epsilon^B$};
\node at (1.35,0) (B) {$B=\beta$};
\node at (-1,0) (D) {$\epsilon^A$};
\path[->] (A) edge (B);
\path[|-|] (D) edge (A);
\end{tikzpicture}\\
\begin{tikzpicture}[inner sep=1mm]
\node at (0,0) (A) {$A$};
\node at (1,0) (B) {$B$};
\node at (2,0) (C) {$\epsilon^B$};
\node at (3,0) (D) {$\epsilon^C$};
\path[<-] (A) edge (B);
\path[<-] (B) edge (C);
\path[|-|] (D) edge (C);
\end{tikzpicture}
\begin{tikzpicture}[inner sep=1mm]
\node at (0,0) (A) {$\epsilon^B$};
\node at (1,0) (B) {$B$};
\node at (2,0) (C) {$C$};
\node at (-1,0) (D) {$\epsilon^A$};
\path[->] (A) edge (B);
\path[->] (B) edge (C);
\path[|-|] (D) edge (A);
\end{tikzpicture}
\begin{tikzpicture}[inner sep=1mm]
\node at (0,0) (A) {$\epsilon^A$};
\node at (1,0) (B) {$\epsilon^B$};
\node at (2,0) (C) {$\epsilon^C$};
\path[-] (A) edge (B);
\path[-] (B) edge (C);
\end{tikzpicture}
\end{tabular}}
\end{table}

with $A, C \in V$. Recall that $\epsilon^B \notin Z$ because $Z \subseteq V \setminus \alpha \setminus \beta$. In the first case, if $\alpha=A$ then $A \notin Z$, else $A \notin Z$ for $\rho$ to be $Z$-open. Then, $\epsilon^B \notin D(Z)$. In the second case, if $\beta=C$ then $C \notin Z$, else $C \notin Z$ for $\rho$ to be $Z$-open. Then, $\epsilon^B \notin D(Z)$. In the third and fourth cases, $B \notin Z$ because $\alpha=B$ or $\beta=B$. Then, $\epsilon^B \notin D(Z)$. In the fifth and sixth cases, $B \notin Z$ for $\rho$ to be $Z$-open. Then, $\epsilon^B \notin D(Z)$. The last case implies that $\rho$ has the following subpath:

\begin{table}[H]
\centering
\scalebox{1.0}{
\begin{tabular}{c}
\begin{tikzpicture}[inner sep=1mm]
\node at (0,0) (A) {$A$};
\node at (1,0) (B) {$B$};
\node at (2,0) (C) {$C$};
\path[-] (A) edge (B);
\path[-] (B) edge (C);
\end{tikzpicture}
\end{tabular}}
\end{table}

Thus, $B$ is a non-triplex node in $\rho$, which implies that $B \notin Z$ or $pa_G(B) \setminus Z \neq \emptyset$ for $\rho$ to be $Z$-open. In either case, $\epsilon^B \notin D(Z)$ (recall that $pa_{G'}(B)=pa_G(B) \cup \epsilon^B$ by construction of $G'$).

\end{description}

Finally, let $\rho'$ denote a $Z$-open path between $\alpha$ and $\beta$ in $G'$. We can easily transform $\rho'$ into a path $\rho$ between $\alpha$ and $\beta$ in $G$: Simply, replace every maximal subpath of $\rho'$ of the form $V_1 \la \epsilon^{V_1} \bb \epsilon^{V_2} \bb \ldots \bb \epsilon^{V_{n-1}} \bb \epsilon^{V_n} \ra V_n$ ($n \geq 2$) with $V_1 \bb V_2 \bb \ldots \bb V_{n-1} \bb V_n$. We now show that $\rho$ is $Z$-open. Note that all the nodes in $\rho$ are in $V$. 

\begin{description}

\item[Case 2.1] If $B$ is a triplex node in $\rho$, then $\rho$ must have one of the following subpaths:

\begin{table}[H]
\centering
\scalebox{1.0}{
\begin{tabular}{c}
\begin{tikzpicture}[inner sep=1mm]
\node at (0,0) (A) {$A$};
\node at (1,0) (B) {$B$};
\node at (2,0) (C) {$C$};
\path[->] (A) edge (B);
\path[<-] (B) edge (C);
\end{tikzpicture}
\begin{tikzpicture}[inner sep=1mm]
\node at (0,0) (A) {$A$};
\node at (1,0) (B) {$B$};
\node at (2,0) (C) {$C$};
\path[->] (A) edge (B);
\path[|-|] (B) edge (C);
\end{tikzpicture}
\begin{tikzpicture}[inner sep=1mm]
\node at (0,0) (A) {$A$};
\node at (1,0) (B) {$B$};
\node at (2,0) (C) {$C$};
\path[|-|] (A) edge (B);
\path[<-] (B) edge (C);
\end{tikzpicture}
\begin{tikzpicture}[inner sep=1mm]
\node at (0,0) (A) {$A$};
\node at (1,0) (B) {$B$};
\node at (2,0) (C) {$C$};
\path[<->] (A) edge (B);
\path[|-|] (B) edge (C);
\end{tikzpicture}
\begin{tikzpicture}[inner sep=1mm]
\node at (0,0) (A) {$A$};
\node at (1,0) (B) {$B$};
\node at (2,0) (C) {$C$};
\path[-] (A) edge (B);
\path[<->] (B) edge (C);
\end{tikzpicture}
\end{tabular}}
\end{table}

with $A, C \in V$. Therefore, $\rho'$ must have one of the following subpaths (specifically, if $\rho$ has the $i$-th subpath above, then $\rho'$ has the $i$-th subpath below):

\begin{table}[H]
\centering
\scalebox{1.0}{
\begin{tabular}{c}
\begin{tikzpicture}[inner sep=1mm]
\node at (0,0) (A) {$A$};
\node at (1,0) (B) {$B$};
\node at (2,0) (C) {$C$};
\path[->] (A) edge (B);
\path[<-] (B) edge (C);
\end{tikzpicture}
\begin{tikzpicture}[inner sep=1mm]
\node at (0,0) (A) {$A$};
\node at (1,0) (B) {$B$};
\node at (2,0) (C) {$\epsilon^B$};
\node at (3,0) (D) {$\epsilon^C$};
\path[->] (A) edge (B);
\path[<-] (B) edge (C);
\path[|-|] (D) edge (C);
\end{tikzpicture}
\begin{tikzpicture}[inner sep=1mm]
\node at (0,0) (A) {$\epsilon^B$};
\node at (1,0) (B) {$B$};
\node at (2,0) (C) {$C$};
\node at (-1,0) (D) {$\epsilon^A$};
\path[->] (A) edge (B);
\path[<-] (B) edge (C);
\path[|-|] (D) edge (A);
\end{tikzpicture}
\begin{tikzpicture}[inner sep=1mm]
\node at (0,0) (A) {$\epsilon^B$};
\node at (1,0) (C) {$\epsilon^C$};
\node at (-1,0) (D) {$\epsilon^A$};
\path[|-|] (A) edge (C);
\path[<->] (D) edge (A);
\end{tikzpicture}
\begin{tikzpicture}[inner sep=1mm]
\node at (0,0) (A) {$\epsilon^B$};
\node at (1,0) (C) {$\epsilon^C$};
\node at (-1,0) (D) {$\epsilon^A$};
\path[<->] (A) edge (C);
\path[-] (D) edge (A);
\end{tikzpicture}
\end{tabular}}
\end{table}

In the first three cases, $B$ is a triplex node in $\rho'$ and, thus, $B \in D(Z) \cup san_{G'}(D(Z))$ for $\rho'$ to be $Z$-open. Since $Z$ contains no error node, $Z$ cannot determine any node in $V$ that is not already in $Z$. Then, $B \in D(Z)$ iff $B \in Z$. Since there is no strictly descending route from $B$ to any error node, then any strictly descending route from $B$ to a node $D \in D(Z)$ implies that $D \in V$ which, as seen, implies that $D \in Z$. Then, $B \in san_{G'}(D(Z))$ iff $B \in san_{G'}(Z)$. Moreover, $B \in san_{G'}(Z)$ iff $B \in san_{G}(Z)$ by construction of $G'$. These results together imply that $B \in Z \cup san_{G}(Z)$.

In the last two cases, $\epsilon^B$ is a triplex node in $\rho'$ and, thus, $B \in D(Z) \cup san_{G'}(D(Z))$ for $\rho'$ to be $Z$-open because $Z$ contains no error node. Therefore, as shown in the previous paragraph, $B \in Z \cup san_{G}(Z)$.

\item[Case 2.2] If $B$ is a non-triplex node in $\rho$, then $\rho$ must have one of the following subpaths:

\begin{table}[H]
\centering
\scalebox{1.0}{
\begin{tabular}{c}
\begin{tikzpicture}[inner sep=1mm]
\node at (0,0) (A) {$A$};
\node at (1,0) (B) {$B$};
\node at (2,0) (C) {$C$};
\path[->] (A) edge (B);
\path[->] (B) edge (C);
\end{tikzpicture}
\begin{tikzpicture}[inner sep=1mm]
\node at (0,0) (A) {$A$};
\node at (1,0) (B) {$B$};
\node at (2,0) (C) {$C$};
\path[<-] (A) edge (B);
\path[->] (B) edge (C);
\end{tikzpicture}
\begin{tikzpicture}[inner sep=1mm]
\node at (0,0) (A) {$A$};
\node at (1,0) (B) {$B$};
\node at (2,0) (C) {$C$};
\path[<-] (A) edge (B);
\path[<-] (B) edge (C);
\end{tikzpicture}
\begin{tikzpicture}[inner sep=1mm]
\node at (0,0) (A) {$A$};
\node at (1,0) (B) {$B$};
\node at (2,0) (C) {$C$};
\path[<-] (A) edge (B);
\path[|-|] (B) edge (C);
\end{tikzpicture}
\begin{tikzpicture}[inner sep=1mm]
\node at (0,0) (A) {$A$};
\node at (1,0) (B) {$B$};
\node at (2,0) (C) {$C$};
\path[|-|] (A) edge (B);
\path[->] (B) edge (C);
\end{tikzpicture}
\begin{tikzpicture}[inner sep=1mm]
\node at (0,0) (A) {$A$};
\node at (1,0) (B) {$B$};
\node at (2,0) (C) {$C$};
\path[-] (A) edge (B);
\path[-] (B) edge (C);
\end{tikzpicture}
\end{tabular}}
\end{table}

with $A, C \in V$. Therefore, $\rho'$ must have one of the following subpaths (specifically, if $\rho$ has the $i$-th subpath above, then $\rho'$ has the $i$-th subpath below):

\begin{table}[H]
\centering
\scalebox{1.0}{
\begin{tabular}{c}
\begin{tikzpicture}[inner sep=1mm]
\node at (0,0) (A) {$A$};
\node at (1,0) (B) {$B$};
\node at (2,0) (C) {$C$};
\path[->] (A) edge (B);
\path[->] (B) edge (C);
\end{tikzpicture}
\begin{tikzpicture}[inner sep=1mm]
\node at (0,0) (A) {$A$};
\node at (1,0) (B) {$B$};
\node at (2,0) (C) {$C$};
\path[<-] (A) edge (B);
\path[->] (B) edge (C);
\end{tikzpicture}
\begin{tikzpicture}[inner sep=1mm]
\node at (0,0) (A) {$A$};
\node at (1,0) (B) {$B$};
\node at (2,0) (C) {$C$};
\path[<-] (A) edge (B);
\path[<-] (B) edge (C);
\end{tikzpicture}
\begin{tikzpicture}[inner sep=1mm]
\node at (0,0) (A) {$A$};
\node at (1,0) (B) {$B$};
\node at (2,0) (C) {$\epsilon^B$};
\node at (3,0) (D) {$\epsilon^C$};
\path[<-] (A) edge (B);
\path[<-] (B) edge (C);
\path[|-|] (D) edge (C);
\end{tikzpicture}
\begin{tikzpicture}[inner sep=1mm]
\node at (0,0) (A) {$\epsilon^B$};
\node at (1,0) (B) {$B$};
\node at (2,0) (C) {$C$};
\node at (-1,0) (D) {$\epsilon^A$};
\path[->] (A) edge (B);
\path[->] (B) edge (C);
\path[|-|] (D) edge (A);
\end{tikzpicture}\\
\begin{tikzpicture}[inner sep=1mm]
\node at (0,0) (A) {$\epsilon^A$};
\node at (1,0) (B) {$\epsilon^B$};
\node at (2,0) (C) {$\epsilon^C$};
\path[-] (A) edge (B);
\path[-] (B) edge (C);
\end{tikzpicture}
\end{tabular}}
\end{table}

In the first five cases, $B$ is a non-triplex node in $\rho'$ and, thus, $B \notin D(Z)$ for $\rho'$ to be $Z$-open. Since $Z$ contains no error node, $Z$ cannot determine any node in $V$ that is not already in $Z$. Then, $B \notin Z$. In the last case, $\epsilon^B$ is a non-triplex node in $\rho'$ and, thus, $\epsilon^B \notin D(Z)$ for $\rho'$ to be $Z$-open. Then, $B \notin Z$ or $pa_{G'}(B) \setminus \epsilon^B \setminus Z \ \neq \emptyset$. Then, $B \notin Z$ or $pa_{G}(B) \setminus Z \ \neq \emptyset$ (recall that $pa_{G'}(B)=pa_G(B) \cup \epsilon^B$ by construction of $G'$).

\end{description}

\end{proof}

\begin{proof}[{\bf Proof of Theorem \ref{the:closed}}]

We find it easier to prove the theorem by defining separation in MAMP CGs in terms of routes rather than paths. A node $B$ in a route $\rho$ in a MAMP CG $G$ is called a triplex node in $\rho$ if $A \oa B \ao C$, $A \oa B - C$, or $A - B \ao C$ is a subroute of $\rho$ (note that maybe $A=C$ in the first case). Note that $B$ may be both a triplex and a non-triplex node in $\rho$. Moreover, $\rho$ is said to be $Z$-open with $Z \subseteq V$ when 

\begin{itemize}
\item every triplex node in $\rho$ is in $D(Z)$, and

\item no non-triplex node in $\rho$ is in $D(Z)$.
\end{itemize}

When there is no $Z$-open route in $G$ between a node in $X$ and a node in $Y$, we say that $X$ is separated from $Y$ given $Z$ in $G$ and denote it as $X \ci_G Y | Z$. It is straightforward to see that this and the original definition of separation in MAMP CGs introduced in Section \ref{sec:mampcgs} are equivalent, in the sense that they identify the same separations in $G$.

We prove the theorem for the case where $L$ contains a single node $B$. The general case follows by induction. Specifically, given $\alpha, \beta \in V \setminus L$ and $Z \subseteq V \setminus L \setminus \alpha \setminus \beta$, we show below that every $Z$-open route between $\alpha$ and $\beta$ in $[G']_L$ can be transformed into a $Z$-open route between $\alpha$ and $\beta$ in $G'$ and vice versa.

First, let $\rho$ denote a $Z$-open route between $\alpha$ and $\beta$ in $[G']_L$. We can easily transform $\rho$ into a $Z$-open route between $\alpha$ and $\beta$ in $G'$: For each edge $A \ra C$ or $A \la C$ with $A, C \in V \cup \epsilon$ that is in $[G']_L$ but not in $G'$, replace each of its occurrence in $\rho$ with $A \ra B \ra C$ or $A \la B \la C$, respectively. Note that $B \notin D(Z)$ because $B, \epsilon^B \notin Z$.

Second, let $\rho$ denote a $Z$-open route between $\alpha$ and $\beta$ in $G'$. Note that $B$ cannot participate in any undirected or bidirected edge in $G'$, because $B \in V$. Note also that $B$ cannot be a triplex node in $\rho$, because $B \notin D(Z)$ since $B, \epsilon^B \notin Z$. Note also that $B \neq \alpha, \beta$. Then, $B$ can only appear in $\rho$ in the following configurations: $A \ra B \ra C$, $A \la B \la C$, or $A \la B \ra C$ with $A, C \in V \cup \epsilon$. Then, we can easily transform $\rho$ into a $Z$-open route between $\alpha$ and $\beta$ in $[G']_L$: Replace each occurrence of $A \ra B \ra C$ in $\rho$ with $A \ra C$, each occurrence of $A \la B \la C$ in $\rho$ with $A \la C$, and each occurrence of $A \la B \ra C$ in $\rho$ with $A \la \epsilon^B \ra C$. In the last case, note that $\epsilon^B \notin D(Z)$ because $B, \epsilon^B \notin Z$.

\end{proof}

\end{document}